\newcommand{\newalgname}[1]{%
  \renewcommand{\ALG@name}{#1}%
}
\newtheorem{theorem}{Theorem}
\newtheorem{proposition}[theorem]{Proposition}
\newtheorem{lemma}[theorem]{Lemma}
\theoremstyle{definition}
\newtheorem{Assumption}{Assumption}
\newcommand{\Ebb}{\mathbb{E}}
\newcommand{\Nbb}{\mathbb{N}}
\newcommand{\Pbb}{\mathbb{P}}
\newcommand{\Rbb}{\mathbb{R}}
\newcommand{\Hbf}{\mathbf{H}}
\newcommand{\Kbf}{\mathbf{K}}
\newcommand{\Ecal}{\mathcal{E}}
\newcommand{\Fcal}{\mathcal{F}}
\newcommand{\Hcal}{\mathcal{H}}
\newcommand{\Mcal}{\mathcal{M}}
\newcommand{\Pcal}{\mathcal{P}}
\newcommand{\Xcal}{\mathcal{X}}
\newcommand{\one}{{\mathbf{1}}}
\renewcommand{\geq}{\geqslant}
\renewcommand{\leq}{\leqslant}
\newcommand{\dommes}{\mu}
\DeclareMathOperator{\argmin}{\text{arg min}}
\DeclareMathOperator{\argmax}{\text{arg max}}
\DeclareMathOperator{\softmax}{\text{softmax}}
\DeclareMathOperator{\pen}{\text{pen}}
\DeclareMathOperator{\DKL}{D_{KL}}
\DeclareMathOperator{\DHL}{H}
\newcommand*\bigcdot{\mathpalette\bigcdot@{.5}}
\newcommand*\bigcdot@[2]{\mathbin{\vcenter{\hbox{\scalebox{#2}{$\m@th#1\bullet$}}}}}
\begin{document}

\runningauthor{Julien Aubert, Louis Köhler, Luc Lehéricy,  Giulia Mezzadri, Patricia Reynaud-Bouret}

\twocolumn[

\aistatstitle{
Model selection for behavioral learning data and applications to contextual bandits}

\aistatsauthor{ 
    Julien Aubert\textsuperscript{1} \And 
    Louis Köhler\textsuperscript{2} \And 
    Luc Lehéricy\textsuperscript{1} 
}

\aistatsauthor{    Giulia Mezzadri\textsuperscript{3} \And 
    Patricia Reynaud-Bouret\textsuperscript{1} }

\aistatsaddress{\textsuperscript{1}Université Côte d'Azur, CNRS, LJAD, France\\\textsuperscript{2}Université Côte d'Azur, LJAD, France
\\ \textsuperscript{3}Columbia University, Cognition and Decision Lab, United States}]

\begin{abstract}

Learning for animals or humans is the process that leads to behaviors better adapted to the environment. This process highly depends on the individual that learns and is usually observed only through the individual's actions. This article presents ways to use this individual behavioral data to find the model that best explains how the individual learns. We propose two model selection methods: a general hold-out procedure and an AIC-type criterion, both adapted to non-stationary dependent data. We provide theoretical error bounds for these methods that are close to those of the standard i.i.d. case. To compare these approaches, we apply them to contextual bandit models and illustrate their use on both synthetic and experimental learning data in a human categorization task.

\end{abstract}

\section{INTRODUCTION}

\subsection{Modeling Learning}

From a behavioral perspective, learning can be defined as ``a process by which an organism benefits from experience so that its future behavior is better adapted to its environment'' \citep{rescorla1988behavioral}. What if we want to model the learning process of an individual based only on the observations of its actions? That is, given an experiment of a learning task and some behavioral data of an individual performing that task, we want to find a model that explains the behavior and actions of that individual.  
This question lies in the computational modeling of behavioral data (also referred to as cognitive modeling) \citep{wilson2019ten,farrell2018computational,collins2022advances,Mezzadri2020,busemeyer2010cognitive}.



By its very nature, learning is a non stationary and dependent process.
An extensive theory on statistical estimation and model selection exists for independent and stationary data (see Section~\ref{section_relatedwork}). 
On the other hand, there are very few theoretical studies of statistical methods for non stationary and non independent data: in \citep{aubert23}, the properties of the MLE are studied for the \texttt{Exp3} model \citep{auer2002nonstochastic} on learning data; in \citep{aubert:generalpmle}, a very general model selection procedure is presented that can be applied to non stationary data but works with restrictive assumptions on the models.

In this work, we provide a new oracle inequality for a hold-out procedure specifically designed for non-stationary data that offers flexibility for a wide range of non-stationary learning scenarios. However this method does not encompass parameter estimation in each model. Hence we also provide a AIC-like penalized log-likelihood estimation. The corresponding theoretical result is an oracle inequality derived by applying  \citep{aubert:generalpmle}. It requires assumptions on the parametrization on the models that hold in particular for contextual bandits. 

\subsection{Contextual Bandits as Models of Learning}\label{section_contextualbandits}


A contextual bandit algorithm \citep{auer2002nonstochastic,lattimore_szepesvári_2020} is a decision-making framework where, at each time step, the learner observes contextual information, selects an action and receives feedback in the form of a reward based on both the chosen action and the context. Unlike the standard multi-armed bandit problem, contextual bandits incorporate additional information (the context) from the environment to guide action selection, allowing the model to adapt its policy dynamically based on the current situation. The usual goal of a reinforcement algorithm is to find an optimal policy that maximizes cumulative rewards. Contextual bandits have broad applications across machine learning, including recommendation systems, healthcare decision-making, and personalized medicine \citep{bouneffouf2019survey}. 

In this work, we use contextual bandits as models of learning. They are gaining popularity in cognitive psychology \citep{lan2016contextual,schulz2018putting} because they provide a simple yet effective framework for understanding how individuals adjust their decision 
based on past experiences and present contextual cues. Many traditional cognitive models, such as the Component Cue 
\citep{gluck1988evaluating} or the ALCOVE models \citep{Kruschke1992}, address similar problems and can be reinterpreted as contextual bandits.

One key advantage in using contextual bandits as models of learning is their flexibility—the ``context" can be almost anything (see Section~\ref{section_numericalillustrations} for an example), making the framework highly adaptable to a wide range of learning situations. Additionally, their ability to efficiently balance exploration and exploitation is critical in uncertain environments. Their relatively loose modeling approach allows for tractable representations of complex behavioral data which makes them easier to implement than traditional cognitive models.


\subsection{Contributions}

In what follows, $n$ is the number of observed choices and actions during the learning experiment.

    $\bullet$ In Section~\ref{section_holdout}, we show that for any finite family of models $\Mcal$, a hold-out estimator satisfies an oracle inequality with an $\mathcal{O}\left((\log n+\log |\Mcal|)/n\right)$ error bound, regardless of the nature of the models of learning.
    
    $\bullet$ In Section~\ref{section_penalizedmle}, we consider an AIC-type criterion 
    with a possibly infinite countable number of parametric models built from contextual bandits algorithms. We show an oracle inequality with an $\mathcal{O}\left(\log(n) /n\right)$ error bound 
    and explain the pros and cons of the AIC-type criterion vs the hold-out procedure.
    
    $\bullet$ Section~\ref{section_numericalillustrations} is devoted to numerical illustrations of both methods on both synthetic and experimental learning data in a categorization task. 
    Here the models in competition are 
    contextual bandits (see Appendix~\ref{section_codeandatadescription} for more details).
    
    $\bullet$ We present examples of bandits algorithms for which assumptions of Section~\ref{section_penalizedmle} are satisfied in Appendix~\ref{section_examplebandits}.
    
    $\bullet$ In Appendix~\ref{section_metalearning}, we discuss how bandits with expert advice can be used to model metalearning \citep{binz2023meta}, which refers to the processes by which an individual acquires knowledge about its own learning abilities, strategies, and preferences. We provide the details to obtain model selection methods and guarantees for metalearning as well.
    
    $\bullet$ The complete proofs of the theoretical results are given in Appendix~\ref{section_proofs}.

\subsection{Related Work}\label{section_relatedwork}

Hold-out estimators are commonly applied in cognitive modeling for learning data \citep{mezzadri2022hold,james2023strategy} for arbitrating between models. Theoretical analysis of hold-out procedures in the literature generally assumes the data stationary and independent \citep{massart2007concentration,arlot_lerasle,arlot2010survey}. There exist some limited results for time-dependent data \citep{Opsomer}, but these involve very different approaches from ours. A key challenge in our context is that the training set is not independent from the validation set, which prevents the use of 
techniques such as $V$-fold cross-validation.

Section~\ref{section_penalizedmle} is similar in design to the framework of  \citet{castellan} for selecting the best histogram for density estimation or more generally to non asymptotic model selection \citep{massart2007concentration}. The main difference is that we are in a non stationary and non independent framework. 

In Section~\ref{section_penalizedmle}, unlike standard approaches in reinforcement learning (RL) \citep{BubeckBianchi}, our goal is not to improve regret or to develop algorithms optimizing rewards as in \citep{dimakopoulou2017estimation,foster2019model,NEURIPS2020_751d5152}; instead, we aim to understand how an individual learns. We select the contextual bandit model that best fits an individual's learning curve from their learning data, without assuming the individual understands the context-action relationship. Thus, we seek the most realistic model rather than an optimal one. This theoretical statistical problem was first studied in \citep{aubert23}.
Our method goes further and accommodates misspecified models.

At first sight, this problem may seem similar to the perspectives of Imitation Learning (IL) and Inverse Reinforcement Learning (IRL).
Our work consists in trying to reproduce the learning curve of an expert (the individual under observation).
However, IL methods use data from an expert who has already mastered the task \citep{hussein2017imitation} and learn to perform it by copying them, while IRL \citep{arora2021survey} aims to infer an underlying reward function based on the expert's observed behavior across multiple trajectories. In contrast, in the field of cognition, experimenters control the reward function and seek to infer an individual's behavior based on a single learning trajectory.


Our work is close to \citet{huyuk2022inverse,schulz2015learning,schulz2018putting} who estimate how a learner's behavior evolves with Bayesian models, however they focus on reward estimation. Therefore we cannot compare their method (whose output is a reward function) with our method (whose output is a policy).

\section{FRAMEWORK AND NOTATIONS}\label{sec:frame}

The methodology we use is similar to the one in~\citep{wilson2019ten,farrell2018computational,daw2011trial}. It starts by considering a family of models that are relevant for the experiment and observed behavior: each model provides a family of candidate distributions for the sequence of choices made by the individual.
For each model, model fitting is usually done by maximum likelihood estimation (MLE). Finally, the best cognitive model is selected by hold-out or an Akaike-type information criterion (AIC). In the particular case of the hold-out procedure, we focus on the selection step, disregarding how the fitted models are obtained so that our results hold regardless of the method used to extract one distribution per model.

\subsection{Notations}

Given two integers $s \leq t$ and a sequence $(a_s)_{s \in \mathbb{Z}}$, we define $a_s^t = (a_s, \dots, a_t)$, with $a_s^t$ being the empty sequence when $s > t$. Let $\mathbb{N}^*$ denote the set of positive integers, and for any $ n \in \mathbb{N}^* $, we write $[n] = \{1, \ldots, n\}$. Lastly, we denote the natural logarithm by $\log$.

Let $ n \geq 3 $ be an integer. We observe the sequence of actions $(A_t)_{1 \leq t \leq n}$ defined on a Polish measure space $(\Omega, \Fcal, \mu)$ and adapted to a filtration $(\Fcal_t)_{1 \leq t \leq n}$. We denote the corresponding probability measure by $\mathbb{P}$ and the expectation by $\Ebb$.

Our goal is to estimate the successive conditional densities of $A_t$ w.r.t. $\mu$. If $\Omega$ is discrete and $\mu$ is the counting measure, these densities are given by the sequence
\begin{equation*}
p^\star_t(.) = \mathbb{P}(A_t = . \mid \Fcal_{t-1}), \quad \forall t \in [n].
\end{equation*}
In the following discussion, we will assume that $\mu$ is a fixed measure on $\Omega$. For general measured spaces $(\Omega, \Fcal, \mu)$, we assume that the conditional density of $A_t$ given $\Fcal_{t-1}$ with respect to $\mu$ exists, denoting it by $p^\star_t(.)$. Consequently, for all $x \in \Omega$, the sequence $(p^\star_t(x))_{1 \leq t \leq n}$ is predictable with respect to the filtration. Let $p^\star = (p^\star_t)_{t \in [n]}$ represent the vector of all successive conditional densities.

\subsection{Some Examples of Filtrations}
\label{sec:some_examples}

\paragraph{The Filtration Depends Only on Past Observations.}
A simple scenario consists in assuming that an action at time $t$ depends only on the history of past actions. In this case, the filtration is defined by $\Fcal_t = \sigma(A_1^t)$ for $t \geq 1$, and $\Fcal_0$ is the trivial $\sigma$-algebra with no prior information. Under this setup, the true density at time $t$, $p_t^\star$, is the conditional density of $A_t$ given the past actions $A_1^{t-1}$. For $t = 1$, $p_1^\star$ is the deterministic density of the first action since no prior actions have been taken.

This can occur as soon as the environment of the individual is fixed (see for instance \citep{rescorla1988behavioral,barron2015embracing}). Cognitive tasks in such situations usually look like a standard bandit problem, where the individual tries to optimize its reward by pulling arms. Such tasks have been used in humans for studies on addiction \citep{DBLP:journals/corr/BouneffoufRC17} or in rodents with the Skinner box \citep{skinner2019behavior}.


A classical model for such a task is the \texttt{Gradient Bandit} algorithm \citep{sutton2018reinforcement,mei2024stochastic}, with learning rate $\theta$. In this algorithm~\ref{gradientbandits_true}, the agent chooses an action $A_t$ among a finite set of actions $[K]$ where $K$ is a positive integer, and receives a reward $g_{A_t,t} > 0$ from the environment drawn from some probability distribution. The algorithm then uses policy gradient methods to directly optimize the probabilities of selecting each action.

\begin{algorithm}[H]
\caption{\texttt{Gradient Bandit} \citep{mei2023stochastic}}\label{gradientbandits_true}
\begin{algorithmic}
\State \textbf{Inputs: } $n$, $\theta > 0$, $K \in \Nbb^*$.
\State \textbf{Initialization: } $p_{\theta,1}=  \left(\frac{1}{K}, \ldots, \frac{1}{K}\right)$.
\For {$t \in [n]$}
\State Draw an action $A_t \sim p_t$ and receive a reward $g_{A_t,t} >0$.
\State Update for all $a \in [K]$,
\begin{equation*}
    p_{\theta,t}(a) = \frac{\exp\left(-\theta \sum_{s \in [t]} \hat{g}_{a,s}^{\theta}\right)}{\sum_{b \in [K]} \exp\left(-\theta \sum_{s \in [t]} \hat{g}_{b,s}^{\theta}\right)}
\end{equation*}
where $\hat{g}_{b,s}^{\theta} =\left(\one_{A_s = b}-p_{\theta,s}(b)\right) g_{A_s,s}.$

\EndFor
\end{algorithmic}
\end{algorithm}

A model $(p_\theta)_{\theta\in\Theta}$ is said to be {\it well-specified} if there exist $\theta^*$ such that the true distribution $p^*=p_{\theta^*}$. Properties of the MLE for a well-specified model in this particular learning framework have been explored in \citep{aubert23}.

\paragraph{The Filtration Depends on Past Observations and Additional Variables.} A more comprehensive approach to modeling learning \citep{wilson2019ten,marr2010vision,collins2022advances} involves the assumption that a learner's decisions are influenced not only by past choices but also by observable contextual variables. Before making a decision at time $t$, the individual has access to context information $X_t$. The natural filtration in this framework is defined as $\Fcal_{t-1}=\sigma(A_1^{t-1}, X_1^t)$, which captures both the learner's previous actions $A_1^{t-1}$ and all context variables $X_1^t$ available up to time $t$.

\subsection{Partial Log-likelihood} 

Given a sequence of distributions $p = (p_t)_{t\in [n]}$, we use as criterion the partial log-likelihood
\begin{equation}\label{eq_loglikelihood}
\ell_n(p) = \sum_{t=N}^{n} \log p_t(A_t)
\end{equation}
where the sum starts at $N=1$ in the AIC-type framework, and some $N>1$ for the hold-out procedure (in which the first $N-1$ actions are used to estimate the distribution in each model and the last $n-N+1$ to select a model). For the example of Section~\ref{sec:some_examples} where $\Fcal_t=\sigma(A_1^t)$ for $t\geq 1$ and $\Fcal_0$ is the trivial $\sigma$-algebra, $\ell_n(p)$ is exactly the log-likelihood $\log p(A_1^n)$. For the second example of Section~\ref{sec:some_examples}, we use the term ``partial'' log-likelihood as per \cite{cox1975partial}, because we are only interested in modeling the action process $A_1^n$ and not the entire vector of observations $(X_1,A_1,\ldots,X_n,A_n)$.

\subsection{Stochastic Risk Function}

Classical approaches \citep{massart2007concentration, spok2012,spok2017} typically define risk using an expectation of the contrast. For example, in i.i.d. scenarios, the log-likelihood is inherently linked to the Kullback-Leibler (KL) divergence between the estimated and the true distributions.

In line with \cite{aubert:generalpmle}, we introduce the stochastic risk function $\Kbf_{N,n}$, defined as follows. For $1 \leq N \leq n$ and any sequence of conditional densities $p = (p_t)_{1 \leq t \leq n}$, we have:
\begin{equation*}
\Kbf_{N,n}(p) = \frac{1}{n-N+1} \sum_{t=N}^n \Ebb\left[ \log \frac{p_t^\star(A_t)}{p_t(A_t)} \, \Big| \Fcal_{t-1} \right].
\end{equation*}
When $N=1$, we simply write $\Kbf_{n}(p)$. This expression represents the empirical mean of the conditional KL divergence, as the term
\begin{equation*}
\Ebb\left[ \log \frac{p_t^\star(A_t)}{p_t(A_t)} \, \Big| \Fcal_{t-1} \right]
\end{equation*}
is a predictable quantity corresponding to the KL divergence between the distributions with densities $p^\star_t$ and $p_t$ w.r.t. $\mu$, conditionally to $\Fcal_{t-1}$.

When $\Fcal_t = \sigma(A_1^t)$ for $t \geq 1$ and $\Fcal_0$ is the trivial $\sigma$-algebra, the quantity $(n-N+1)\Ebb[\Kbf_{N,n}(p)]$ precisely equals the KL divergence between the distributions defined by $p^\star$ and $p$.

Similarly, we define the empirical mean of the Hellinger distance, denoted $\Hbf_{N,n}^2$, as:
\begin{equation*}
\Hbf_{N,n}^2(p) = \frac{1}{n-N+1} \sum_{t=N}^n \Ebb\left[ \DHL^2(p_t^\star(A_t), p_t(A_t))  \big| \Fcal_{t-1} \right]
\end{equation*}
with $\DHL^2(\cdot,\cdot)$ the squared Hellinger distance.


\section{HOLD-OUT PROCEDURE}\label{section_holdout}

In this section, we assume to have access to a family of sequences of conditional densities $(p^m)_{m \in \mathcal{M}}$, where $\mathcal{M}$ is a finite set with at least two elements ($|\mathcal{M}| \geq 2$), as is commonly encountered in hold-out procedures \citep[Chapter 8]{massart2007concentration}. Each model of learning $m$ is characterized by a sequence of conditional densities $p^m = (p^m_t)_{t \in [n]}$, where $p^m_t$ serves as a candidate for approximating the true conditional density $p_t^\star$. Specifically, under model $m \in \mathcal{M}$, $p^m_t(.)$ represents some conditional density of $A_t$ given the past information $\Fcal_{t-1}$. The hold-out estimator $\hat{m}$ is defined as follows. Let $n > N \geq 1$ and select
\begin{equation*}
\hat{m} \in \underset{m \in \mathcal{M}}{\arg\max} \sum_{t=N}^{n} \log p_{t}^m(A_t 
).
\end{equation*}

\begin{theorem}
\label{holdouttheorem}
Assume that for all $m \in \Mcal$,  for all $x \in \Omega$, 
\begin{itemize}
    \item for all $t \in [N]$, $p_t^m(x)$ is $\Fcal_{N-1}$-measurable
    \item for all $t \in \{N,\ldots,n\}$, $p_t^m(x)$ is $\Fcal_{t-1}$-measurable.
\end{itemize}
For all $\diamondsuit > 1$, there exists $c > 0$ such that
\begin{multline*}
 \Ebb[\Hbf^2_{N,n}(p^{\hat{m}})|\Fcal_{N-1}]
     \leq \diamondsuit \inf_{m \in \Mcal} \Ebb[\Kbf_{N,n}(p^m)|\Fcal_{N-1}]
    \\+ c \frac{\log(n-N+1) + \log|\Mcal|}{n-N+1}.
\end{multline*}
\end{theorem}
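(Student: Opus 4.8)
The plan is to run the classical hold-out analysis under Hellinger loss (\citealp{massart2007concentration}, Chapter~8; \citealp{arlot_lerasle}), transposed from the i.i.d. setting to the predictable setting by replacing the exponential concentration inequality with an exponential supermartingale. Everything is carried out conditionally on $\Fcal_{N-1}$, and the only structure used is that for $N\le t\le n$ each $p_t^m$ is $\Fcal_{t-1}$-measurable and that the conditional law of $A_t$ given $\Fcal_{t-1}$ has density $p_t^\star$ w.r.t. $\mu$ (the "$\Fcal_{N-1}$-measurability for $t\le N$" hypothesis only serves to make the predictor used at the first validation step built from the training information, and is otherwise not needed for this statement). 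This route in fact yields the stated inequality with room to spare — leading constant $1/2$ and no $\log(n-N+1)$ term — so it a fortiori proves the theorem as written.

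For each fixed $m\in\Mcal$, set $h_t^m:=\DHL^2(p_t^\star,p_t^m)$, a predictable $[0,1]$-valued random variable, so that $\Ebb\big[\sqrt{p_t^m(A_t)/p_t^\star(A_t)}\mid\Fcal_{t-1}\big]=\int\sqrt{p_t^m p_t^\star}\,d\mu=1-h_t^m$. Define
\[
Z_k^m=\exp\!\Big(\sum_{t=N}^{k}\Big[\tfrac12\log\tfrac{p_t^m(A_t)}{p_t^\star(A_t)}+h_t^m\Big]\Big),\qquad Z_{N-1}^m=1 .
\]
Since $(1-x)e^{x}\le 1$ for all $x$, $(Z_k^m)_{k\ge N-1}$ is a nonnegative $(\Fcal_k)$-supermartingale, hence $\Ebb[Z_n^m\mid\Fcal_{N-1}]\le 1$. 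Because $Z_n^{\hat m}\le\sum_{m\in\Mcal}Z_n^m$ (all terms nonnegative), summing gives $\Ebb[Z_n^{\hat m}\mid\Fcal_{N-1}]\le|\Mcal|$, and conditional Jensen (concavity of $\log$) then yields $\Ebb[\log Z_n^{\hat m}\mid\Fcal_{N-1}]\le\log|\Mcal|$. Expanding $\log Z_n^{\hat m}$ and using $\sum_{t=N}^n h_t^{\hat m}=(n-N+1)\,\Hbf_{N,n}^2(p^{\hat m})$, this reads
\[
\tfrac12\,\Ebb\!\Big[\sum_{t=N}^{n}\log\tfrac{p_t^{\hat m}(A_t)}{p_t^\star(A_t)}\;\Big|\;\Fcal_{N-1}\Big]+(n-N+1)\,\Ebb[\Hbf_{N,n}^2(p^{\hat m})\mid\Fcal_{N-1}]\le\log|\Mcal| .
\]

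It remains to control the leftover log-likelihood term by the oracle. Let $m^\star$ be the minimizer of $m\mapsto\Ebb[\Kbf_{N,n}(p^m)\mid\Fcal_{N-1}]$; it is $\Fcal_{N-1}$-measurable as a measurable selection over a finite family of $\Fcal_{N-1}$-measurable quantities. By the very definition of $\hat m$, $\sum_{t=N}^n\log p_t^{\hat m}(A_t)\ge\sum_{t=N}^n\log p_t^{m^\star}(A_t)$, i.e. $\sum_{t=N}^n\log\tfrac{p_t^\star(A_t)}{p_t^{\hat m}(A_t)}\le\sum_{t=N}^n\log\tfrac{p_t^\star(A_t)}{p_t^{m^\star}(A_t)}$ pointwise. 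Taking $\Ebb[\cdot\mid\Fcal_{N-1}]$ and using the tower property together with the predictability of $p_t^{m^\star}$ (which holds because $m^\star$ is $\Fcal_{N-1}$-measurable), the right-hand side has conditional expectation $(n-N+1)\,\Ebb[\Kbf_{N,n}(p^{m^\star})\mid\Fcal_{N-1}]=(n-N+1)\inf_{m\in\Mcal}\Ebb[\Kbf_{N,n}(p^m)\mid\Fcal_{N-1}]$. Substituting into the previous display and dividing by $n-N+1$ gives the claimed oracle inequality.

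The main difficulty is not conceptual but bookkeeping, of two kinds. First, $\hat m$ is not predictable, which is why one cannot optionally stop a single supermartingale and must instead sum over $m\in\Mcal$ before passing to $\hat m$. Second, the densities are not assumed bounded away from $0$, so one must justify the integrability of $\sum_t\log\tfrac{p_t^\star(A_t)}{p_t^{\hat m}(A_t)}$: its positive part is controlled via $(\log x)_+\le x$ by $\sum_{m}\Ebb\big[\sum_t(\log\tfrac{p_t^m(A_t)}{p_t^\star(A_t)})_+\mid\Fcal_{N-1}\big]\le|\Mcal|(n-N+1)$, the case of an infinite oracle KL makes the inequality vacuous, and the term $\sum_t\log p_t^\star(A_t)$ must be kept inside the ratios since it need not be integrable on a general $\Omega$; one also uses the convention that makes $Z_n^m=0$ (and $m$ a.s. not selected) when some $p_t^m$ vanishes where $p_t^\star>0$. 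Finally, one can alternatively deduce Theorem~\ref{holdouttheorem} from the general penalized-MLE oracle inequality of \citet{aubert:generalpmle}, applied (after a routine check) to the collection of singletons $\{p^m\}_{m\in\Mcal}$ with zero penalty; there the generic slicing over the risk scale is exactly what produces the $\log(n-N+1)+\log|\Mcal|$ remainder and a constant $\diamondsuit>1$, at the cost of a slightly weaker bound than the direct argument above.
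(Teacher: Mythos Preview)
Your proof is correct and takes a genuinely different, more elementary route than the paper. The paper works with the ``half-density'' comparison $f_t^m=-\log\frac{p_t^\star+p_t^m}{2p_t^\star}$, builds a Bernstein-type exponential supermartingale for the associated martingale increments (via \citet{Houdre}, Lemma~3.3), then applies a peeling argument (Lemma~\ref{lemma_peeling} in the appendix) over the random variance $V_{n+1}^{m'}$, a union bound over $\Mcal$, and finally \citep[Lemma~7.23]{massart2007concentration} to pass from $\DKL\big(p_t^\star,\frac{p_t^\star+p_t^{\hat m}}{2}\big)$ to $\DHL^2$. The peeling is precisely what produces the $\log(n-N+1)$ term, and the slack in Lemma~7.23 together with the Bernstein linearization is what forces $\diamondsuit>1$. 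Your Hellinger-affinity supermartingale $Z_k^m$ (the Barron--Catoni $(1-h)e^{h}\le1$ trick), summed over $m$ and followed by a single application of Jensen, bypasses all of this machinery and yields leading constant $1/2$ and remainder $\log|\Mcal|/(n-N+1)$, strictly sharper than the stated theorem; so your ``a fortiori'' claim is justified. What the paper's route buys is an intermediate high-probability deviation bound before integration, while yours gives only the expectation; since the theorem is stated in expectation this is no loss here. Your integrability bookkeeping in the last paragraph is indeed needed and correctly handled. One small correction: the aside that Theorem~\ref{holdouttheorem} can alternatively be read off from \citet{aubert:generalpmle} applied to singletons is not quite right, since that result requires the densities to be uniformly bounded below by some $\varepsilon>0$, an assumption absent here.
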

This result holds for arbitrary $p^m$ as long as they are predictable w.r.t. $(\Fcal_t)_t$. In particular, it allows, as usual for hold-out, to take  $p^m=p^m_{\tilde{\theta}^m}$, where $\tilde{\theta}^m\in \underset{\theta^m \in \Theta^m} \argmax  \sum_{t=1}^{N-1} \log p_{\theta^m,t}^m(A_t)$, for some family of models $(\Theta^m)_{m \in \Mcal}$.
This result is new, since there are no hold-out results for non-independent and non-stationary data up to our knowledge. However, it is the analog of Theorem 8.9 in \citep{massart2007concentration} for this learning framework, adding only a multiplicative factor $\log n$  in the error bound. 
It justifies the use of hold-out procedures to model learning data in cognitive experiments such as \citep{mezzadri2022hold,james2023strategy}, using classical cognitive models as Alcove \citep{Kruschke1992}, Component-Cue \citep{gluck1988evaluating} or Activity-based Credit Assignment (see \citep{james2023strategy} and the references therein).

\section{
AIC-TYPE CRITERION}\label{section_penalizedmle}

The hold-out estimator does not require prior knowledge of the underlying family of densities, but in practice \citep{wilson2019ten}, it is more natural to use procedures that allow for both parameter estimation of a model and model selection, though they rely on structural assumptions about the models \citep{massart2007concentration,spok2012,aubert:generalpmle}. In this section, we provide a new application of \citep[Theorem 1]{aubert:generalpmle} to the contextual setting defined in the second example of Section~\ref{sec:some_examples}.
To simplify the framework, we consider a finite set of actions $[K]$. The filtration is therefore $\Fcal_{t} = \sigma(A_1^{t-1},X_1^t)$, where $X_t$ is the context at time $t$ that belongs to some context space $\Xcal$. 
To stress out the dependency with the context at time $t$, we write the true unknown probability of picking an action $A_t$ given $X_t$ at time $t$ as $p_t^\star(A_t | X_t)$.

We model learning through \textit{partition-based contextual bandit} algorithms \citep[Chapter 18]{lattimore_szepesvári_2020}, which allow for a straightforward comparison to the hold-out procedure in both theoretical (Section~\ref{section_prosandcons}) and empirical settings (Section~\ref{section_numericalillustrations}).
This approach is linked to the classical problem of selecting the best partition when constructing a histogram \citep{castellan,massart2007concentration}. This modeling also offers insight into how learners use contextual information to make decisions. While we focus on a simple scenario for the sake of simplicity, Appendix~\ref{section_metalearning} extends this result to more complex bandit models for metalearning.

\subsection{Partition-based Contextual Bandits: An Example of Parametric Models}\label{section_partitionbasedcontextualbandits}

\textit{Partition-based contextual bandits} \citep[Chapter 18]{lattimore_szepesvári_2020} assume that the individual partitions the context space $\Xcal$ into disjoint cells $C$. This often occurs when the individual is familiar with the contexts and has developed a personal understanding of them. This allows the individual to learn a new task only by updating one elementary and non-contextual bandit algorithm (like \texttt{Gradient Bandit}), denoted $\texttt{CellBandit}(C)$, in each context cell $C$.

Being able to select the partition used by the individual among several candidates is important to understand its behavior.
For instance, in a categorization task where contexts are objects (see Section \ref{section_numericalillustrations}), this approach reveals the perceived similarity between objects by the learner thanks to the estimated partition.

Formally, let $g_t = (g_{1,t}, \ldots, g_{K,t}) \in [0,1]^K$ represent the vector of losses (or rewards) at time $t$, which models the feedback from the environment. We do not impose specific assumptions on how losses are generated, except that $g_t$ must be $\Fcal_{t-1}$-measurable. Contexts $X_t$ may also be generated independently of past actions or may depend on them.

Each model $m \in \Mcal$ corresponds to a partition $\Pcal_m$ of $\Xcal$ into $D_m$ cells. The model is parameterized by a vector $\theta^m = (\theta_C)_{C \in \Pcal_m}$, where each $\texttt{CellBandit}(C)$ uses a procedure characterized by a parameter $\theta_C$, such as the learning rate in the \texttt{Gradient Bandit}. The resulting candidate for $p^\star$ is $p^m_{\theta^m} = (p^m_{\theta^m,t})_{t\in [n]}$.

Under model $m$, each $\texttt{CellBandit}(C)$ is updated each time $X_t \in C$ and its decisions depend only on the contexts and actions within the set $F_t(C)=\{s\in [t]: X_s\in C\}$, with cardinality $T_t^C=|F_t(C)|$. We denote the action distribution at time $t$ for $\texttt{CellBandit}(C)$ with parameter $\theta_C$ as $\pi_{C,T_t^C}^{\theta_C}$ (see Algorithm \ref{protocolcontextualbandits}). Thus, for all $t \in [n]$ and $a \in [K]$:

\begin{align}\label{eq_probmodele}
 \nonumber p_{\theta^m,t}^m(a | X_t) 
 &= \Pbb_{\theta^m}^m (A_t = a |\Fcal_{t-1}) \\
 &= \sum_{C\in \Pcal_m} \pi_{C,T_t^C}^{\theta_C}(a) \one_{X_t\in C}.
\end{align}

\begin{algorithm}[H]
\caption{Partition-based contextual bandit for model $m$ \citep{lattimore_szepesvári_2020}}\label{protocolcontextualbandits}
\begin{algorithmic}
\State \textbf{Inputs:} partition $\Pcal_m$ 
of the context space $\Xcal$,\\ \hspace{1.1cm} parameters $\theta^m=(\theta_{C})_{C \in \Pcal_m}\in \Theta^m=\underset{{C\in \Pcal_m}}{\otimes} \Theta_C$, with $\Theta_C$ compact parametric set.
\State \textbf{Initialization}: For all $C \in \Pcal_m$, for all $a \in [K]$, $\pi_{C,1}^{\theta_C}(a) = 1/K$.
\For {$t=1,2,\ldots$}
\State Learner observes context $X_t \in \Xcal$ and finds $C \in \Pcal_m$ such that $X_t \in C$.
\State Learner plays $\texttt{CellBandit}(C)$ with parameter $\theta_C$ and samples action $A_t \sim \pi_{C,T_t^C}^{\theta_C}$.
\State Learner observes loss $g_{A_t,t}$ and updates the probability distribution $\pi_{C,T_t^C}^{\theta_C}$ in $\texttt{CellBandit}(C)$.
\EndFor
\end{algorithmic}
\end{algorithm}

First, we need to assume that the probabilities do not vanish.
\begin{Assumption}\label{existenceupsilon} There exists $\varepsilon > 0$ and an integer $T_\varepsilon \geq 2$, such that, almost surely,
\begin{equation}\label{eq_minoeps_pstar}
\forall t\leq T_\varepsilon, \  \forall x\in \Xcal, \   \forall a\in [K], \quad p^\star_t(a | x) \geq \varepsilon
\end{equation}
and that for all $m \in \Mcal$ and all $C\in \Pcal_m$, the $\texttt{CellBandit}(C)$ satisfies, for all parameter $\theta_C\in \Theta_C$
\begin{equation}\label{eq_minoeps}
\forall t\leq T_\varepsilon, \  \forall a\in [K], \quad \pi_{C,T_t^C}^{\theta_C}(a) \geq \varepsilon.
\end{equation}
\end{Assumption}

Assumption~\ref{existenceupsilon} is relevant because once the true probability $p_t^\star(a | x)$ of picking an action becomes too small and the learner stops making mistakes, further improvement in parameter estimation is no longer possible. This is emphasized in \citep{aubert23}, where the authors show a counterexample demonstrating that when the probability of error (picking the wrong action) is too low, the quality of estimation cannot be enhanced.

Let $\pen : \Mcal \to \Rbb_{+}$ be a penalty function. For each $m \in \Mcal$, let $\hat{\theta}^m \in \underset{\theta^m \in \Theta^{m}}{\argmax} \; \ell_{T_\varepsilon}(p^m_{\theta^m})$ be a MLE of model $m$, with $\ell$ defined as in~\eqref{eq_loglikelihood}, and select a model $\hat{m}$ that minimizes the penalized log-likelihood stopped at $T_\varepsilon$:
\begin{equation}\label{eq_penalizedloglikelihood}
\widehat{m} \in \underset{m \in \Mcal}{\argmin} \left(-\frac{\ell_{T_\varepsilon}(p^m_{\hat{\theta}^m})}{T_\varepsilon} + \pen(m) \right).
\end{equation}
To prove oracle inequalities, we need a smoothness assumption on the parametrization of each $\texttt{CellBandit}(C)$ which can be extended to $p^m$ in Proposition~\ref{prop_atail}. Assumption~\ref{existencegronwallbis} is standard for model selection and parameter estimation \citep{massart2007concentration,spok2012}.
\begin{Assumption}\label{existencegronwallbis} With the notation of Assumption \ref{existenceupsilon}, there exists $L_\varepsilon>0$ such that, almost surely, for all $m \in \Mcal$, all $C\in \Pcal_m$, for all $\delta_C, \theta_C \in \Theta_C$, for all $t\leq T_\varepsilon$,
\begin{equation}\label{eq_lip_pi}
\sup_{a\in [K]} \left|\log\left(\frac{\pi_{C,T_t^C}^{\delta_C}(a)}{\pi_{C,T_t^C}^{\theta_C}(a)}\right)\right| \leq L_\varepsilon \|\delta_C - \theta_C\|_{2}.
\end{equation}
\end{Assumption}

\begin{proposition}\label{prop_atail}
    Assume that $p^m$ is a partition-based contextual bandit as in \eqref{eq_probmodele} and Algorithm \ref{protocolcontextualbandits} and that there exists $T_\varepsilon$ such that for all $C\in \Pcal_m$, $\texttt{CellBandit}(C)$ satisfies \eqref{eq_minoeps} and \eqref{eq_lip_pi}. 
    Then, almost surely,  for all $\theta^m,\delta^m \in \Theta^{m}$, for all $t \leq T_\varepsilon$, for all $x \in \Xcal$,  for all $a \in [K]$, $p_{\theta^m,t}^m(a,x) \geq \varepsilon$ and
    \begin{equation*}
     \sup_{a\in [K]} \left|\log\left(\frac{p_{\delta^m,t}^m(a | x)}{p_{\theta^m,t}^m(a | x)}\right)\right| \leq L_\varepsilon \sup_{C \in \Pcal_m}\|\delta_C- \theta_C\|_{2}.
    \end{equation*}
\end{proposition}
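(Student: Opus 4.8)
The statement follows almost immediately from the partition structure, and the plan is essentially bookkeeping.

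First I would use that $\Pcal_m$ is a genuine partition of $\Xcal$: for any fixed $x \in \Xcal$ there is exactly one cell, call it $C = C(x) \in \Pcal_m$, with $x \in C$. Substituting this into the defining formula \eqref{eq_probmodele} collapses the sum to a single term, so that for every $t$, every $a \in [K]$ and every parameter vector,
\[
 p^m_{\theta^m,t}(a \mid x) = \pi^{\theta_C}_{C,T_t^C}(a),
\]
i.e.\ evaluating the model at context $x$ amounts to reading off the action distribution of the single elementary bandit $\texttt{CellBandit}(C)$ attached to the cell containing $x$, at its internal clock value $T_t^C = |\{s \le t : X_s\in C\}|$.

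Next I would fix the almost-sure event $\Omega_0$ on which both \eqref{eq_minoeps} and \eqref{eq_lip_pi} hold (simultaneously for all $m$, all $C \in \Pcal_m$, all $t \le T_\varepsilon$, all $a \in [K]$, and all parameters); this is the intersection of the two full-measure events supplied by Assumptions~\ref{existenceupsilon} and~\ref{existencegronwallbis}, hence itself of full measure. On $\Omega_0$ and for $t \le T_\varepsilon$, the lower bound is then immediate: $p^m_{\theta^m,t}(a\mid x) = \pi^{\theta_C}_{C,T_t^C}(a) \ge \varepsilon$ by \eqref{eq_minoeps} applied to the cell $C=C(x)$. For the Lipschitz estimate, the same collapse to a single cell gives
\[
 \Big|\log \frac{p^m_{\delta^m,t}(a\mid x)}{p^m_{\theta^m,t}(a\mid x)}\Big|
 = \Big|\log \frac{\pi^{\delta_C}_{C,T_t^C}(a)}{\pi^{\theta_C}_{C,T_t^C}(a)}\Big|
 \le L_\varepsilon \|\delta_C - \theta_C\|_2
 \le L_\varepsilon \sup_{C' \in \Pcal_m}\|\delta_{C'} - \theta_{C'}\|_2,
\]
where the first inequality is \eqref{eq_lip_pi} applied at index $T_t^C$ (valid since $t \le T_\varepsilon$), and the last step merely enlarges the cell-wise distance to the supremum over cells. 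Taking the supremum over $a \in [K]$ yields the claim.

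There is no real obstacle here: the proposition is a direct consequence of the partition structure combined with the two assumptions, which are already stated in exactly the form needed. The only two points deserving a line of care are (i) that the counter $T_t^C$ appearing when one plugs a generic $x$ into \eqref{eq_probmodele} is precisely the internal index to which \eqref{eq_minoeps} and \eqref{eq_lip_pi} apply — so one must read those assumptions with index $T_t^C$, which is how they are written — and (ii) that intersecting the two almost-sure events preserves full measure, so the conclusion holds almost surely with all the quantifiers (over $t \le T_\varepsilon$, $x$, $a$, and the parameters) kept inside.
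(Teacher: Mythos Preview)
Your proposal is correct and takes essentially the same approach as the paper: both exploit that $\Pcal_m$ is a partition so that at any fixed $x$ only one indicator survives, reducing everything to the cell-wise bounds \eqref{eq_minoeps} and \eqref{eq_lip_pi}. The paper keeps the indicator-sum notation throughout rather than explicitly naming the unique cell $C(x)$, but this is purely cosmetic.
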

Finally, we make the following assumption.
\begin{Assumption}\label{assumption_numberofparametersbounded}
The number of parameters of all \texttt{CellBandit} procedures are uniformly bounded, so that $d = \sup_{m \in \Mcal} \sup_{C \in \Pcal_m} \dim(\Theta_C)$ is finite.
\end{Assumption}

With these assumptions, one obtains the following result.

\begin{theorem}\label{generalmodelselectiontheorem}
Let $\Mcal$ be a countable set, and for each $m \in \Mcal$, consider a partition-based contextual bandit model $\{p^m_{\theta^m}, \theta^m\in \Theta^m\}$ (see Algorithm \ref{protocolcontextualbandits} and~\eqref{eq_probmodele}).
Let $R$ and $r$ be such that all coordinates $\theta_{i,C}$'s of $\theta_C\in \Theta_C$, for $C\in \Pcal_m$ and $m\in \Mcal$, satisfy $r\leq \theta_{i,C}\leq R$ and let 
$A_\varepsilon = L_\varepsilon \sqrt{d}(R-r) + 2 \log(\varepsilon^{-1})$. Let $\Sigma_\varepsilon = \log(A_\varepsilon)\sum_{m \in \Mcal} e^{-D_m} < +\infty .$
Under Assumptions~\ref{existenceupsilon}, \ref{existencegronwallbis} and~\ref{assumption_numberofparametersbounded}, for any $\diamondsuit > 1$, there exist $c, c' > 0$ such that the following holds:
if for all $m \in \Mcal$,
\begin{equation*}
    \pen(m) \geq 
    c A_\varepsilon^2 \log(\varepsilon^{-1}) \log(T_\varepsilon A_\varepsilon)^2 \frac{D_m}{T_\varepsilon},
\end{equation*}
then,   
    \begin{multline*}
    \Ebb[\Kbf_{T_\varepsilon}(p_{\hat{\theta}^{\hat{m}}}^{\hat{m}})]
    \leq
    \\
    \Ebb\left[ \diamondsuit \inf_{m \in \Mcal} \left( \inf_{\theta^m \in \Theta^{m}} \Kbf_{T_\varepsilon}(p_{\theta^{m}}^{m})
        + 2 \pen(m)
    \right)\right]\\
        + 
            c' A_\varepsilon  \Sigma_\varepsilon \log(\varepsilon^{-1})
            \frac{\log(T_\varepsilon)}{T_\varepsilon}.
    \end{multline*}
\end{theorem}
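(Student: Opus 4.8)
The plan is to derive Theorem~\ref{generalmodelselectiontheorem} as an instance of Theorem~1 in \citep{aubert:generalpmle}. That general result gives an oracle inequality for penalized maximum likelihood over a countable collection of models, provided three ingredients are in place: (i) a uniform lower bound on the candidate and true conditional densities on the time window under consideration; (ii) a control of the bracketing metric entropy of each model in the relevant log-ratio pseudo-metric, with a Lipschitz parametrization; and (iii) a Kraft-type summability condition on the weights attached to the models. So the proof is essentially a verification that Assumptions~\ref{existenceupsilon}, \ref{existencegronwallbis} and~\ref{assumption_numberofparametersbounded}, together with the structure~\eqref{eq_probmodele} of partition-based contextual bandits, supply exactly these three ingredients, followed by bookkeeping of how $\varepsilon$, $L_\varepsilon$, $d$, $R-r$ and $D_m$ propagate into the penalty and the remainder.

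First I would fix the horizon $N=1$ and $n=T_\varepsilon$, so that on the window $t\le T_\varepsilon$ Assumption~\ref{existenceupsilon} gives $p^\star_t(a\mid x)\ge\varepsilon$ and, via Proposition~\ref{prop_atail}, $p^m_{\theta^m,t}(a\mid x)\ge\varepsilon$ for every $m$, $\theta^m\in\Theta^m$, $t\le T_\varepsilon$, $x\in\Xcal$, $a\in[K]$. Since these quantities are probabilities, hence bounded above by $1$ (finite action set), every log-ratio $\log(p^\star_t/p^m_{\theta^m,t})$ lies in $[-\log(\varepsilon^{-1}),\log(\varepsilon^{-1})]$. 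This boundedness is exactly what makes the martingale Bernstein arguments underlying \citep{aubert:generalpmle} run, and it also makes $\Kbf_{T_\varepsilon}$ and $\Hbf^2_{T_\varepsilon}$ comparable up to a factor of order $\log(\varepsilon^{-1})$, so that a Hellinger-type conclusion of the general theorem can be restated in terms of $\Kbf_{T_\varepsilon}$ as in the statement.

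Next I would handle the entropy of each model. By Proposition~\ref{prop_atail}, the map $\theta^m\mapsto(\log p^m_{\theta^m,t})_{t\le T_\varepsilon}$ is $L_\varepsilon$-Lipschitz from $(\Theta^m,d_\infty)$, where $d_\infty(\theta^m,\delta^m)=\sup_{C\in\Pcal_m}\|\theta_C-\delta_C\|_2$, into the space of sequences of log-densities with the sup-in-$(t,a,x)$ norm. The covering number of $(\Theta^m,d_\infty)$ at scale $u$ factorizes over cells, $N(\Theta^m,d_\infty,u)=\prod_{C\in\Pcal_m}N(\Theta_C,\|\cdot\|_2,u)\le\big(3\sqrt d(R-r)/u\big)^{dD_m}$, since each $\Theta_C$ sits in a cube of side $R-r$ in dimension at most $d$, which is finite by Assumption~\ref{assumption_numberofparametersbounded}. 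Hence the log-ratio metric entropy of $\{p^m_{\theta^m}:\theta^m\in\Theta^m\}$ at scale $u$ is $\lesssim D_m\log(A_\varepsilon/u)$, after absorbing the fixed dimension $d$ into the constant and recognizing $A_\varepsilon=L_\varepsilon\sqrt d(R-r)+2\log(\varepsilon^{-1})$ as the natural ``diameter-plus-centering'' scale (the $L_\varepsilon\sqrt d(R-r)$ term bounds the log-ratio radius of the model, the $2\log(\varepsilon^{-1})$ term accounts for centering around $p^\star$). Feeding this bracketing bound, the variance proxy $\log(\varepsilon^{-1})$, and the weights $x_m=D_m$ (for which $\sum_{m}e^{-x_m}=\sum_m e^{-D_m}<\infty$ by hypothesis, and $\Sigma_\varepsilon=\log(A_\varepsilon)\sum_m e^{-D_m}$) into \citep[Theorem~1]{aubert:generalpmle} yields the required lower bound $\pen(m)\gtrsim A_\varepsilon^2\log(\varepsilon^{-1})\log(T_\varepsilon A_\varepsilon)^2 D_m/T_\varepsilon$ and the remainder of order $A_\varepsilon\Sigma_\varepsilon\log(\varepsilon^{-1})\log(T_\varepsilon)/T_\varepsilon$.

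The main obstacle I anticipate is the bookkeeping of constants in the entropy estimate and its interface with the precise hypotheses of \citep[Theorem~1]{aubert:generalpmle}: one must check that the Lipschitz bound of Proposition~\ref{prop_atail} produces genuine brackets (not merely a covering) of the right size in the pseudo-metric used there; that the dimension $d$ really only enters multiplicatively in the entropy exponent and can be hidden inside the universal constant and inside $A_\varepsilon$, so that the penalty scales with the number of cells $D_m$ rather than with $dD_m$; and that the $\log(T_\varepsilon A_\varepsilon)^2$ factor — arising from discretizing each parameter at a scale polynomial in $T_\varepsilon$, a union bound over the resulting net of cardinality $(A_\varepsilon T_\varepsilon)^{dD_m}$, and the squaring of a Bernstein term when solving the induced quadratic inequality — is propagated correctly. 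Everything else (predictability and $\Fcal_{t-1}$-measurability of $p^m_{\theta^m,t}$ from~\eqref{eq_probmodele}, finiteness of $\Sigma_\varepsilon$, the $\Kbf$ versus $\Hbf^2$ conversion) is routine given the assumptions already established.
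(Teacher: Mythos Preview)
Your approach is essentially the same as the paper's: verify that Proposition~\ref{prop_atail} (derived from Assumptions~\ref{existenceupsilon}--\ref{assumption_numberofparametersbounded}) supplies the lower-bound and Lipschitz hypotheses of \citep{aubert:generalpmle}, then read off the oracle inequality. The only notable difference is that the paper invokes \emph{Corollary~2} of \citep{aubert:generalpmle} rather than Theorem~1; that corollary already packages the passage from a Lipschitz parametrization on a box of side $R-r$ in dimension $d$ to the entropy/penalty form, so the bookkeeping you flag as the ``main obstacle'' (brackets vs.\ coverings, whether $d$ hides in the constant, the $\log(T_\varepsilon A_\varepsilon)^2$ factor) is absorbed there and the paper's proof reduces to checking four numbered assumptions and taking $\kappa=1/2$. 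The paper also notes explicitly that Assumptions~3 and~4 of \citep{aubert:generalpmle} hold trivially because the action set $[K]$ is finite, a point you use implicitly but do not state.
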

The set of models $\Mcal$ can be any set as long as it satisfies $\sum_{m \in \Mcal} e^{-D_m} < + \infty$. This condition is standard in the classic model selection literature \citep{massart2007concentration}. For our application to partitions of $\Xcal$, we may take any subset of the family of all possible partitions, but the number of partitions with a given dimension should not grow too fast with the dimension: for instance, it is impractical to take all possible partitions of $\Xcal$ since we incur a cost $\Sigma_\varepsilon/T_\varepsilon$ in the error bound.

Theorem~\ref{generalmodelselectiontheorem} is an application of~\citet{aubert:generalpmle}.  However, despite the generality of their results, \cite{aubert:generalpmle} show mainly applications to the classical settings of time-homogeneous models. We go further by providing a ready-to-use version for partition-based contextual bandits.

This result closely resembles the  model selection ``à la Birgé-Massart" \citep[Section 7.4]{massart2007concentration}, with a bias-variance compromise and a penalty close to the variance in $D_m/T_\varepsilon$, up to additional logarithmic terms, $\log^2 T_\varepsilon$ in the penalty and $\log T_\varepsilon$ in the residual error.
It follows from the general result of \citet{aubert:generalpmle}, applicable to dependent non-stationary data, though verifying its assumptions can be tedious. Partition-based contextual bandits easily meet these assumptions, such as those in \eqref{eq_minoeps} and \eqref{eq_lip_pi} for the \texttt{CellBandit} (see Section~\ref{section_examplebandits} for examples). 

Studying the properties of MLE typically requires regularity assumptions on the parameterization of the distribution (differentiability of the likelihood \citep{spok2012}, for instance). In particular, \cite{aubert:generalpmle} assume a Lipschitz or Hölder condition on the parameterization. This condition is reflected in the case of a partition-based contextual bandit by Assumption~\ref{existencegronwallbis}. This assumption is arduous to check for contextual bandits because errors in the parameters compound over time. In Appendix~\ref{section_examplebandits}, we prove that the \texttt{Exp3-IX} and \texttt{Gradient} \texttt{bandit} algorithms satisfy them. In addition, we also apply the result of~\citet{aubert:generalpmle} in the metalearning setup (Appendix~\ref{section_metalearning}) for the Exp4 algorithm.

\subsection{Method Comparison and Limitations}\label{section_prosandcons}

The hold-out method, while assumption-free regarding the family of densities, relies on two losses—Hellinger distance and KL divergence—which are standard in model selection, although they are in general not equivalent (see \citep{massart2007concentration}, Theorem 7.11). 

Due to the data's strong dependencies, the hold-out procedure performs a single split at $t=N$ unlike classical cross-validation. A trade-off is needed: $N$ must be large enough to accurately perform model fitting but not too large to leave enough data for the model selection step. This approach is unsuited when the individual learns differently over time, such as when they change their behavior between the start and the end of the learning process (e.g. change the underlying partition in partition-based contextual bandits).

Unlike the hold-out, the AIC-type approach does not require sample splitting and performs well in practice (Section~\ref{section_numericalillustrations}). However, its oracle inequality holds only for data from the time interval $[T_\varepsilon]$ (see Appendix~\ref{section_examplebandits}). This restriction is actually quite reasonable in practice, since nothing can be estimated about the learning process once it reaches a stage where it makes no errors (see also the negative results of \cite{aubert23}).

Both model selection methods rely on a hyperparameter $N$ (the split position) or $c$ (the penalty constant), which lack theoretical guidelines and must be empirically tuned. 
In the classical i.i.d. framework, \citet{arlot2010survey} provide guidance on sample splitting in V-fold cross-validation. However, for our scenario—an individual learning task—there is no theoretical recommendation for choosing $N$ for the hold-out criterion. We need to adjust $N$ numerically as illustrated in Figure~\ref{fig:choiceofN}. 

For the AIC-type criterion, the penalty involves the constant $c$ which is unknown beforehand and requires numerical calibration as well (see Section \ref{section_numericalillustrations}). We could either use the hold-out procedure from Section \ref{section_holdout} or apply heuristics like the dimension jump method or slope heuristics \citep{baudry2012slope,arlot2019slope} to determine $c$. In Section \ref{section_numericalillustrations}, we use simulations for this calibration, as shown in Figure~\ref{fig:choiceofc}.

\section{NUMERICAL ILLUSTRATIONS}\label{section_numericalillustrations}

\begin{figure}[htbp]
\centering
\begin{subfigure}[b]{0.5\textwidth}
    \begin{tikzpicture}[scale=0.5,font=\small]
    \draw [->] (-3,3) -- (-2,4);
    \draw [->] (-3,3) -- (-3,2);
    \draw [->] (-3,3) -- (-2,3);

    \node at (-2,3.6) {Color};
    \node at (-2,2.8) {Size};
    \node at (-2.5,2) {Pattern};

    \draw (0,0) -- (3,0) -- (3,3) -- (0,3) -- (0,0);
    \draw (4,1) -- (4,4) -- (1,4);
    \draw [dashed] (4,1) -- (1,1);
    \draw [dashed] (1,4) -- (1,1);
    \draw [dashed] (0,0) -- (1,1);
    \draw (3,0) -- (4,1);
    \draw (3,3) -- (4,4);
    \draw (0,3) -- (1,4);

    \draw (6,0) -- (9,0) -- (9,3) -- (6,3) -- (6,0);
    \draw (7,4) -- (10,4);
    \draw [dashed] (7,1) -- (7,4);
    \draw (9,0) -- (10,1);
    \draw (9,3) -- (10,4);
    \draw (10,4) -- (10,1);
    \draw [dashed] (10,1) -- (7,1);
    \draw [dashed] (6,0) -- (7,1);
    \draw (6,3) -- (7,4);

    \draw [->] (1.5,-1) -- (8,-1);
    \node at (4.5,-1.5) {Shape};

    \draw[preaction={fill, blue!65!white}, pattern=north east lines, pattern color=black] (6,0) circle [radius=0.3];
    \draw[preaction={fill, blue!65!white}, pattern=north east lines, pattern color=black] (9,0) circle [radius=0.5];
    \draw[preaction={fill, red!65!white}, pattern=north east lines, pattern color=black] (7,1) circle [radius=0.3];
    \draw[preaction={fill, red!65!white}] (7,4) circle [radius=0.3];
    \draw[preaction={fill, red!65!white}] (10,4) circle [radius=0.5];
    \draw[preaction={fill, red!65!white}, pattern = north east lines] (0.7,0.7) rectangle (1.2,1.2);
    \draw[preaction={fill, red!65!white}, pattern = north east lines] (3.5,0.5) rectangle (4.4,1.4);
    \draw[preaction={fill, blue!65!white}] (2.5,2.5) rectangle (3.4,3.4);
    \draw[preaction={fill, blue!65!white}] (-0.3,2.7) rectangle (0.2,3.2);
    \end{tikzpicture}
    \caption{Representation in 4D space}
    \label{fig:4Dspace}
\end{subfigure}
\hfill
\begin{subfigure}[b]{0.4\textwidth}
    \begin{tikzpicture}[scale = 0.5,font=\tiny]
    \draw (0,0) -- (3,0) -- (3,3) -- (0,3) -- (0,0);
    \draw (4,1) -- (4,4) -- (1,4);
    \draw [dashed] (4,1) -- (1,1);
    \draw [dashed] (1,4) -- (1,1);
    \draw [dashed] (0,0) -- (1,1);
    \draw (3,0) -- (4,1);
    \draw (3,3) -- (4,4);
    \draw (0,3) -- (1,4);

    \draw (6,0) -- (9,0) -- (9,3) -- (6,3) -- (6,0);
    \draw (7,4) -- (10,4);
    \draw [dashed] (7,1) -- (7,4);
    \draw (9,0) -- (10,1);
    \draw (9,3) -- (10,4);
    \draw (10,4) -- (10,1);
    \draw [dashed] (10,1) -- (7,1);
    \draw [dashed] (6,0) -- (7,1);
    \draw (6,3) -- (7,4);

    \draw[fill=cyan] (6,0) circle (0.3cm) node[text=blue] {$A$};
    \draw[fill=cyan] (9,0) circle (0.3cm) node[text=blue] {$A$};
    \draw[fill=cyan] (7,1) circle (0.3cm) node[text=blue] {$A$};
    \draw[fill=green] (7,4) circle (0.3cm) node[text=blue] {$B$};
    \draw[fill=cyan] (10,4) circle (0.3cm) node[text=blue] {$A$};
    \draw[fill=green] (1,1) circle (0.3cm) node[text=blue] {$B$};
    \draw[fill=cyan] (4,1) circle (0.3cm) node[text=blue] {$A$};
    \draw[fill=green] (0,3) circle (0.3cm) node[text=blue] {$B$};
    \draw[fill=green] (3,3) circle (0.3cm) node[text=blue] {$B$};
    \end{tikzpicture}
    \caption{Category attribution}
    \label{fig:categoryattribution}
\end{subfigure}
\caption{Experiment presentation: classic $5$-$4$ category structure, widely used in cognition \citep{medin1978context}. In~\ref{fig:4Dspace}, the 9 objects to classify represented in a 4D space with respect to their attributes: Color, Size, Filling Pattern, and Shape. In~\ref{fig:categoryattribution}, by position in the 4D space, the category attribution (A or B).}
\label{fig:experiment}
\end{figure}
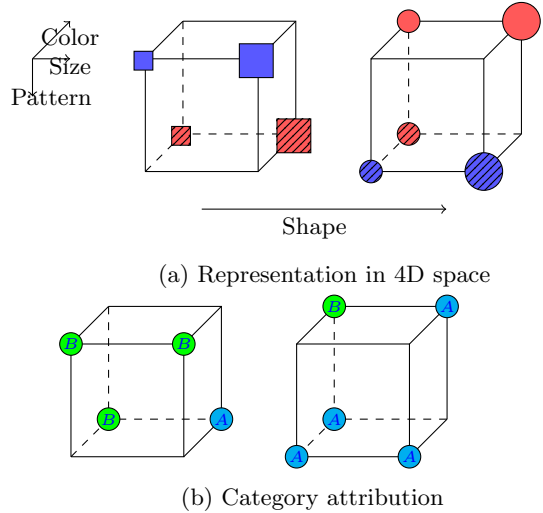

We consider an experiment on the following categorization task: learners have to classify nine objects in two categories $A$ and $B$ in a sequential way with feedback after each individual choice. Figure~\ref{fig:experiment} presents the objects and the classification rule the learners have to learn. It is a quite difficult task that has been experimented for instance in \citep{mezzadri2022investigating}, where the learners needed about 300 trials to learn the classification rule.

\begin{figure}[htbp]
    \centering
    \begin{subfigure}[b]{0.3\textwidth}
        \centering
        \includegraphics[width=1.\textwidth]{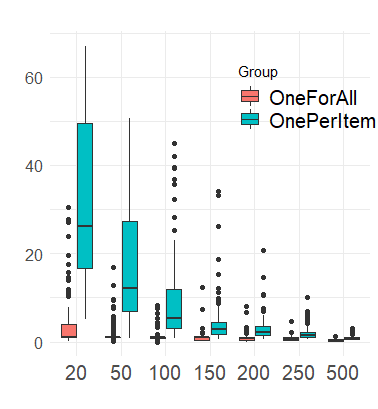}
        \caption{}
        \label{fig:boxplot_estimationerror}
    \end{subfigure}
    \hfill
    \begin{subfigure}[b]{0.3\textwidth}
        \centering
        \includegraphics[width=1\textwidth]{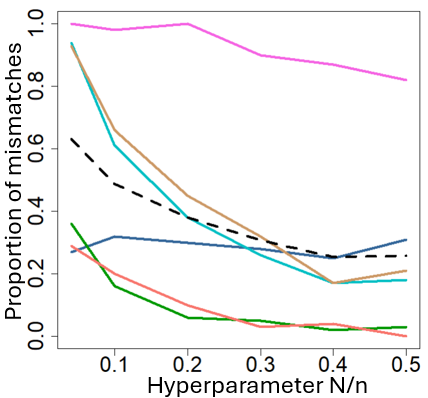}
        \caption{}
        \label{fig:choiceofN}
    \end{subfigure}
    \hfill
    \begin{subfigure}[b]{0.3\textwidth}
        \centering
        \includegraphics[width=1\textwidth]{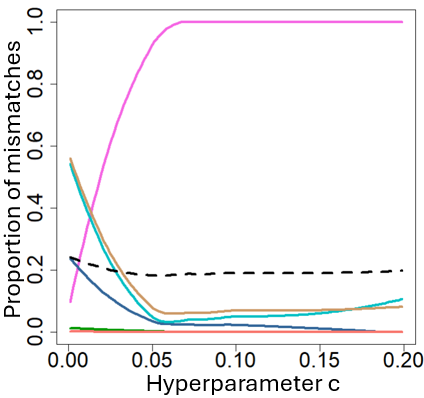}
        \caption{}
        \label{fig:choiceofc}
    \end{subfigure}
    \caption{Errors of the procedures as a function of the tuning parameters. In~\ref{fig:boxplot_estimationerror}, average of the $|\hat{\theta}_C-\theta_C|/\theta_C$ over all cells $C$ in model \texttt{OneForAll} and \texttt{OnePerItem} for the data generated respectively by the same models, where $\hat{\theta}_C$ is the MLE with likelihood truncated at $N$ (in abscissa). In~\ref{fig:choiceofN} and~\ref{fig:choiceofc}, percentage of mismatch between $\hat{m}$ and the simulated model over 100 simulations. The colors for each model are the ones given in Figure \ref{fig:boxplot_holdout_pmle} whereas the average error on the models in the dash line. In~\ref{fig:choiceofN}, for the hold-out estimator as a function of $N/n$. In~\ref{fig:choiceofc}, for the penalized MLE with $\pen(m)=c\log(n)^2D_m/n$, as a function of $c$. }
    \label{fig:threefigures}
\end{figure}

\begin{figure*}[htbp]
  \centering
  \includegraphics[width=1\textwidth]{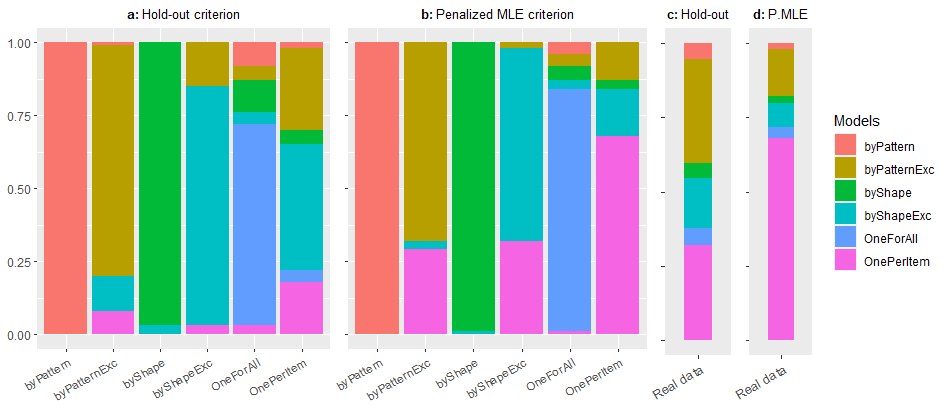} 
  \caption{Distribution of the model choices. In {\bf a},  hold-out with $N=250$ over 100 simulations. In {\bf b}, penalized MLE with $c=0.012$ over 100 simulations.  In {\bf c},  hold-out on the data recorded in \citep{mezzadri2022investigating}--176 participants. In {\bf d},  penalized MLE on the same experimental data. }
  \label{fig:boxplot_holdout_pmle}
\end{figure*}

For modeling, we fix the reward: 1 if the learner finds the good category and 0 in the other case. We focus on six models (detailed in Table~\ref{table:models} and Figure~\ref{fig:descriptionmodels} of the Appendix) with clear cognitive interpretations. Each model represents a partition of the object space, where a \texttt{CellBandit} procedure is applied within each part. For example, the \texttt{OnePerItem} model is the most complex, with the finest partition, where each element of the object space forms its own subset.

\paragraph{On Synthetic Data.}
All the simulations were performed with $n=500$ and for \texttt{Gradient Bandit} as $\texttt{CellBandit}$, for all the 6 models. The way synthetic data are generated can be found in Appendix~\ref{section_detailsnumerical}.

Figures~\ref{fig:choiceofN} and~\ref{fig:choiceofc} show that the hold-out is almost systematically outperformed by the penalized MLE. Both struggle to identify the most
complex model \texttt{OnePerItem}, preferring simpler alternatives.

\paragraph{On the choice of hyperparameters.}

The hyperparameters (c for the penalized MLE and the proportion N/n in the hold-out) can be chosen either by minimizing the average of the misclassification errors in each model or by minimizing the maximum of these errors (~\ref{fig:choiceofN} and~\ref{fig:choiceofc}). Given the nature of the plot, we chose the latter option. This led to the choice c=0.012 and N/n=0.5. We provide, in Appendix~\ref{sec:additionalplots}, additional plots with two different choices of $c$: one that minimizes the average of the errors, and the other that corresponds to the AIC, that is $c \log (n)^2 = 1$.

The models in competition do not have the same complexity. In particular, \texttt{OnePerItem} has a much larger number of parameters than the other models. The penalty function disincentivizes models with large numbers of parameters, which explains why \texttt{OnePerItem} tends to be less selected in the simulations.

However, \texttt{OnePerItem} is the model which, in real life, means that participants have learned by heart without trying to find a rule (such as treating similar shapes in the same way). It has an important meaning, especially because \cite{mezzadri2022investigating} have proven in various cognitive experiments, including this one, that some participants do not reason by rules, but by heart.

The proportion of mismatches for each model are reported in Figure~\ref{fig:boxplot_holdout_pmle}{\bf a} for the hold-out and ~\ref{fig:boxplot_holdout_pmle}{\bf b} for the penalized MLE. Both methods manage to recover the true model with less than 35\% of mistakes, except for the model \texttt{OnePerItem}, for which only the penalized MLE is able to achieve a successful match more than 60\% of the time. The models that are confused the most are the ones that are able to correctly learn the categorization, that is \texttt{ByPatternExc}, \texttt{ByShapeExc} and \texttt{OnePerItem}.

Each method has a small bias: the penalized estimator prefers the \texttt{OnePerItem} model and has a tendency to select it even when it is wrong (Figure~\ref{fig:boxplot_holdout_pmle}{\bf b}), whereas the hold-out favors \texttt{ByShapeExc} (~\ref{fig:boxplot_holdout_pmle}{\bf a}).

\paragraph{On Real Data.} 

The data have been collected in \citep{mezzadri2022investigating}\footnote{We refer the reader to \citep{mezzadri2022investigating} for precise description of the task as well as the ethics agreement.} and we focus only on the learning data. We use only the 176 participants that needed at least $n=100$ trials. In Figure~\ref{fig:boxplot_holdout_pmle}\textbf{d}, we see that most of participants are attributed one of the 3 models able to learn. The most frequent is \texttt{OnePerItem} (about 70\% for the penalized MLE) and this percentage is larger than the one obtained on simulation, probably meaning that a significant proportion of the participants do not see the division along the dimensions \texttt{Shape} or \texttt{Pattern}. It would be interesting for further study to see if this is linked to the presentation order of the objects, as it has been proved for Alcove and Component Cue in~\citep{mezzadri2022investigating}.

\section{CONCLUSION}

We proposed two selection methods of models of learning that satisfy oracle inequalities: the hold-out method  selects the best estimator in any finite family of estimators  up to logarithmic terms; the AIC-like method selects  the best trade-off between the model bias and the number of parameters of this model, under some parametrization assumptions. In all cases, the statistical challenge is that an individual's learning data are dependent and non stationary, so that classical statistical model selection results do not apply.
Future work involves a refinement of  the model class to capture more complex learning phenomena (see as premices, the Appendix~\ref{section_metalearning} on metalearning).

\subsection*{Acknowledgements}

This work was supported by the French government, through CNRS, the UCA$^{Jedi}$ and 3IA C\^ote d'Azur Investissements d'Avenir managed by the National Research Agency (ANR-15
IDEX-01 and ANR-19-P3IA-0002),  directly by the ANR project ChaMaNe (ANR-19-CE40-0024-02) and by the interdisciplinary Institute for Modeling in Neuroscience and Cognition (NeuroMod).

\bibliography{bibliography}

\section*{Checklist}



 \begin{enumerate}

 \item For all models and algorithms presented, check if you include:
 \begin{enumerate}
   \item A clear description of the mathematical setting, assumptions, algorithm, and/or model. [Yes] The notations are described in Section~\ref{sec:frame}. The set of models considered are also described in this section and specified for the bandit application in Section~\ref{section_penalizedmle}. Assumptions for the penalized log-likelihood criterion are also provided in this section. The algorithms that satisfy these assumptions are provided in Section~\ref{section_examplebandits} of the Appendix.
   \item An analysis of the properties and complexity (time, space, sample size) of any algorithm. [Yes] Details about running time and complexity are available in Section~\ref{section_codeandatadescription} of the Appendix.
   \item Anonymized source code, with specification of all dependencies, including external libraries. [Yes] The code for the experiments is available in the zip file attached to the submission. The simulation of data according to each model is precisely explained in Section~\ref{section_codeandatadescription} of the Appendix, and the code to produce is given. The codes to compute both estimators (hold-out and penalized MLE) are given and commented, also in the supplementary material. The real data are taken from a published paper \citep{mezzadri2022investigating} and we asked the authors of \citep{mezzadri2022investigating} to provide us with these data. We don't think it is possible to make these data public because the ethics agreement that has been signed by the authors of \citep{mezzadri2022investigating}, prior to data collection, might not include this possibility. However, this application to real data is mainly to prove that this can be done in practice. Since there is not a truth to be compared to in these data and since even cross validation is hard to perform, this real dataset cannot be used as a benchmark to compare methods anyway.
 \end{enumerate}

 \item For any theoretical claim, check if you include:
 \begin{enumerate}
   \item Statements of the full set of assumptions of all theoretical results. [Yes] 
   \item Complete proofs of all theoretical results. [Yes] 
   \item Clear explanations of any assumptions. [Yes]     We stated precisely all our theoretical results with assumptions that are clearly referenced. We give intuition about how each assumption is used. The proofs are given in the supplementary material.
 \end{enumerate}

 \item For all figures and tables that present empirical results, check if you include:
 \begin{enumerate}
   \item The code, data, and instructions needed to reproduce the main experimental results (either in the supplemental material or as a URL). [Yes]  All codes to generate synthetic data and to perform penalized MLE and hold-out are provided, so that all the numerical part on the calibration of both methods can be faithfully reproduced.
    Only the real dataset, as explained before, cannot be given for reproduction (Figure~\ref{fig:boxplot_holdout_pmle}).

   \item All the training details (e.g., data splits, hyperparameters, how they were chosen). [Yes] The whole purpose of our numerical study in Section~\ref{section_numericalillustrations} is to explain the choice of hyperparameters (such as the splitting in the hold-out of the calibration of the constant $c$ in the penalty).
   
    \item A clear definition of the specific measure or statistics and error bars (e.g., with respect to the random seed after running experiments multiple times). [Yes] We have run our simulations on 600 independent simulated learners and we show with a boxplot (Figure~\ref{fig:boxplot_estimationerror}) and mismatch proportion graphs (Figure~\ref{fig:choiceofN}, \ref{fig:choiceofc} and~\ref{fig:boxplot_holdout_pmle}) the proportion of erroneous selections. This cannot be done on real data, since each real participant to the categorization task is unique.
    \item A description of the computing infrastructure used. (e.g., type of GPUs, internal cluster, or cloud provider). [Yes] It is not central in our analysis so it is just mentionned in the supplementary material in Appendix~\ref{section_codeandatadescription}

 \end{enumerate}

 \item If you are using existing assets (e.g., code, data, models) or curating/releasing new assets, check if you include:
 \begin{enumerate}
   \item Citations of the creator If your work uses existing assets. [Yes] We clearly stated that the real data come from \citep{mezzadri2022investigating}. The code has been developed by us solely, using classical packages in \texttt{R} that are clearly mentioned in the code and supplementary material.
   \item The license information of the assets, if applicable. [Not Applicable]
   \item New assets either in the supplemental material or as a URL, if applicable. [Not Applicable] We do not provide new packages associated to our results.
   \item Information about consent from data providers/curators. [Yes] Authors from \citep{mezzadri2022investigating} accepted that we use their datasets to perform our simulations.
   \item Discussion of sensible content if applicable, e.g., personally identifiable information or offensive content. [Not Applicable] This work is theoretical. The methods that are validated theoretically here have already been in use in practice for a long time (see for instance the rules to follow for cognitive modeling in \citep{wilson2019ten}) and so the expected societal impact of the present work is negligible.
 \end{enumerate}

 \item If you used crowdsourcing or conducted research with human subjects, check if you include:
 \begin{enumerate}
   \item The full text of instructions given to participants and screenshots. [No] We did not collect data for the present article but used data collected for \citep{mezzadri2022investigating}, a work that is already published. In this article, all the details about instructions are given and we do not think it makes sense to reproduce it here for our illustration. We only kept the main description of the task so that the readers can understand what was done.
   \item Descriptions of potential participant risks, with links to Institutional Review Board (IRB) approvals if applicable. [No] The data collection done for \citep{mezzadri2022investigating} had the approval of the local ethic committee as mentioned in their article. Here we do not feel necessary to reproduce this here but rather point towards \citep{mezzadri2022investigating} for additional information about the task and its ethic agreement. 
   \item The estimated hourly wage paid to participants and the total amount spent on participant compensation. [No] In \citep{mezzadri2022investigating}, the details about wages are given.
 \end{enumerate}
\end{enumerate}

\appendix
\onecolumn
\begin{center}
    {\Large SUPPLEMENTARY MATERIAL}
\end{center}

This document contains all the additional material for the article.

\section{EXAMPLES OF \texttt{CELLBANDITS}}\label{section_examplebandits}

In this section we provide examples of $\texttt{CellBandit}$ satisfying \eqref{eq_minoeps} and \eqref{eq_lip_pi}. All the algorithms below are written for a cell $C$ and a $\texttt{CellBandit}(C)$ parameterized by $\theta_C\in \Theta_C$ compact subset of $\Rbb^d$ such that $R\geq \sup_{\theta_C\in \Theta_C} \|\theta_C\|_\infty$. 

\subsection{Example 1: \texttt{Exp3-IX} }\label{exampleExp3}

This algorithm is a generalization of \texttt{Exp3} and was introduced in \citep{neu2015explore}. 
Following \citep{aubert23}, we write \texttt{Exp3-IX} with parameters decreasing as a square root of the sample size to ensure a good MLE estimation of the parameters. Note in addition that, for \texttt{Exp3} and its variants, it is well known that sublinear convergence of the regret occurs when the learning rate $\eta$ and the exploration term $\gamma$ are decreasing as a square root of the sample size. This renormalization ensures that the learner is able to learn at a good pace and at the same time be robust to changes in the environment.

\begin{algorithm}[htbp]
\caption{\texttt{Exp3-IX}\citep{neu2015explore} as a $\texttt{CellBandit}(C)$}\label{Exp3-IX (Loss version)}
\begin{algorithmic}
\State \textbf{Inputs: } $n$ (Sample size), $\theta_C = (\eta, \gamma) \in \Theta_C$ (Parameter), $K$ (Number of actions).
\State \textbf{Initialization: } $\pi_{C,1}^{\theta_C}=  \left(\frac{1}{K}, \ldots, \frac{1}{K}\right)$.
\For {$t \in F_n(C)$, the set of times where $X_s\in C$,}
\State Draw an action $A_t \sim \pi_{C,T_t^C}^{\theta_C}$ and receive a loss $g_{A_t,t} \in [0,1]$.
\State Update for all $a \in [K]$,
\begin{equation*}
    \pi_{C,T_t^C+1}^{\theta_C}(a) = \frac{\exp\left(-\frac{\eta}{\sqrt{n}} \sum_{s \in F_t(C)} \hat{g}_{a,s}^{\theta_C}\right)}{\sum_{b\in [K]} \exp\left(-\frac{\eta}{\sqrt{n}} \sum_{s \in F_t(C)} \hat{g}_{b,s}^{\theta_C}\right)} \quad \text{ where } \hat{g}_{b,s}^{\theta_C} = \frac{g_{b,s}}{\gamma/\sqrt{n} + \pi_{C,T_s^C}^{\theta_C}(b)} \one_{A_s = b}
\end{equation*}
\EndFor
\end{algorithmic}
\end{algorithm}

In this case, 
$\Theta_C \subset \Rbb^2$. When $\gamma=0$, we recover the classical \texttt{Exp3} algorithm, studied from the MLE point of view in \citep{aubert23}.
Note that while $g_{A_t,t}$ is observed and known, the estimated loss $\hat{g}_{b,s}^{\theta}$ depends on the parameterization. The following result shows that one can choose \texttt{Exp3-IX} as a $\texttt{CellBandit}$ in the partition-based contextual bandits to perform partition selection.

\begin{proposition}\label{hypotheseexistenceupsilonExp3}
Let $\varepsilon \in (0,1/K)$ and let $\Theta_C\subset [0,R]^2$ with $R>0$. Then $\mathtt{Exp3-IX}$ can be a $\mathtt{CellBandit}(C)$ with parameterization $\theta_C\in \Theta_C$ that satisfies  \eqref{eq_minoeps} and \eqref{eq_lip_pi}, as soon as
    \begin{equation*}
    T_\varepsilon = \left \lfloor \left(\frac{1}{K}-\varepsilon\right)\frac{\sqrt{n}}{R} \right \rfloor \wedge n \qquad \text{and} \qquad  L_\varepsilon = \frac{\sqrt{R^2 / n + \varepsilon^2}}{\varepsilon^3 R}e^{1/\varepsilon^2}.
    \end{equation*}
\end{proposition}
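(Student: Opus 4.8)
The plan is to establish the two required properties of \texttt{Exp3-IX} as a \texttt{CellBandit}$(C)$ separately: first the lower bound \eqref{eq_minoeps} on the action probabilities up to time $T_\varepsilon$, then the Lipschitz estimate \eqref{eq_lip_pi} on the log-ratio of action distributions. Both proofs hinge on controlling the magnitude of the exponents $\frac{\eta}{\sqrt n}\sum_{s\in F_t(C)}\hat g_{a,s}^{\theta_C}$ appearing in the softmax. First I would note that since $g_{b,s}\in[0,1]$ and $\pi_{C,T_s^C}^{\theta_C}(b)\geq 0$, each estimated loss satisfies $\hat g_{b,s}^{\theta_C}\leq \frac{1}{\gamma/\sqrt n+\pi_{C,T_s^C}^{\theta_C}(b)}$; to get a usable bound I would instead bound the \emph{cumulative} exponent. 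A cleaner route for \eqref{eq_minoeps}: write $\pi_{C,T_t^C+1}^{\theta_C}(a)=\bigl(\sum_{b}\exp(-\frac{\eta}{\sqrt n}(S_b-S_a))\bigr)^{-1}$ where $S_b=\sum_{s\in F_t(C)}\hat g_{b,s}^{\theta_C}\geq 0$, so $\pi_{C,T_t^C+1}^{\theta_C}(a)\geq \bigl(K\exp(\frac{\eta}{\sqrt n}\max_b S_b)\bigr)^{-1}$ since each $S_b-S_a\leq S_b\leq \max_b S_b$. It then suffices to bound $\max_b S_b$ along cell times. Here I would use the standard \texttt{Exp3-IX} fact that $\sum_{s}\hat g_{b,s}^{\theta_C}$ telescopes favorably — more simply, the crude bound $\hat g_{b,s}^{\theta_C}\leq g_{b,s}/\pi_{C,T_s^C}^{\theta_C}(b)\leq K$ in the uniform-initialization regime, iterated over at most $T_t^C\leq t$ steps — to obtain $\frac{\eta}{\sqrt n}\max_b S_b\leq \frac{R}{\sqrt n}\cdot K t$ roughly, which is $\leq \log(1/(K\varepsilon))$ precisely when $t\leq T_\varepsilon=\lfloor(\tfrac1K-\varepsilon)\tfrac{\sqrt n}{R}\rfloor\wedge n$ after cleaning up constants. (The exact constant bookkeeping that produces the stated $T_\varepsilon$ is the routine part I would not grind through here.)

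For the Lipschitz bound \eqref{eq_lip_pi}, the approach is a Grönwall-type argument over the cell time steps, which is exactly the ``errors in the parameters compound over time'' difficulty flagged in the main text. Fix $\theta_C=(\eta,\gamma)$ and $\delta_C=(\eta',\gamma')$. I would define $\Delta_t=\sup_{a}\bigl|\log(\pi_{C,T_t^C}^{\delta_C}(a)/\pi_{C,T_t^C}^{\theta_C}(a))\bigr|$ and derive a recursion $\Delta_{t+1}\leq \Delta_t + (\text{local perturbation at step }t)$. The local perturbation comes from two sources: the difference in learning rates $\eta/\sqrt n$ vs.\ $\eta'/\sqrt n$ multiplying the (bounded) cumulative pseudo-losses, and the difference in the importance-weight denominators $\gamma/\sqrt n+\pi(b)$ vs.\ $\gamma'/\sqrt n+\pi(b)$, where the $\pi(b)$ themselves differ between the two parameterizations — this last coupling is what forces the Grönwall estimate rather than a one-shot bound. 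Using the lower bound $\pi_{C,T_s^C}^{\cdot}(b)\geq\varepsilon$ already established for $s\leq T_\varepsilon$, each denominator is $\geq\varepsilon$, so $|\hat g_{b,s}^{\delta_C}-\hat g_{b,s}^{\theta_C}|$ is controlled by $\|\delta_C-\theta_C\|_2$ times a factor polynomial in $\varepsilon^{-1}$ plus a factor $\Delta_s$ times $\varepsilon^{-2}$ (from differentiating $1/(\gamma/\sqrt n+\pi(b))$ in $\pi(b)$ and converting the log-difference $\Delta_s$ to a difference of probabilities). Summing the recursion and applying the discrete Grönwall lemma over $t\leq T_\varepsilon\leq n$ steps yields $\Delta_t\lesssim \|\delta_C-\theta_C\|_2\cdot(\text{poly}(\varepsilon^{-1}))\cdot e^{c/\varepsilon^2}$, matching the announced $L_\varepsilon=\frac{\sqrt{R^2/n+\varepsilon^2}}{\varepsilon^3 R}e^{1/\varepsilon^2}$ once the constants are tracked; the $e^{1/\varepsilon^2}$ is precisely the exponential blow-up produced by Grönwall over $O(\sqrt n)$ steps each contributing an $\varepsilon^{-2}$ factor, and the $\sqrt{R^2/n+\varepsilon^2}$ factor reflects combining the two perturbation sources (learning rate $R/\sqrt n$-scale and denominator $\varepsilon$-scale) in $\ell_2$ norm.

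The main obstacle is the Grönwall step: one must set up the recursion for $\Delta_t$ with honest constants, in particular correctly bounding $|\pi_{C,T_s^C}^{\delta_C}(b)-\pi_{C,T_s^C}^{\theta_C}(b)|\leq \pi_{C,T_s^C}^{\theta_C}(b)(e^{\Delta_s}-1)\leq 2\Delta_s$ (valid for $\Delta_s$ small, which holds inductively on $[T_\varepsilon]$), and then ensuring the accumulated factor is genuinely $e^{1/\varepsilon^2}$ and not worse — i.e., that the per-step multiplicative contribution is $1+O(1/(\varepsilon^2 n))$ over $\leq T_\varepsilon\leq \sqrt n/(\varepsilon R)$ steps. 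A secondary technical point is that the cumulative pseudo-loss $\sum_{s\in F_t(C)}\hat g_{b,s}^{\theta_C}$ must be shown bounded on $[T_\varepsilon]$ using the very lower bound \eqref{eq_minoeps} we prove first, so the two parts are logically intertwined and should be organized as a single induction on $t\leq T_\varepsilon$ proving $\pi_{C,T_t^C}^{\theta_C}(a)\geq\varepsilon$ and the Lipschitz-type control simultaneously. Everything else — the softmax lower bound manipulation, the floor/minimum defining $T_\varepsilon$, the $\ell_\infty$-to-$\ell_2$ passage on $\Theta_C\subset[0,R]^2$ — is routine bookkeeping.
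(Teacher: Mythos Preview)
Your Gr\"onwall strategy for the Lipschitz bound \eqref{eq_lip_pi} is essentially the paper's argument: the paper works with $\|\pi_{C,T_t^C}^{\theta}-\pi_{C,T_t^C}^{\theta'}\|_2$ rather than the log-ratio $\Delta_t$, uses that $\mathrm{softmax}$ is $1$-Lipschitz in $\ell_2$, splits the exponent difference into a parameter-change term (bounded via the mean value theorem on $(\eta,\gamma)\mapsto \eta/(\gamma+q)$, giving the factor $c_\varepsilon=\varepsilon^{-2}\sqrt{R^2/n+\varepsilon^2}$) and a probability-change term (giving the $R_n/\varepsilon^2$ recursion coefficient), and then applies discrete Gr\"onwall over $T_t^C\leq T_\varepsilon\leq \sqrt n/R$ steps to produce exactly the $e^{1/\varepsilon^2}$ blow-up. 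Converting $\|\cdot\|_2$ to the log-ratio at the end via $1/\varepsilon$-Lipschitzness of $\log$ on $[\varepsilon,1]$ supplies the remaining $\varepsilon^{-1}$. So this half of your proposal is on target.

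The lower-bound argument, however, has a genuine gap. Your bound $\hat g_{b,s}^{\theta_C}\leq g_{b,s}/\pi_{C,T_s^C}^{\theta_C}(b)\leq K$ holds only at $s=1$; past initialization the right-hand side is $1/\pi_{C,T_s^C}^{\theta_C}(b)$, which is precisely what you are trying to control, so the argument is circular. Even if you feed in the inductive hypothesis $\pi\geq\varepsilon$ you get $\eta_n\max_b S_b\leq R t/(\sqrt n\,\varepsilon)$, and requiring this to be $\leq\log(1/(K\varepsilon))$ yields $t\leq \varepsilon\log(1/(K\varepsilon))\sqrt n/R$, which is strictly smaller than the stated $T_\varepsilon=\lfloor(\tfrac1K-\varepsilon)\sqrt n/R\rfloor$ for small $\varepsilon$ --- this is not a matter of ``cleaning up constants.'' The paper's device is different and sharper: by writing the one-step update out as a case split on $\{A_t=k\}$ versus $\{A_t\neq k\}$ and observing that each normalizing denominator $1-q+q\,e^{-\eta_n g/(\gamma_n+q)}\leq 1$, one gets directly
\[
\pi_{C,T_t^C+1}^{\theta}(k)\;\geq\;\pi_{C,T_t^C}^{\theta}(k)\,e^{-\eta_n/(\gamma_n+\pi_{C,T_t^C}^{\theta}(k))}\;\geq\;\pi_{C,T_t^C}^{\theta}(k)-\eta_n,
\]
the last step using $e^{-x}\geq 1-x$ and $\pi/(\gamma_n+\pi)\leq 1$. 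Summing this \emph{arithmetic} decrease gives $\pi_{C,T_t^C}^{\theta}(k)\geq 1/K-\eta_n T_t^C$, which is exactly the stated $T_\varepsilon$ with no circularity. This linear recursion is the missing idea; once you have it, the two parts of the proof are \emph{not} intertwined --- the lower bound is established first and then simply fed into the Gr\"onwall argument.
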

This shows that one can apply Theorem \ref{generalmodelselectiontheorem} with \texttt{Exp3-IX} as $\texttt{CellBandit}$ as long as we stop using observations after $\sqrt{n}$ time steps. The dependence in $\varepsilon$ in not very critical, since it has been proved at least for \texttt{Exp3} in \citep{aubert23}, that in practice, we may take $\varepsilon$ quite large (non-vanishing) with almost no impact on $T_\varepsilon$. This is a good thing since the theoretical dependency of $L_\varepsilon$ in $\varepsilon$ is quite pessimistic.

{\it Limitations.} 
This algorithm considers the horizon $n$ fixed in order to renormalize the parameterization. From Proposition~\ref{hypotheseexistenceupsilonExp3}, it follows that Theorem~\ref{generalmodelselectiontheorem} holds when only the first $\sqrt{n}$ observations are used in the MLE, but this in no way means that the estimator will perform poorly when based on all data. Taking $\sqrt{n}$ observations compounds on the usual issue that if the number of cells is large, only a small amount of data may be available for each cell, making estimation difficult. 

\subsection{Example 2: \texttt{Gradient Bandit}}\label{examplegradientbandits}

\texttt{Gradient Bandit} is another possible algorithm. We still choose for similar reason a parameterization in $\eta/\sqrt{n}$, which echoes the Robbins-Monro conditions \citep{robbins1951stochastic} even if  \citep{mei2023stochastic} proved convergence in a stochastic bandit framework even for non renormalized parameters.

\begin{algorithm}[H]
\caption{\texttt{Gradient Bandit} \citep{mei2023stochastic} as a $\texttt{CellBandit}$}\label{gradientbandits}
\begin{algorithmic}
\State \textbf{Inputs: } $n$ (Sample size), $\theta_C \in [r,R]$ (Parameter), $K$ (Number of actions).
\State \textbf{Initialization: } $\pi_{C,1}^{\theta_C}=  \left(\frac{1}{K}, \ldots, \frac{1}{K}\right)$.
\For {$t \in F_n(C)$}
\State Draw an action $A_t \sim \pi_{C,T_t^C}^{\theta_C}$ and receive a reward $g_{A_t,t} \in [ 0,1]$.
\State Update for all $a \in [K]$,
\begin{equation*}
    \pi_{C,T_t^C+1}^{\theta_C}(a) = \frac{\exp\left(-\frac{\theta_C}{\sqrt{n}} \sum_{s \in F_t(C)} \hat{g}_{a,s}^{\theta_C}\right)}{\sum_{b \in [K]} \exp\left(-\frac{\theta_C}{\sqrt{n}} \sum_{s \in F_t(C)} \hat{g}_{b,s}^{\theta_C}\right)} \quad \text{ where } \hat{g}_{b,s}^{\theta_C} =\left(\one_{A_s = b}-\pi_{C,T_s^C}^{\theta_C}(b)\right) g_{A_s,s}
\end{equation*}
\EndFor
\end{algorithmic}
\end{algorithm}

\begin{proposition}\label{existencehypotheseupsilongradientbrandits}
Let $\varepsilon \in (0,1)$ and let $\Theta_C\subset [0,R]^2$ with $R>0$. Then, $\mathtt{Gradient\; Bandit}$ can be a $\mathtt{CellBandit}(C)$ with parameterization $\theta_C\in \Theta_C$ that satisfies  \eqref{eq_minoeps} and \eqref{eq_lip_pi}, as soon as 
    \begin{equation*}
        T_\varepsilon := \left \lfloor \log\left(\sqrt{\frac{1}{K\varepsilon}}\right)\frac{\sqrt{n}}{R} \right \rfloor \wedge n \qquad \text{and} \qquad L_\varepsilon = \frac{\sqrt{2}}{R \varepsilon}\frac{\log\left(\sqrt{\frac{1}{K\varepsilon}}\right)}{\sqrt{K \varepsilon}}.
    \end{equation*}
\end{proposition}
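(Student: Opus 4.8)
The plan is to verify the two required properties \eqref{eq_minoeps} and \eqref{eq_lip_pi} directly from the update rule of \texttt{Gradient Bandit}, exploiting that for $t \le T_\varepsilon$ only a controlled number of updates (namely $T^C_t \le T_\varepsilon$ of them) have been applied, each of bounded magnitude. The key quantitative fact is that $\hat g^{\theta_C}_{b,s} = (\one_{A_s=b} - \pi^{\theta_C}_{C,T^C_s}(b))g_{A_s,s}$ satisfies $|\hat g^{\theta_C}_{b,s}| \le 1$ since $g \in [0,1]$ and $\pi \in [0,1]$; hence the exponent $\frac{\theta_C}{\sqrt n}\sum_{s\in F_t(C)}\hat g^{\theta_C}_{b,s}$ is bounded in absolute value by $\frac{R}{\sqrt n} T^C_t \le \frac{R T_\varepsilon}{\sqrt n}$.

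First I would establish the lower bound \eqref{eq_minoeps}. Writing $S_{b} = \sum_{s\in F_t(C)}\hat g^{\theta_C}_{b,s}$, we have
\[
\pi^{\theta_C}_{C,T^C_t+1}(a) = \frac{e^{-\frac{\theta_C}{\sqrt n}S_a}}{\sum_b e^{-\frac{\theta_C}{\sqrt n}S_b}} \ge \frac{e^{-\frac{\theta_C}{\sqrt n}S_a}}{K\, e^{\frac{\theta_C}{\sqrt n}\max_b|S_b|}} \ge \frac{1}{K}e^{-\frac{2\theta_C}{\sqrt n}\max_b|S_b|} \ge \frac{1}{K}e^{-2 R T_\varepsilon/\sqrt n}.
\]
Choosing $T_\varepsilon = \lfloor \log(\sqrt{1/(K\varepsilon)})\,\sqrt n/R\rfloor \wedge n$ gives $2R T_\varepsilon/\sqrt n \le \log(1/(K\varepsilon))$, so the right-hand side is $\ge \frac1K \cdot K\varepsilon = \varepsilon$, which is exactly \eqref{eq_minoeps}. (The $t=1$ case is the uniform initialization, which trivially satisfies the bound since $\varepsilon < 1/K$... actually $\varepsilon<1$, so one should check $1/K \ge \varepsilon$ is not needed here — the bound $\frac1K e^{-2RT_\varepsilon/\sqrt n}\ge\varepsilon$ already covers it once $T_\varepsilon$ is chosen as above; I'd just note the initialization step separately.)

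Next, for the Lipschitz bound \eqref{eq_lip_pi}, I would use the fact that a softmax is smooth in its logits and combine it with a Grönwall-type control on how the cumulative estimated losses $S_b = S_b(\theta_C)$ themselves depend on $\theta_C$ (this is the crux: $\hat g^{\theta_C}_{b,s}$ depends on $\theta_C$ through the past probabilities $\pi^{\theta_C}_{C,T^C_s}$, so errors compound across the at most $T_\varepsilon$ updates). Concretely: $\log\pi^{\theta_C}_{C,T^C_t+1}(a) = -\frac{\theta_C}{\sqrt n}S_a(\theta_C) - \log\sum_b e^{-\frac{\theta_C}{\sqrt n}S_b(\theta_C)}$, so
\[
\Bigl|\log\frac{\pi^{\delta_C}_{C,T^C_t+1}(a)}{\pi^{\theta_C}_{C,T^C_t+1}(a)}\Bigr| \le \frac{2}{\sqrt n}\sup_b \bigl|\delta_C S_b(\delta_C) - \theta_C S_b(\theta_C)\bigr|.
\]
One bounds $|\delta_C S_b(\delta_C) - \theta_C S_b(\theta_C)| \le |\delta_C-\theta_C|\sup_b|S_b| + R\sup_b|S_b(\delta_C)-S_b(\theta_C)|$, and then controls $\sup_b|S_b(\delta_C)-S_b(\theta_C)|$ inductively: each summand difference $|\hat g^{\delta_C}_{b,s} - \hat g^{\theta_C}_{b,s}| \le |\pi^{\delta_C}_{C,T^C_s}(b) - \pi^{\theta_C}_{C,T^C_s}(b)|$, and using $\pi \ge \varepsilon$ together with $|\pi^{\delta_C} - \pi^{\theta_C}| \le \pi^{\theta_C}(e^{|\log(\pi^{\delta_C}/\pi^{\theta_C})|}-1) \le$ (a multiple of) $|\log(\pi^{\delta_C}/\pi^{\theta_C})|$ on the range where the log-ratio is bounded, one gets a recursion of the form $a_{t} \le \alpha|\delta_C-\theta_C| + \beta \sum_{s<t} a_s$ for $a_t := \sup_b|\log(\pi^{\delta_C}_{C,T^C_t}(b)/\pi^{\theta_C}_{C,T^C_t}(b))|$. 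Discrete Grönwall then yields $a_t \le \alpha|\delta_C-\theta_C|\,e^{\beta t} \le \alpha|\delta_C-\theta_C|e^{\beta T_\varepsilon}$, and tracking the constants ($\alpha \sim T_\varepsilon/\sqrt n \sim \log(\sqrt{1/(K\varepsilon)})/R$, $\beta \sim 1/(\sqrt n\,\varepsilon)$, together with the $\sqrt 2$ coming from the two-term split) produces the claimed $L_\varepsilon = \frac{\sqrt2}{R\varepsilon}\frac{\log(\sqrt{1/(K\varepsilon)})}{\sqrt{K\varepsilon}}$.

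\textbf{Main obstacle.} The genuinely delicate part is the second step: the estimated gradients $\hat g^{\theta_C}_{b,s}$ are not Lipschitz in $\theta_C$ with an absolute constant because they feed back through the probabilities, so a naive bound blows up geometrically in the number of updates. The whole point is that capping the horizon at $T_\varepsilon = O(\sqrt n/R)$ keeps the number of updates — hence the exponent in the Grönwall bound — at $O(1)$ in $n$, so that $e^{\beta T_\varepsilon}$ stays bounded (this is where the specific form of $T_\varepsilon$, and the $e^{1/\varepsilon^2}$-type dependence seen in the \texttt{Exp3-IX} analogue, come from). Getting the constants to land exactly on the stated $L_\varepsilon$ requires care in the inductive estimate — in particular using $\pi \ge \varepsilon$ to linearize $|\pi^{\delta}-\pi^{\theta}|$ in terms of the log-ratio — but no new idea beyond the Grönwall argument already invoked for Assumption \ref{existencegronwallbis} in the main text.
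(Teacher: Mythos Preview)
Your plan is sound and matches the paper's at the level of ideas: Part~1 via the boundedness of the logits (the paper phrases it recursively as $\pi^{\theta}_{C,T^C_t+1}(k) \ge \pi^{\theta}_{C,T^C_t}(k)\,e^{-2\theta_n}$, which unwinds to your one-shot softmax inequality), and Part~2 via a discrete Grönwall argument over the at most $T_\varepsilon$ updates.

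Where your execution of Part~2 diverges from the paper's is in the quantity on which you recurse, and this is exactly what produces the stated $L_\varepsilon$. Instead of tracking $a_t=\sup_b|\log(\pi^{\delta}(b)/\pi^{\theta}(b))|$, the paper tracks $\|\pi^{\delta}_{C,T^C_t}-\pi^{\theta}_{C,T^C_t}\|_2$ and uses that $\softmax$ is $1$-Lipschitz in $\ell^2$; the triangle-inequality split of $h^{\delta}-h^{\theta}$ then gives
\[
\|\pi^{\delta}-\pi^{\theta}\|_2 \;\le\; \sqrt 2\,T_\varepsilon\,|\delta_n-\theta_n| \;+\; \theta_n\sum_{s}\|\pi^{\delta}_{s}-\pi^{\theta}_{s}\|_2,
\]
where the $\sqrt 2$ comes from $\|\hat g^{\theta}_{\cdot,s}\|_2^2=(1-\pi(A_s))^2+\sum_{j\ne A_s}\pi(j)^2\le 2$, not from your ``two-term split''. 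Grönwall then gives the exponential factor $e^{\theta_n T_\varepsilon}\le (K\varepsilon)^{-1/2}$, and the $1/\varepsilon$ enters only at the very end via the Lipschitz constant of $\log$ on $[\varepsilon,1]$. Your log-ratio recursion instead carries a Grönwall coefficient $2R/\sqrt n$, so the exponential factor becomes $(K\varepsilon)^{-1}$; also, your $\beta\sim 1/(\sqrt n\,\varepsilon)$ is off --- on $[\varepsilon,1]$ one has $|\pi^{\delta}-\pi^{\theta}|\le|\log(\pi^{\delta}/\pi^{\theta})|$ with no $1/\varepsilon$ needed. Your route therefore yields a valid Lipschitz constant but not the one in the statement; to land on the exact $L_\varepsilon$, switch to the $\ell^2$/softmax-Lipschitz recursion as the paper does.
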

This theoretical result has the same interpretation as before: the theoretical guarantees of Theorem \ref{generalmodelselectiontheorem} with \texttt{Gradient Bandit} as $\texttt{CellBandit}$ hold when we stop using observations after $\sqrt{n}$ time steps. In practice, we can use the observations up to time $n$ (see Section~\ref{section_numericalillustrations}).

\section{CODE AND DATA DESCRIPTION}\label{section_codeandatadescription}

The code for reproducing the figures in the article is available at the following link: https://github.com/JulienAubert3/ContextualBandits.

\subsection{Contextual Bandit Based Models}

\begin{table*}[htbp]
\centering
\caption{Description of models and their learning abilities}
\vspace{1em} 
\begin{tabular}{m{2cm} m{1cm} m{7cm} m{2cm}}
\toprule
\textbf{Model} & \textbf{Number of cells} & \textbf{Description of the cells} & \textbf{Learns categorization} \\ 
\midrule
\texttt{OneForAll} & 1 & One giant cell & No \\ 
\midrule
\texttt{ByShape} & 2 & One for circles, one for squares & Partly \\ 
\midrule
\texttt{ByPattern} & 2 & One for striped items, one for plain items & Partly \\ 
\midrule
\texttt{ByShapeExc} & 4 & Cells from \texttt{ByShape} model with exceptions isolated & Yes \\ 
\midrule
\texttt{ByPatternExc} & 4 & Cells from \texttt{ByPattern} model with exceptions isolated & Yes \\ 
\midrule
\texttt{OnePerItem} & 9 & One cell for each item & Yes \\ 
\bottomrule
\end{tabular} 
\label{table:models}
\end{table*}

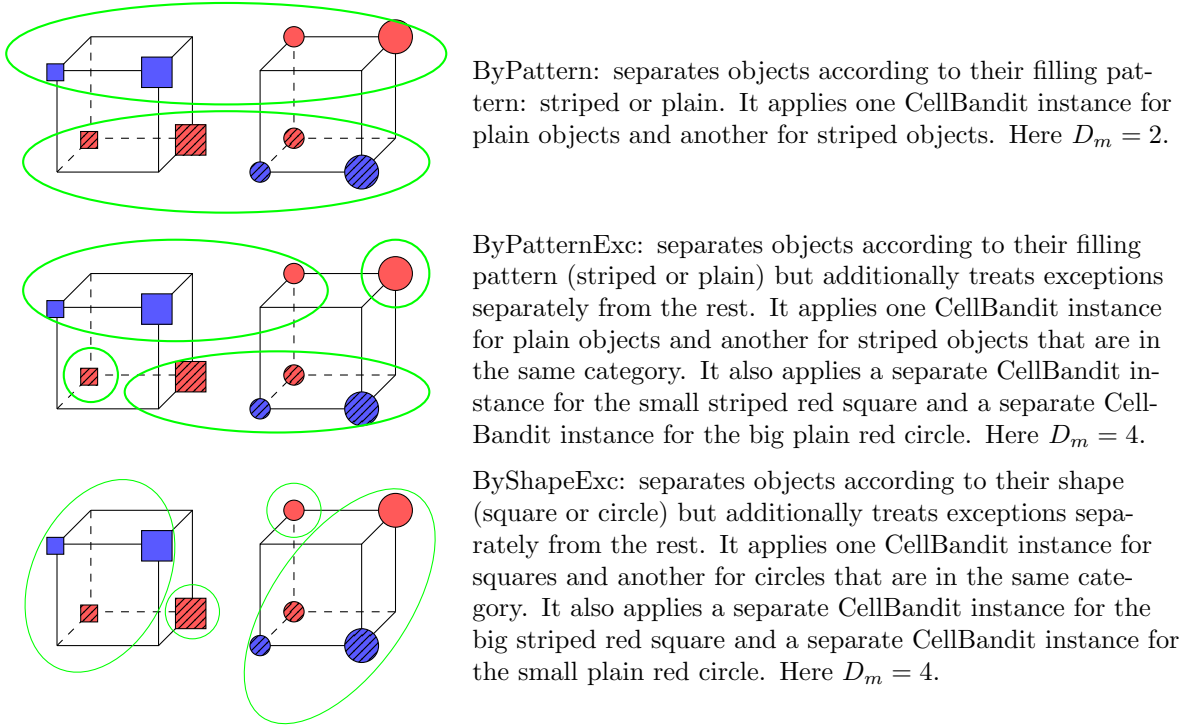
\begin{figure}[H]
\begin{tikzpicture}[scale=0.45]
        \draw (0,0) -- (3,0) -- (3,3) -- (0,3) -- (0,0);
        \draw (4,1) -- (4,4) -- (1,4);
        \draw [dashed] (4,1) -- (1,1);
        \draw [dashed] (1,4) -- (1,1);
        \draw [dashed] (0,0) -- (1,1);
        \draw (3,0) -- (4,1);
        \draw (3,3) -- (4,4);
        \draw (0,3) -- (1,4);

        \draw (6,0) -- (9,0) -- (9,3) -- (6,3) -- (6,0);
        \draw (7,4) -- (10,4);
        \draw [dashed] (7,1) -- (7,4);
        \draw (9,0) -- (10,1);
        \draw (9,3) -- (10,4);
        \draw (10,4) -- (10,1);
        \draw [dashed] (10,1) -- (7,1);
        \draw [dashed] (6,0) -- (7,1);
        \draw (6,3) -- (7,4);

        \draw[preaction={fill, blue!65!white}, pattern=north east lines, pattern color=black] (6,0) circle [radius=0.3];
        \draw[preaction={fill, blue!65!white}, pattern=north east lines, pattern color=black] (9,0) circle [radius=0.5];
        \draw[preaction={fill, red!65!white}, pattern=north east lines, pattern color=black] (7,1) circle [radius=0.3];
        \draw[preaction={fill, red!65!white}] (7,4) circle [radius=0.3];
        \draw[preaction={fill, red!65!white}] (10,4) circle [radius=0.5];
        \draw[preaction={fill, red!65!white}, pattern = north east lines] (0.7,0.7) rectangle (1.2,1.2);
        \draw[preaction={fill, red!65!white}, pattern = north east lines] (3.5,0.5) rectangle (4.4,1.4);
        \draw[preaction={fill, blue!65!white}] (2.5,2.5) rectangle (3.4,3.4);
        \draw[preaction={fill, blue!65!white}] (-0.3,2.7) rectangle (0.2,3.2);

        \node[text width=9.5cm, anchor=west] at (12,2) {ByPattern: separates objects according to their filling pattern: striped or plain. It applies one CellBandit instance for plain objects and another for striped objects. Here $D_m = 2$.};

        \draw[green, thick] (5,0.3) ellipse [x radius=6, y radius=1.5];

        \draw[green, thick] (5,3.5) ellipse [x radius=6.5, y radius=1.5];

    \begin{scope}[shift={(0,-7)}]
        \draw (0,0) -- (3,0) -- (3,3) -- (0,3) -- (0,0);
        \draw (4,1) -- (4,4) -- (1,4);
        \draw [dashed] (4,1) -- (1,1);
        \draw [dashed] (1,4) -- (1,1);
        \draw [dashed] (0,0) -- (1,1);
        \draw (3,0) -- (4,1);
        \draw (3,3) -- (4,4);
        \draw (0,3) -- (1,4);

        \draw (6,0) -- (9,0) -- (9,3) -- (6,3) -- (6,0);
        \draw (7,4) -- (10,4);
        \draw [dashed] (7,1) -- (7,4);
        \draw (9,0) -- (10,1);
        \draw (9,3) -- (10,4);
        \draw (10,4) -- (10,1);
        \draw [dashed] (10,1) -- (7,1);
        \draw [dashed] (6,0) -- (7,1);
        \draw (6,3) -- (7,4);

        \draw[preaction={fill, blue!65!white}, pattern=north east lines, pattern color=black] (6,0) circle [radius=0.3];
        \draw[preaction={fill, blue!65!white}, pattern=north east lines, pattern color=black] (9,0) circle [radius=0.5];
        \draw[preaction={fill, red!65!white}, pattern=north east lines, pattern color=black] (7,1) circle [radius=0.3];
        \draw[preaction={fill, red!65!white}] (7,4) circle [radius=0.3];
        \draw[preaction={fill, red!65!white}] (10,4) circle [radius=0.5];
        \draw[preaction={fill, red!65!white}, pattern = north east lines] (0.7,0.7) rectangle (1.2,1.2);
        \draw[preaction={fill, red!65!white}, pattern = north east lines] (3.5,0.5) rectangle (4.4,1.4);
        \draw[preaction={fill, blue!65!white}] (2.5,2.5) rectangle (3.4,3.4);
        \draw[preaction={fill, blue!65!white}] (-0.3,2.7) rectangle (0.2,3.2);

         \draw[green, thick] (6.5,0.5) ellipse [x radius=4.5, y radius=1.2];

         \draw[green, thick] (10,4) circle [radius=1];

         \draw[green, thick] (1,1) circle [radius=0.8];

        \draw[green, thick] (3.5,3.5) ellipse [x radius=4.5, y radius=1.5];

        \node[text width=9.5cm, anchor=west] at (12,2) {ByPatternExc: separates objects according to their filling pattern (striped or plain) but additionally treats exceptions separately from the rest. It applies one CellBandit instance for plain objects and another for striped objects that are in the same category. It also applies a separate CellBandit instance for the small striped red square and a separate CellBandit instance for the big plain red circle. Here $D_m = 4$.};
    \end{scope}

    \begin{scope}[shift={(0,-14)}]
        \draw (0,0) -- (3,0) -- (3,3) -- (0,3) -- (0,0);
        \draw (4,1) -- (4,4) -- (1,4);
        \draw [dashed] (4,1) -- (1,1);
        \draw [dashed] (1,4) -- (1,1);
        \draw [dashed] (0,0) -- (1,1);
        \draw (3,0) -- (4,1);
        \draw (3,3) -- (4,4);
        \draw (0,3) -- (1,4);

        \draw (6,0) -- (9,0) -- (9,3) -- (6,3) -- (6,0);
        \draw (7,4) -- (10,4);
        \draw [dashed] (7,1) -- (7,4);
        \draw (9,0) -- (10,1);
        \draw (9,3) -- (10,4);
        \draw (10,4) -- (10,1);
        \draw [dashed] (10,1) -- (7,1);
        \draw [dashed] (6,0) -- (7,1);
        \draw (6,3) -- (7,4);

        \draw[preaction={fill, blue!65!white}, pattern=north east lines, pattern color=black] (6,0) circle [radius=0.3];
        \draw[preaction={fill, blue!65!white}, pattern=north east lines, pattern color=black] (9,0) circle [radius=0.5];
        \draw[preaction={fill, red!65!white}, pattern=north east lines, pattern color=black] (7,1) circle [radius=0.3];
        \draw[preaction={fill, red!65!white}] (7,4) circle [radius=0.3];
        \draw[preaction={fill, red!65!white}] (10,4) circle [radius=0.5];
        \draw[preaction={fill, red!65!white}, pattern = north east lines] (0.7,0.7) rectangle (1.2,1.2);
        \draw[preaction={fill, red!65!white}, pattern = north east lines] (3.5,0.5) rectangle (4.4,1.4);
        \draw[preaction={fill, blue!65!white}] (2.5,2.5) rectangle (3.4,3.4);
        \draw[preaction={fill, blue!65!white}] (-0.3,2.7) rectangle (0.2,3.2);

        \draw[green](7,4) circle [radius = 0.8];
        \draw[green](4,1) circle [radius = 0.8];
        \draw[green, rotate around={-25:(2,1.5)}] (1.1,1.7) ellipse [x radius=2, y radius=3];
        \draw[green, rotate around={-35:(7,1.5)}] (8.3,2) ellipse [x radius=2, y radius=4];

        \node[text width=9.5cm, anchor=west] at (12,2) {ByShapeExc: separates objects according to their shape (square or circle) but additionally treats exceptions separately from the rest. It applies one CellBandit instance for squares and another for circles that are in the same category. It also applies a separate CellBandit instance for the big striped red square and a separate CellBandit instance for the small plain red circle. Here $D_m = 4$.};
    \end{scope}
\end{tikzpicture}
\caption{Design of models ByPattern, ByPatternExc and ByShapeExc. Exceptions are determined by the categorization rule in Figure 1b of the article.}\label{fig:descriptionmodels}
\end{figure}

\begin{figure}[htbp]
    \centering
    \begin{subfigure}[b]{0.48\textwidth}
        \centering
        \includegraphics[width=1\textwidth]{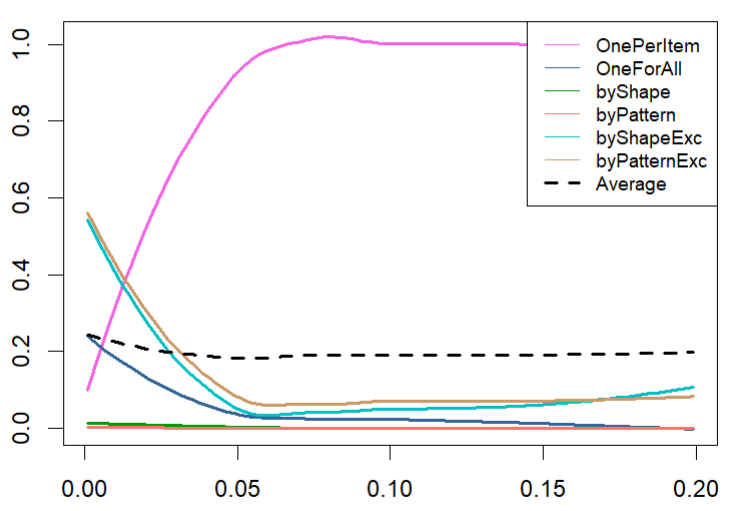}
    \end{subfigure}
    \hfill
    \begin{subfigure}[b]{0.48\textwidth}
        \centering
        \includegraphics[width=1\textwidth]{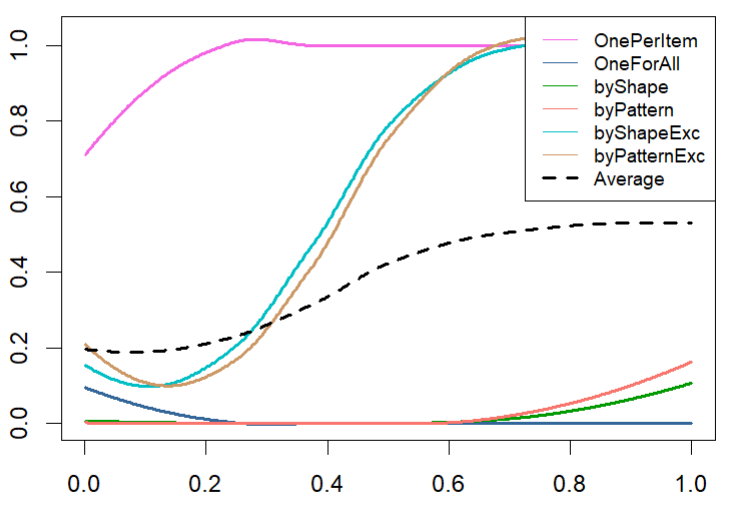}
    \end{subfigure}
    \caption{Errors of the penalized log-likelihood criterion as a function of the tuning parameter $c$, where $c$ is such that $\pen(m)=c\log(n)^2D_m/n$ is a function of $c$. Both figures show the percentage of mismatches between $\hat{m}$ and the simulated model over 100 simulations. The same simulations were used for both figures. The evolution of errors as a function of $c$ is logical in relation to the value of $D_m$.}
    \label{fig:threefiguresbis}
\end{figure}

\subsection{Additional Plots}\label{sec:additionalplots}

In this section, we provide additional plots (like Figure~\ref{fig:boxplot_holdout_pmle}) for two values of the tuning parameter $c$.
\begin{itemize}
    \item The value of $c$ minimizing the average of all errors for the penalized log likelihood criterion. As noted in Figure~\ref{fig:choiceofc}, this corresponds to the choice $c = 0.053$. This choice strongly penalizes the model OnePerItem, which is almost never selected, even when it should be.
    \begin{figure}
        \centering
        \includegraphics[width=0.7\linewidth]{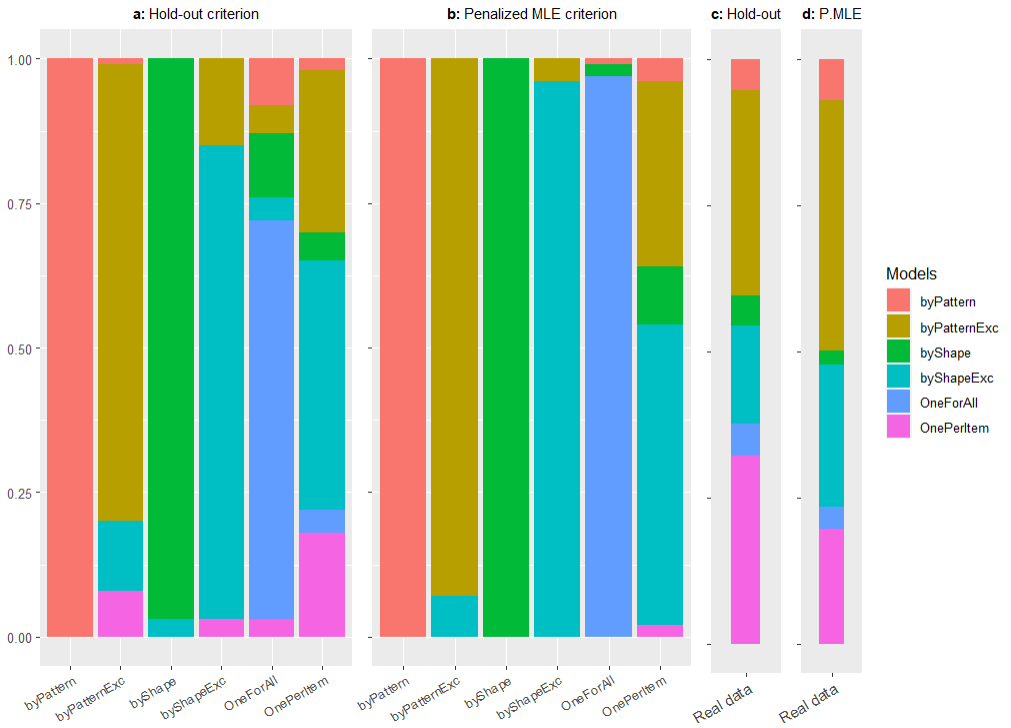}
        \caption{Penalized log-likelihood procedure and Hold-out on synthetic and real data, for the choice $N=250$ and $c=0.53$.}
        \label{fig:boxplot_c053}
    \end{figure}
    \item The value of $c$ corresponding to the AIC criterion, that is: $c \log(n)^2 = 1$.
    \begin{figure}
        \centering
        \includegraphics[width=0.7\linewidth]{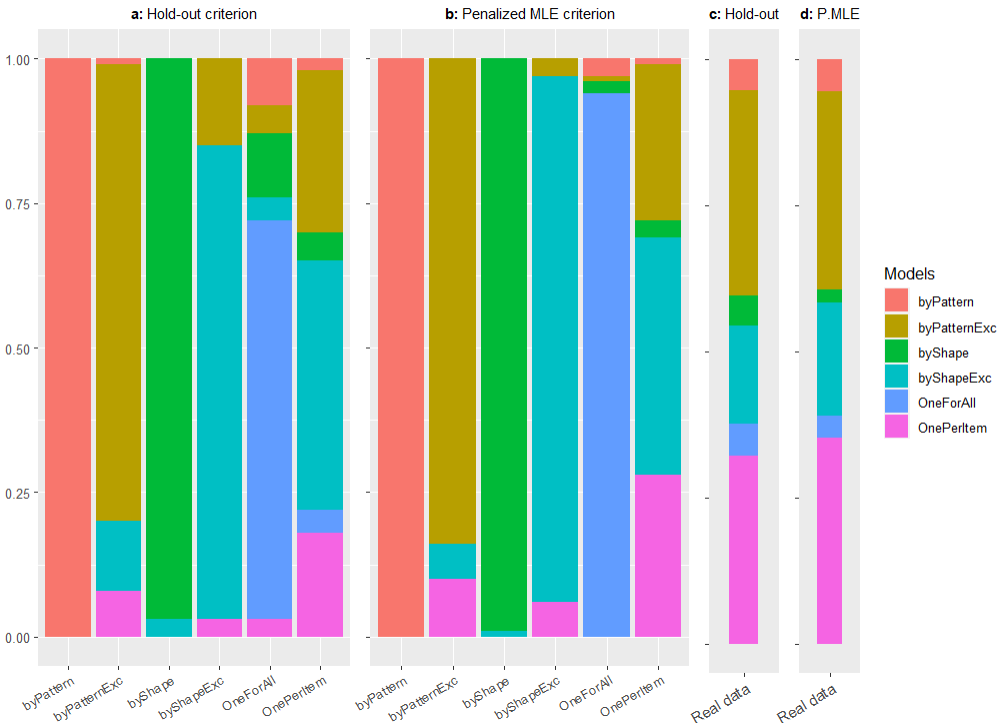}
        \caption{Penalized log-likelihood procedure and Hold-out on synthetic and real data, for the choice $N=250$ and $c \log (n)^2 = 1$.}
        \label{fig:boxplot_cAIC}
    \end{figure}
\end{itemize}

\subsection{Difference Between The Empirical and Theoretical Threshold}

We have not been able to run the simulation with \texttt{Exp3-IX}. Indeed, as also shown practically in \citep{aubert23} for the simple \texttt{Exp3} case, the probabilities $\pi_{C,T_t^C}^{\theta_C}$ can go to zero extremely fast. When the individual learns over an horizon $n = 500$, only $\sqrt{n}=22$ observations would be usable and the estimations even of just the MLE is unreliable. So all the simulations were performed with $n=500$ and for \texttt{Gradient Bandit} as $\texttt{CellBandit}$, for all the 6 models described in Table~\ref{table:models}. The way synthetic data are generated can be found in Appendix~\ref{section_detailsnumerical}.

Figure~\ref{fig:boxplot_estimationerror} shows that despite the conservative theoretical bound given in Proposition \ref{existencehypotheseupsilongradientbrandits} with $T_\varepsilon$ of order $\sqrt{n}$, \texttt{Gradient Bandit} provides good results when the MLE is applied to all $n$ data points. 
The truncation at $\sqrt{n}\simeq 20$ required in the theoretical results does not seem necessary in practice, and actually looks suboptimal for \texttt{Gradient Bandit}.

\subsection{Details on Numerical Illustrations}\label{section_detailsnumerical}

In this section, we give details on the numerical illustrations of Section~\ref{section_numericalillustrations}. The images were obtained using the \texttt{ggplot2} package of \texttt{R}. Two types of analyses were conducted, 
on synthetic data and 
on real data.

\paragraph{On Synthetic Data.} The simulations of the  synthetic data helped us calibrate the tuning parameters choices for the hold-out and the penalized log-likelihood procedure. In Section~\ref{section_holdout}, the parameter $N$ must be calibrated for choosing the correct training data sample size. In Section~\ref{section_penalizedmle}, as said in the \textit{Limitations}, since the constant $c$ in the penalty term is not known a priori, it must be calibrated as well. To do this, we follow the guidelines of \cite{wilson2019ten}. The procedure is as follows.

\begin{itemize}
    \item[1)]{Sample size:} $n = 500$. It is of the same order of magnitude as real data.
    
    \item[2)]{Objects generation:} periodic sequence of the nine objects repeated through the $n$ trials. We generate a sequence of objects following the same structure as in \citep{mezzadri2022investigating}. Due to the periodic pattern, each object is therefore seen roughly the same number of times for all time $t$. 
    \item[3)]{Actions generation:}  for each model in Table~\ref{table:models}, we generated $100$ sequences of actions called synthetic agents with respect to the procedure given in Algorithm~\ref{protocolcontextualbandits} with \texttt{Gradient Bandit} as \texttt{CellBandit}.  The parameters $\theta_C$ we used were the same for each model and the same for each cell, equal to $0.03\times \sqrt{n}$, except for the \texttt{OnePerItem} model where we changed slightly the values of the parameter in each cell to make the model identifiable. For $m=$\texttt{OnePerItem}, we took $\theta^m = ((0.03/10+k\times 0.007)\times \sqrt{n})_{k \in \{0,\ldots,8\}}$ following the same order of presentation of the sequence of objects defined earlier.
    \item[4)]{Parameters estimation: } we then fitted each of the six models on all the synthetic agents generated data, and we estimated the associated parameters using (MLE) and the package \texttt{DEoptim} in \texttt{R} with range $(0, 1)$ for the parameters $\theta_C/\sqrt{n}$  
    and with the default parameters
    and a maxiter value equal to $20$. We then computed the log-likelihood associated to the estimated parameters. We did this for the likelihood stopped at time $N \in \{25,50,100,150,200,250,n\}$.
    \begin{itemize}
        \item[-] With such data, we were able to plot Figure~\ref{fig:boxplot_estimationerror} and Figure~\ref{fig:choiceofN} with the hold-out criterion defined in Section~\ref{section_holdout}. In Figure~\ref{fig:boxplot_estimationerror}, we computed the average error made in each cell by the model fitting of the same model that generated the data. For the Figure~\ref{fig:choiceofN}, we simply counted the number of times each model verified the hold-out criterion for all the synthetic agents and for each model that generated the data.
        \item[-] With the log-likelihood stopped at time $n$ for the estimated parameters, we were able to plot Figure~\ref{fig:choiceofc} according to the penalized log-likelihood criterion defined in~\eqref{eq_penalizedloglikelihood}. In the same way we counted the number of times each model satisfied the penalized log-likelihood criterion for all the synthetic agents and for each model that generated the data.
    \end{itemize}
    \item[5)]{Choice of the parameters $N$ and $c$ for the real data}: Given the results of Figure~\ref{fig:choiceofN} and Figure~\ref{fig:choiceofc}, we chose to use $N$ to be equal to half of the data length and $c=0.012$ to account for a reasonable error for   model \texttt{OnePerItem}, even if in average $c=0.04$ gives better results. With this data, we were able plot the two first chart of Figure~\ref{fig:boxplot_holdout_pmle}.
\end{itemize}

\paragraph{On Real Data.} For the real experimental data, here is the process we followed.
\begin{itemize}
    \item[1)] Sample size: dependent on each individual, the average data sample size is $300$.
    \item[2)] Objects and Actions: we collected for each individual their objects sequence and associated choices.
    \item[3)] For each individual, we fitted the $6$ models and estimated the parameters associated to each model. To perform hold-out and penalized log-likelihood model selection, we used the parameters $N$ and $c$ chosen thanks to the synthetic data. With this data, we were able to plot Figure~\ref{fig:boxplot_holdout_pmle}.
\end{itemize}

\subsection{About the Code and the Data}

In this section, we give explanations about the code and data (e.g. computation time, link between code and data). All the data, code and images used are provided in the zip file associated to submission, called \texttt{ContextualBanditsCode}. We run all the simulations in $\texttt{R}$ and used the following packages: \texttt{DEoptim}, \texttt{crayon}, \texttt{magrittr}, \texttt{dplyr}, \texttt{tidyr}, \texttt{ggplot2}, \texttt{gridExtra}.

For the sample size we chose, all the simulations can run on a PC in a reasonable time of execution (detailed hereafter). Overall, computing the different data and running the code took approximately 6 hours excluding the time needed for the real data. The biggest file is 
373 kilobytes. The PC we used was a Gigabyte - AORUS 15G XC, with processor: Intel(R) Core(TM) i7-10870H CPU \@ 2.20GHz, 2208 MHz, 8 cores, 16 logical processors.

\paragraph{On Real Data.} As mentioned earlier, we could not provide the experimental data used in \citep{mezzadri2022investigating}, since they have already been published in another paper and we do not want to break the ethic agreement. We can only provide the results and estimated data resulting from the experimental data. Note however that the procedures to obtain the following RData files are the same as for the synthetic data which we detail later. The three RData files on the real data are \texttt{realdatamle}, \texttt{realdata\_holdout\_trainingset}, \texttt{realdata\_holdout\_testingset}.
\begin{itemize}
    \item \texttt{realdatamle} is a list of estimators and associated log-likelihood for each model and each individual.
    \item \texttt{realdata\_holdout\_trainingset} is a list of estimators and associated log-likelihood on the first half of the sample for each model and each individual.
    \item \texttt{realdata\_holdout\_testingset} is a list of log-likelihood on the testing part of the sample for each individual and each model with parameters estimated in \texttt{realdata\_holdout\_trainingset}.
\end{itemize}

\paragraph{On Synthetic Data.} All the synthetic data obtained in the other files can be computed by running the code \texttt{ContextualbanditsCodebis}. The code is commented and starts with a list of functions which are necessary to run the different procedures. In the code, we explain how the different procedures lead to the following list of files. We have commented with \# the parts of the code that would modify the files so that running the code now would give the same images as the ones used in the article. If one wants to generate new data, one should uncomment these lines of code.  However, we advise the reader that some of the procedures take a certain time, and would recommend not to do so. We detail hereafter the content of the different csv and RData files and the time it took to run them. 
\begin{itemize}
    \item To begin with, we generate a csv file called \texttt{databis\_500.csv} of $500$ trials and associated list of objects in the file \texttt{synthetic\_data}.
    \item In the same \texttt{synthetic\_data} file we create the different model files and within each of them generate $100$ csv files of actions, rewards, and objects according to the procedure described in~\ref{section_detailsnumerical}. This procedure takes around 5 minutes. Then, we begin to compute the MLE for each of the synthetic data csv file.
    \item \texttt{Datalikelihood100agents6modeletabis500horizon} is a nested list of estimators, associated log-likelihood stopped at time $n$ for each model fitted to the data of all the synthetic agents. Computing these data took approximately 2 hours . 
    \item \texttt{holdoutbis100agents6models\_horizon\_20} is the same nested list of estimators but computed on a log-likelihood stopped at time $N=20$. Computing these data took approximately 10 minutes .
    \item \texttt{holdoutbis100agents6models\_horizon\_50} is the same nested list of estimators but computed on a log-likelihood stopped at time $N=50$. Computing these data took approximately 20 minutes .
    \item \texttt{holdoutbis100agents6models\_horizon\_100} is the same nested list of estimators but computed on a log-likelihood stopped at time $N=100$. Computing these data took approximately 30 minutes .
    \item \texttt{holdoutbis100agents6models\_horizon\_150} is the same nested list of estimators but computed on a log-likelihood stopped at time $N=150$. Computing these data took approximately 40 minutes .
    \item \texttt{holdoutbis100agents6models\_horizon\_200} is the same nested list of estimators but computed on a log-likelihood stopped at time $N=200$. Computing these data took approximately 50 minutes .
    \item \texttt{holdoutbis100agents6models\_horizon\_250} is the same nested list of estimators but computed on a log-likelihood stopped at time $N=250$. Computing these data took approximately 1 hour .
    \item \texttt{alldataholdoutbis} is a nested list of errors on estimation for the training data and log-likelihood function on the testing data for all synthetic agents, all models and all training data sample size $N \in \{20,50,100,150,200,250\}$. Computing these data took approximately 10 minutes .
\end{itemize}

\section{METALEARNING}\label{section_metalearning}

By looking at the experiment in Section~\ref{section_numericalillustrations}, it is hard to believe that learners start directly with a model like \texttt{ByPatternExc}. It is more likely that they start with a model like \texttt{ByPattern} and realize that there are too many exceptions, so that they progressively end up with \texttt{ByPatternExp}. One way to model this progressive switch from one strategy to the other is to use bandits with expert advice. In this framework, there is a finite set $E$ of randomized policies called experts, $(\xi_{j,t}(.))_{t\in[n]}$, probabilities over the set of actions $[k]$, that are modeling the different strategies the learner might have. No assumptions are made here on the way experts compute their randomized predictions: they might be the result of contextual bandits like \texttt{ByPattern} or more generally any kind of computations that depend on the learner's past choices. 
\texttt{Exp4} (see Algorithm~\ref{alg_expertadvice}) is an adaptation of \texttt{Exp3} to this case (see \citep{lattimore_szepesvári_2020} for regret convergence and variants such as \texttt{Exp4.P} \citep{beygelzimer2011contextual}). 

\begin{algorithm}[H]
\caption{Exp4 \citep{BubeckBianchi}}\label{alg_expertadvice}
\begin{algorithmic}
\State \textbf{Inputs: } $n$ (Sample size), $\theta \in [r,R]$ (Parameter), $K$ (Number of actions), $E$ (Set of experts).
\State \textbf{Initialization: } $q_{E,1}^{\theta}$ uniform distribution over the experts $E$.
\For {$t=1,2,\ldots$}
\State Receive experts advice $a\mapsto\xi_{j,t}(a)$  probability distribution over $[K]$ for all $j$.
\State Draw an action $A_t \sim \pi_{E,t}^{\theta}(.) = \sum_{j\in E} q_{E,t}^\theta(j) \xi_{j,t}(.)$ and receive a reward $g_{A_t,t} \in [0,1]$.
\State Update for all $j\in E$,
\begin{equation*}
    q_{E,t+1}^\theta(j) = \frac{\exp\left(-\frac{\theta}{\sqrt{n}} \sum_{s=1}^t \hat{y}_{j,s}^\theta\right)}{ \sum_{i \in E} \exp\left(-\frac{\theta}{\sqrt{n}} \sum_{s=1}^t \hat{y}_{i,s}^\theta\right)}\quad\mbox{with}\quad \hat{y}_{i,s}^\theta =  \sum_{a \in[K]} \xi_{i,t}(a) \frac{g_{a,s}}{\pi_{E,s}^\theta(a)}\one_{A_s=a}
\end{equation*}
\EndFor
\end{algorithmic}
\end{algorithm}

In this setting, a model $m$ is defined by a finite set $E_m$ representing the different experts/strategies the learner is learning from. Since there is only one parameter by model (namely $\theta\in [r,R]$), the  penalty plays no role, nor the calibration of $c$. So there is no need for hold-out and one can prove that the model with the smallest log-likelihood on the first $T_\varepsilon\sim \sqrt{n}$ time steps satisfies an oracle inequality if $\Mcal$ is finite, as well as $|F| := \max_{m \in \Mcal} |E_m|$. This shows that one can select the set $E_m$ of strategies which is the closest to reality among the sets of strategies that are put in competition. 

{\it Limitations.} The only limitation with this approach is that we need at first to know the eventual parameters of each strategy. Again we could split the data in a hold-out fashion to make the injection of estimated parameters possible. However, it would be then nearly impossible to correctly estimate the parameters of strategies that are not used at the beginning of the learning. We refer to \citep{binz2023meta} for other methods in meta-learning for cognition.



Since we work in a more general setting and not simply with contexts, we assume that we observe a process $(A_t)_{1 \leq t \leq n}$ adapted to a general filtration $(\Fcal_t)_{1 \leq t \leq n}$ where for all $t \in [n]$, $A_t \in [K]$. In particular, for all $t \in [n]$, $\Fcal_t$ is generated by the past actions $(A_1,\ldots,A_{t})$ and any other additional variable which might be observed or not -- such as a context at time $t+1$ for instance. We write, for all $a \in [K]$, and all $t \in [n]$
\begin{equation*}
    p_t^\star(a) = \Pbb(A_t = a | \Fcal_{t-1})
\end{equation*}
the true conditional distribution we wish to estimate. 

Additionally, we consider the family of models $\{\pi_{E_m}^{\theta^m} = (\pi_{E_m,t}^{\theta^m})_{t \in [n]}, m \in \Mcal\}$ where $\Mcal$ is a finite set, $\theta^m \in [r,R]$, and for all $m \in \Mcal$, $(\pi_{E_m,t}^{\theta^m})_{t \in [n]}$ is the sequence of mixtures of probability distributions over actions defined recursively in Algorithm~\ref{alg_expertadvice} for the finite set $E_m$. Each model $m$ is thus defined by a set of experts $(\xi_{j,t}(.))_{j \in E_m,t\in[n]}$ where for all $m \in E_m$, $t \in [n]$, $\xi_{j,t}$ can be any probability distribution over arms $[K]$ as long as it is measurable with respect to $\Fcal_{t-1}$.

Let $|F| := \max_{m \in \Mcal} |E_m|$. The goal is to select the set $E_m$ of policies -- that we see as learning strategies --  with which the individual learns to learn. This approach is again based on partial log-likelihood $\ell_{n}(\pi_{E_m}^{\theta^m})$ of the observations $(A_1,\ldots,A_n)$ defined by
\begin{equation}\label{eq_loglikelihood_expert}
\ell_{n}(\pi_{E_m}^{\theta^m}) = \sum_{t=1}^{n} \log\left(\pi_{E_m,t}^{\theta^m}(A_t)\right).
\end{equation}

To achieve a model selection result, we need the following assumption on the policies given by the experts.

\begin{Assumption}\label{hyp_expert_model}
There exists $\rho > 0$, such that almost surely, for all $m \in \Mcal$, for all $t \in [n]$ and all $i \in [K]$, $\sum_{j \in E_m} \xi_{j,t}(i) \geq \rho.$
\end{Assumption}

Then, with Assumption~\ref{hyp_expert_model}, we can deduce a result similar to Propositions~\ref{hypotheseexistenceupsilonExp3} and~\ref{existencehypotheseupsilongradientbrandits} because of the very structure of Algorithm~\ref{alg_expertadvice} which mimics \texttt{Exp3}.

\begin{proposition}\label{prop_expert_minorant}
    Assume Assumption~\ref{hyp_expert_model} holds. Let $\rho$ be the associated constant. Let $\varepsilon \in (0,\rho/|F|)$, and let
    \begin{equation*}
        T_\varepsilon = \left \lfloor \left( \frac{1}{|F|}-\frac{\varepsilon}{\rho}\right) \frac{\sqrt{n}}{R} \right \rfloor \wedge n \qquad \text{and} \qquad L_\varepsilon=\frac{1}{R \varepsilon^2} \exp\left(\frac{1}{\varepsilon^2}\right).
    \end{equation*}
    Then, for all $t \in [T_\varepsilon]$, for all $m \in \Mcal$, $\theta^m,\delta^m \in [r,R]$, for all $k \in [K]$,
    \begin{equation*}
        \pi_{E_m,t}^{\theta^m}(k) \geq \varepsilon \qquad \text{and} \qquad \sup_{k \in [K]} \left| \log\left(\frac{\pi_{E_m,t}^{\theta^m}(k)}{\pi_{E_m,t}^{\delta^m}(k)} \right) \right| \leq L_\varepsilon |\theta^m - \delta^m|.
    \end{equation*}
\end{proposition}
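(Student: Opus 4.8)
The plan is to prove the two assertions by a single forward induction on $t\in[T_\varepsilon]$, establishing the uniform lower bound $\pi_{E_m,t}^{\theta^m}(k)\geq\varepsilon$ first and then the Lipschitz estimate, which will reuse it. Fix $m\in\Mcal$ and, to lighten notation, write $q_t^\theta=q_{E_m,t}^{\theta}$, $\pi_t^\theta=\pi_{E_m,t}^{\theta}$, and $\hat y_{j,t}^\theta=\xi_{j,t}(A_t)g_{A_t,t}/\pi_t^\theta(A_t)\geq 0$.

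\textbf{Lower bound.} The key is that the \texttt{Exp4} update of Algorithm~\ref{alg_expertadvice} is the multiplicative exponential-weights update $q_{t+1}^\theta(j)=q_t^\theta(j)e^{-\frac{\theta}{\sqrt n}\hat y_{j,t}^\theta}\big/\sum_{i\in E_m}q_t^\theta(i)e^{-\frac{\theta}{\sqrt n}\hat y_{i,t}^\theta}$. Since $g_{A_t,t}\leq 1$ and $\pi_t^\theta(A_t)=\sum_i q_t^\theta(i)\xi_{i,t}(A_t)\geq q_t^\theta(j)\xi_{j,t}(A_t)$, one gets the crucial bound $q_t^\theta(j)\hat y_{j,t}^\theta\leq 1$; combining $e^{-x}\geq 1-x$ in the numerator with $\sum_i q_t^\theta(i)e^{-\frac{\theta}{\sqrt n}\hat y_{i,t}^\theta}\leq 1$ in the denominator yields the one-step drift $q_{t+1}^\theta(j)\geq q_t^\theta(j)-\theta/\sqrt n$. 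Telescoping from the uniform initialization gives $q_t^\theta(j)\geq\frac{1}{|E_m|}-(t-1)\frac{\theta}{\sqrt n}\geq\frac{1}{|F|}-(t-1)\frac{R}{\sqrt n}$, and Assumption~\ref{hyp_expert_model} then produces $\pi_t^\theta(k)=\sum_j q_t^\theta(j)\xi_{j,t}(k)\geq\rho\big(\frac{1}{|F|}-(t-1)\frac{R}{\sqrt n}\big)$. The right-hand side is $\geq\varepsilon$ exactly when $t-1\leq\big(\frac{1}{|F|}-\frac{\varepsilon}{\rho}\big)\frac{\sqrt n}{R}$, i.e. for all $t\leq T_\varepsilon$, which is the first claim; in particular the $\pi_s^\theta(A_s)$ entering the $\hat y$'s stay positive on this range, so the induction is well posed.

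\textbf{Lipschitz estimate.} Here I would run a discrete Grönwall argument on $\Delta_t:=\sup_{k\in[K]}\big|\log\big(\pi_t^{\theta^m}(k)/\pi_t^{\delta^m}(k)\big)\big|$. Since $\pi_t^\theta(k)$ is a convex combination of the $q_t^\theta(j)$'s with parameter-free weights $\xi_{j,t}(k)$, one has $\Delta_t\leq\sup_j\big|\log\big(q_t^{\theta^m}(j)/q_t^{\delta^m}(j)\big)\big|$. Writing $q_t^\theta(j)$ explicitly in terms of the cumulative estimated losses $L_{j,t-1}^\theta=\sum_{s<t}\hat y_{j,s}^\theta$ and using that $x\mapsto\log\sum_i e^{x_i}$ is $1$-Lipschitz for $\|\cdot\|_\infty$, this is at most $2\sup_i\frac{1}{\sqrt n}\big|\theta^m L_{i,t-1}^{\theta^m}-\delta^m L_{i,t-1}^{\delta^m}\big|$. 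Splitting $\theta^m L^{\theta^m}-\delta^m L^{\delta^m}=(\theta^m-\delta^m)L^{\theta^m}+\delta^m(L^{\theta^m}-L^{\delta^m})$ and invoking the first part (which gives $L_{i,t-1}^{\theta^m}\leq(t-1)/\varepsilon$, and $|\hat y_{i,s}^{\theta^m}-\hat y_{i,s}^{\delta^m}|\leq\varepsilon^{-2}|\pi_s^{\theta^m}(A_s)-\pi_s^{\delta^m}(A_s)|\leq\varepsilon^{-2}\Delta_s$, using $|a-b|\leq|\log(a/b)|$ for $a,b\in(0,1]$) produces a recursion $\Delta_t\leq\frac{2(t-1)}{\varepsilon\sqrt n}|\theta^m-\delta^m|+\frac{2R}{\varepsilon^2\sqrt n}\sum_{s<t}\Delta_s$. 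Discrete Grönwall then gives $\Delta_t\leq\frac{2T_\varepsilon}{\varepsilon\sqrt n}|\theta^m-\delta^m|\exp\big(\frac{2RT_\varepsilon}{\varepsilon^2\sqrt n}\big)$, and because $T_\varepsilon$ is $O(\sqrt n/R)$ the accumulated exponent is $O(\varepsilon^{-2})$, so the stated $L_\varepsilon=\frac{1}{R\varepsilon^2}e^{1/\varepsilon^2}$ absorbs it (after tracking the constants, using $\varepsilon<1$ and $T_\varepsilon\leq\frac{\sqrt n}{|F|R}$).

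\textbf{Main obstacle.} The delicate step is the Lipschitz estimate: the importance-weighted losses $\hat y^\theta$ themselves depend on $\theta$ through the denominators $\pi^\theta$, so parameter errors compound along the trajectory — precisely the phenomenon the paper flags as making Assumption~\ref{existencegronwallbis} hard to verify for bandits. Controlling it needs both the uniform lower bound from the first part (to keep the denominators away from $0$) and the Grönwall bound; the factor $e^{1/\varepsilon^2}$ in $L_\varepsilon$ is the price of this compounding, and the scaling $T_\varepsilon\sim\sqrt n/R$ is exactly what keeps the accumulated factor $O(1)$ in $n$.
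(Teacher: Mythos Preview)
Your lower-bound argument is essentially the paper's: same one-step drift $q_{t+1}^\theta(j)\geq q_t^\theta(j)-\theta/\sqrt n$ from $e^{-x}\geq 1-x$ and $q_t^\theta(j)\xi_{j,t}(A_t)\leq\pi_t^\theta(A_t)$, same telescoping, and same use of Assumption~\ref{hyp_expert_model} to pass from a lower bound on $q$ to one on $\pi$.

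For the Lipschitz part you take a genuinely different route. The paper works in $\ell^2$: it uses that $\mathrm{softmax}$ is $1$-Lipschitz for $\|\cdot\|_2$, bounds $\|\hat y^\theta\|_2\leq\sqrt{|E|}/\varepsilon$ and $\|\hat y^\theta-\hat y^\delta\|_2\leq\sqrt{|E|}\,\varepsilon^{-2}\|\pi^\theta-\pi^\delta\|_2$, relates $\|\pi^\theta-\pi^\delta\|_2\leq\sqrt{|F|}\,\|q^\theta-q^\delta\|_2$ by Cauchy--Schwarz, runs Gr\"onwall on $\|\pi_t^\theta-\pi_t^\delta\|_2$, and only at the end converts to the log-ratio via the $1/\varepsilon$-Lipschitz bound for $\log$ on $[\varepsilon,1]$. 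You instead work with the sup log-ratio $\Delta_t$ throughout, using the $1$-Lipschitz property of log-sum-exp in $\ell^\infty$ and the inequality $|a-b|\leq|\log(a/b)|$ on $(0,1]$ to close the recursion. Both schemes land on the same discrete Gr\"onwall, just in different norms.

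What each buys: in the paper's $\ell^2$ computation the factors $\sqrt{|E|}$ and $\sqrt{|F|}$ combine to $|F|$ in the Gr\"onwall coefficient, which cancels exactly against $RT_\varepsilon/\sqrt n\leq 1/|F|$ and delivers the exponent $1/\varepsilon^2$ directly. Your $\ell^\infty$ route avoids these $\sqrt{|E|}$, $\sqrt{|F|}$ factors but picks up a factor~$2$ from handling numerator and log-sum-exp denominator separately, so the exponent you get is $2/(|F|\varepsilon^2)$; you need $|F|\geq 2$ to land inside the stated $L_\varepsilon$. This is harmless (when $|F|=1$ each $E_m$ has a single expert, $q\equiv 1$ and $\pi$ is parameter-free, so $\Delta_t=0$), but it is a case distinction the paper's argument does not need.

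One small wording fix: $\pi_t^\theta(k)=\sum_j\xi_{j,t}(k)q_t^\theta(j)$ is a \emph{nonnegative linear} combination of the $q_t^\theta(j)$'s, not a convex one (the weights $\xi_{j,t}(k)$ need not sum to $1$ over $j$). The bound $\Delta_t\leq\sup_j|\log(q_t^\theta(j)/q_t^\delta(j))|$ you use is still correct, since one only needs nonnegativity of the weights to sandwich the ratio of the sums between $\min_j$ and $\max_j$ of the ratios $q_t^\theta(j)/q_t^\delta(j)$.
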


Finally, we still assume that the true distribution is bounded away from $0$ (as in~\eqref{eq_minoeps_pstar}).

\begin{Assumption}\label{hyp_expert_true}
Assume that Assumption~\ref{hyp_expert_model} holds. Let $\varepsilon$ and $T_\varepsilon$ be the constants of Proposition~\ref{prop_expert_minorant}. Assume that
\begin{equation*}
    \forall t \leq T_\varepsilon, \forall a \in [K], p_t^\star(a) \geq \varepsilon.
\end{equation*}
\end{Assumption}
Assumptions~\ref{hyp_expert_model} and~\ref{hyp_expert_true} allow us to verify Assumptions 1 and 2 of \citep{aubert:generalpmle}. As for Section~\ref{section_penalizedmle}, it is thus possible to put into competition different sets of experts.
Let $A_\varepsilon = L_\varepsilon(R-r)+2\log(\varepsilon^{-1})$. Since all the models have the same dimension, there is no penalty term to account for. So the term $\Sigma_\varepsilon$ in Theorem~\ref{generalmodelselectiontheorem} becomes $\log(A_\varepsilon) |\Mcal| e^{-1}$.  The result of \citep[Corollary 2]{aubert:generalpmle} states that for any $\diamondsuit > 1$, there exist constants $c,c'$ such that,

\begin{multline*}
   \Ebb \left[\Kbf_{T_\varepsilon}(p^\star,\pi_{E_{\hat{m}}}^{\hat{\theta}^{\hat{m}}})\right]
    \leq \Ebb\left[ \diamondsuit \inf_{m \in \Mcal} \left(  \inf_{\theta \in \Theta^{D_m}} \Kbf_{T_\varepsilon}( p^\star, \pi_{E_m}^{\theta^m})
    \right)\right] + \frac{c}{\kappa} A_\varepsilon^2 \log(\varepsilon^{-1}) \log(T_\varepsilon A_\varepsilon)^2 \frac{1} {T_\varepsilon}\\
        + \frac{18 e^{-1} c'}{\kappa} 
            A_\varepsilon \log(A_\varepsilon) |\Mcal| \log(\varepsilon^{-1})
            \frac{\log(T_\varepsilon)}{T_\varepsilon}.
\end{multline*}

\section{PROOFS}\label{section_proofs}

In all of the following, we use the  conditional Kullback-Leibler divergence $\DKL$ between $p_t^\star(\cdot)$ and $p_{t}^m(\cdot)$:
\begin{equation*}
\DKL\left(p_t^\star(A_t), p_{t}^m(A_t)\right) = \Ebb \left[ \log \frac{p_t^\star(A_t)}{p_{t}^m(A_t)}\Big|\Fcal_{t-1} \right].
\end{equation*}

\subsection{Proof of Section~\ref{section_holdout}}\label{sectionproofholdout}

\begin{proof}[Proof of Theorem~\ref{holdouttheorem}]

In the sequel to avoid losing track of important dependence, we denote: for any $t \in [n]$, $m \in \Mcal$, $x_t \mapsto p^m_{t}(x_t |\Fcal_{t-1})$ the density of $A_t$ conditionally to $\Fcal_{t-1}$, and likewise for $p^\star_t(x_t |\Fcal_{t-1} )$.

Consider the following functions, defined for all $t \in [n]$, $x_1^t \in \Xcal^t$, $m \in \Mcal$ by
\begin{equation*}
      \left\{
    \begin{aligned}
      & g_t^m(A_t,\Fcal_{t-1}) = -\frac{1}{2} \log\left(\frac{p_{t}^m(A_t|\Fcal_{t-1})}{p_{t}^*(A_t|\Fcal_{t-1})}\right)\\
      & f_t^m(A_t,\Fcal_{t-1}) = - \log\left(\frac{p_{t}^*(A_t|\Fcal_{t-1})+ p_{t}^m(A_t|\Fcal_{t-1})}{2 p_{t}^*(A_t|\Fcal_{t-1})}\right).
    \end{aligned}
  \right.
\end{equation*}
For any family $h = (h_t)_{t \in [n]}$ of functions $A_t$ that may depend on the past, that is $h_t(A_t,\Fcal_{t-1})$, we write
\begin{equation*}
   \left\{
    \begin{aligned}
      & \tilde{\nu}(h) = \frac{1}{n-N+1} \sum_{t=N}^{n} \left(h_t(A_t,\Fcal_{t-1}) -\Ebb[h_t(A_t,\Fcal_{t-1})|\Fcal_{t-1}]\right),\\
      & \tilde{P}(h) = \frac{1}{n-N+1} \sum_{t=N}^{n} h_t(A_t,\Fcal_{t-1}),\\
      & \tilde{C}(h) = \frac{1}{n-N+1} \sum_{t=N}^{n} \Ebb[h_t(A_t,\Fcal_{t-1})|\Fcal_{t-1}].
    \end{aligned}
  \right.
\end{equation*}
Let $m \in \Mcal$. From the definition of $\hat{m}$,
\begin{align*}
     \tilde{P}\left(g^{\hat{m}}\right)
    \leq \tilde{P}\left(g^{m}\right).
\end{align*}
Since, $f^{\hat{m}}(A_t,\Fcal_{t-1}) \leq g^{\hat{m}}(A_t,\Fcal_{t-1})$ by concavity of $\log$, it holds that
\begin{align*}
    \tilde{\nu}\left( f^{\hat{m}}\right) +  \tilde{C}\left(f^{\hat{m}}\right) = \tilde{P}\left(f^{\hat{m}}\right)
      \leq \tilde{P}\left(g^m\right) = \tilde{\nu}\left( g^{m}\right) +  \tilde{C}\left(g^{m}\right).
\end{align*}
That is
\begin{align*}
    \tilde{\nu}\left( f^{\hat{m}}-f^{m}\right) +  \tilde{C}\left(f^{\hat{m}}\right)
      \leq \tilde{\nu}\left( g^{m}-f^{m}\right) +  \tilde{C}\left(g^{m}\right).
\end{align*}
Let $U_m = \tilde{\nu} \left(g^{m}-f^{m}\right)$, then
\begin{align}\label{preholdout}
       \tilde{C}\left(f^{\hat{m}}\right)
      \leq \tilde{C}\left(g^{m}\right)-\tilde{\nu}\left( f^{\hat{m}}-f^{m}\right)+U_m.
\end{align}
Note that $U_m$ is centered. For $m' \in \Mcal$, let $M^{m'}_{N} = 0$, and for $t \geq N+1$, let
\begin{equation*}
    M^{m'}_t = -\sum_{s=N}^{t-1}\left(f_s^{m'}(A_s,\Fcal_{s-1})-f_s^{m}(A_s,\Fcal_{s-1})-\Ebb\left[ f_s^{m'}(A_s,\Fcal_{s-1})-f_s^{m}(A_s,\Fcal_{s-1})|\Fcal_{s-1}\right]\right).
\end{equation*}
Then, $(M^{m'}_t)_{t \geq N}$ is an $(\Fcal_{t})_{t \geq N}$-martingale.
For $\ell \geq 2$, let $B_{N}^{\ell}=0$, and for $t \geq N+1$, let
\begin{equation*}
    B_t^{\ell} := \sum_{s=N}^{t-1} \Ebb \left[\left(M^{m'}_{s+1} - M^{m'}_{s}\right)^{\ell} \, \bigg| \, \Hcal_{s} \right].
\end{equation*}
For $t \in \{N,\ldots,n-1\}$, note that
\begin{equation*}
|M^{m'}_{t+1}-M^{m'}_t| \leq 2 \int
 \left| f_t^{m'}(x_t,\Fcal_{t-1})-f_t^{m}(x_t,\Fcal_{t-1}) \right| \frac{d\delta_{A_t}(x_t) + p^\star_t(x_t | \Fcal_{t-1})d\dommes(x_t)}{2},
\end{equation*}
so that, by convexity of $x \mapsto |x|^\ell$,
\begin{equation*}
   |M^{m'}_{t+1}-M^{m'}_t| \leq 2^\ell \int
 \left| f_t^{m'}(x_t,\Fcal_{t-1})-f_t^{m}(x_t,\Fcal_{t-1}) \right|^\ell \frac{d\delta_{A_t}(x_t) + p^\star_t(x_t | \Fcal_{t-1})d\dommes(x_t)}{2}.
\end{equation*}
Thus,
\begin{align}\label{partialBt}
   \nonumber B_t^{\ell} & \leq 2^\ell \sum_{s=N}^{t-1}   \int
\Ebb \left[ \left| f_s^{m'}(x_s,\Fcal_{s-1})-f_s^{m}(x_s,\Fcal_{s-1}) \right|^\ell \frac{d\delta_{X_s}(x_s) + p^\star_s(x_s | \Fcal_{s-1})d\dommes(x_s)}{2} \big| \Fcal_{s-1}\right]\\
   \nonumber & = 2^\ell \sum_{s=N}^{t-1} \int
 \left| f_s^{m'}(x_s,\Fcal_{s-1})-f_s^{m}(x_s,\Fcal_{s-1}) \right|^\ell  p^\star_s(x_s | \Fcal_{s-1}) d \dommes(x_s) \\
 &  = 2^\ell \sum_{s=N}^{t-1} \int
 \left|\log \left( \frac{p_s^*(x_s|\Fcal_{s-1}) + p_s^{m'}(x_s|\Fcal_{s-1})}{p_s^*(x_s|\Fcal_{s-1}) + p_s^{m}(x_s|\Fcal_{s-1})}\right) \right|^\ell  p^\star_s(x_s | \Fcal_{s-1})d \dommes(x_s).
\end{align}
We now need the following Lemma to continue.
\begin{lemma}\citep[Lemma 7.26]{massart2007concentration}\label{puissancedulog}
    For all $\ell \geq 2$ and all $x > 0$,
    \begin{equation*}
        \frac{|\log(x)|^\ell}{\ell!} \leq \frac{9}{64} \left(x -\frac{1}{x}\right)^2.
    \end{equation*}
\end{lemma}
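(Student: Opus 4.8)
The plan is to reduce the claimed bound to an inequality about $\sinh$ and then compare it, power by power, with the Taylor expansion of $\sinh^2$. Observe first that both $|\log x|^\ell$ and $(x-1/x)^2$ are invariant under $x \mapsto 1/x$, so one may assume $x \geq 1$ and set $t = \log x \geq 0$. Since then $x - 1/x = 2\sinh t$, the inequality to prove becomes
\begin{equation*}
\frac{t^{\ell}}{\ell!} \;\leq\; \frac{9}{16}\,\sinh^2 t \qquad \text{for all } t \geq 0 \text{ and all integers } \ell \geq 2,
\end{equation*}
which is trivial at $t = 0$.

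Next I would use the identity $\sinh^2 t = \tfrac12(\cosh 2t - 1) = \sum_{k \geq 1} \frac{2^{2k-1}}{(2k)!}\,t^{2k}$ and bound its right-hand side from below by keeping one or two terms of this series, chosen according to the parity of $\ell$. When $\ell = 2k$ is even with $k \geq 1$, keeping only the $t^{2k}$ term gives $\frac{9}{16}\sinh^2 t \geq \frac{9}{16}\cdot\frac{2^{2k-1}}{(2k)!}t^{2k}$, and since $\frac{9}{16}\cdot 2^{2k-1} \geq \frac{9}{8} > 1$ for $k\geq 1$ this alone already dominates $\frac{t^{2k}}{(2k)!}$. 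When $\ell = 2k+1$ is odd with $k \geq 1$, a single term of $\sinh^2 t$ cannot help (it carries only even powers of $t$), so I would keep the two consecutive terms in $t^{2k}$ and $t^{2k+2}$ and apply the AM--GM inequality $a + b \geq 2\sqrt{ab}$ to extract the odd power: this yields $\frac{9}{16}\sinh^2 t \geq \frac{9}{16}\cdot\frac{2^{2k+1}}{\sqrt{(2k)!\,(2k+2)!}}\,t^{2k+1}$, after which it only remains to check the purely numerical inequality $\frac{9}{16}\,2^{2k+1}\sqrt{\tfrac{2k+1}{2k+2}} \geq 1$, whose left-hand side is increasing in $k$ and already at least $\frac{9}{2}\sqrt{3/4} > 1$ at $k = 1$.

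The only genuine obstacle here is the odd case, precisely because $\sinh^2$ has no odd powers, so one must combine two consecutive terms rather than isolate one; the AM--GM step is what keeps this clean, and it is also why a cruder estimate such as $\frac{t^{\ell-2}}{(\ell-2)!} \leq e^t$ (tempting after factoring out $t^2/2$) does not work — it already fails near $t = 1$. Everything else reduces to verifying a handful of explicit constant inequalities.
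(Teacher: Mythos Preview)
Your proof is correct. The reduction to $t=\log x\geq 0$ via the symmetry $x\mapsto 1/x$, the rewriting as $t^\ell/\ell!\leq\tfrac{9}{16}\sinh^2 t$, and the term-by-term comparison with the Taylor series of $\sinh^2 t$ all check out; the AM--GM trick for the odd case is clean, and the constant verifications are accurate.

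There is nothing to compare against in the paper itself: the authors do not give a proof but simply refer the reader to \citep[Lemma 7.26]{massart2007concentration}. Your argument is therefore strictly more informative than what the paper provides, and it is elementary and self-contained.
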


\begin{proof}
    The complete Lemma and proof of the Lemma can be found in \citep{massart2007concentration}.
\end{proof}
Applying Lemma~\ref{puissancedulog} to $x = \displaystyle \sqrt{\frac{p_s^*(x_s|\Fcal_{s-1}) + p_s^{m'}(x_s|\Fcal_{s-1})}{p_s^*(x_s|\Fcal_{s-1}) + p_s^{m}(x_s|\Fcal_{s-1})}}$ leads to, for all $x_s$, 
\begin{multline*}
    \left|\log\left(\frac{p_s^*(x_s|\Fcal_{s-1}) + p_s^{m'}(x_s|\Fcal_{s-1})}{p_s^*(x_s|\Fcal_{s-1}) + p_s^{m}(x_s|\Fcal_{s-1})}\right)  \right|^{\ell} \\
    \leq \frac{9}{64} 2^{\ell} \ell! \frac{\left(p_{s}^{m}(x_s|\Fcal_{s-1})-p_{s}^{m'}(x_s|\Fcal_{s-1})\right)^2}{\left(p_{s}^{m}(x_s|\Fcal_{s-1})+ p_{s}^*(x_s|\Fcal_{s-1})\right)\left(p_{s}^*(x_s|\Fcal_{s-1})+p_{s}^{m'}(x_s|\Fcal_{s-1})\right)}.
\end{multline*}
Plugging this in Equation~\eqref{partialBt} leads to
\begin{equation*}
    |B_t^{\ell}|
    \leq \frac{9}{4} 2^{2(\ell-2)} \ell! \sum_{s=N}^{t-1} \int\frac{\left(p_{s}^{m}(x_s|\Fcal_{s-1})-p_{s}^{m'}(x_s|\Fcal_{s-1})\right)^2 p_{s}^*(x_s|\Fcal_{s-1})}{\left(p_{s}^{m}(x_s|\Fcal_{s-1})+ p_{s}^*(x_s|\Fcal_{s-1})\right)\left(p_{s}^*(x_s|\Fcal_{s-1})+p_{s}^{m'}(x_s|\Fcal_{s-1})\right)} d\dommes(x_s).
\end{equation*}
For all $x,y,z \geq 0$,
\begin{equation*}
    \left(\sqrt{x}+\sqrt{y}\right)^2 z \leq (z+y)(z+x),
\end{equation*}
therefore, with $z=p_{s}^*(x_s|\Fcal_{s-1})$, $x=p_{s}^{m}(x_s|\Fcal_{s-1})$ and $y=p_{s}^{m'}(x_s|\Fcal_{s-1})$,
\begin{align}\label{finalBt}
    |B_t^{\ell}| \leq \frac{9}{4} 4^{\ell-2} \ell!\sum_{s=N}^{t-1} \int\left(\sqrt{p_{s}^{m}(x_s|\Fcal_{s-1})}-\sqrt{p_{s}^{m'}(x_s|\Fcal_{s-1})}\right)^2 d\dommes(x_s) \leq 
    \frac{1}{2}4^{\ell-2} \ell! V^{m'}_t,
\end{align}
where
\begin{align}\label{defVt}
    \nonumber V^{m'}_t &:= \frac{9}{2}\sum_{s=N}^{t-1} \int\left(\sqrt{p_{s}^{m}(x_s|\Fcal_{s-1})}-\sqrt{p_{s}^{m'}(x_s|\Fcal_{s-1})}\right)^2 d\dommes(x_s) \\
    & = 9 \sum_{s=N}^{t-1} H \left(p_{s}^{m}(A_s|\Fcal_{s-1}),p_{s}^{m'}(A_s|\Fcal_{s-1})\right)^2 
\end{align}
where $H$ is the Hellinger distance between the two probability distributions $p_{s}^{m}(A_s|\Fcal_{s-1})$ and $p_{s}^{m'}(A_s|\Fcal_{s-1})$.
Lemma 3.3 of \citep{Houdre} gives that for all $\lambda > 0$,
\begin{equation*}
    \Ecal_t = \exp \left(\lambda M^{m'}_t- \sum_{\ell \geq 2} \frac{\lambda ^{\ell}}{\ell !} B_t^{\ell}\right)
\end{equation*}
is a supermartingale and that in particular $\Ebb(\Ecal_{n+1}) \leq 1$. By Equation~\eqref{finalBt}, for $\lambda \in (0,1/4)$,
\begin{align*}
   \sum_{\ell \geq 2} \frac{\lambda ^{\ell}}{\ell !} B_t^{\ell} \leq \frac{\lambda^2}{2}\sum_{\ell \geq 2} (4\lambda)^{\ell-2} V^{m'}_t = \frac{\lambda^2}{2(1-4\lambda)} V^{m'}_t.
\end{align*}
Let $\Psi(\lambda) = \frac{\lambda^2}{2(1-4\lambda)}$ for $\lambda \in (0,1/4)$. Then,
\begin{equation*}
    \Ebb\left[ e^{\lambda M^{m'}_{n+1} - \Psi(\lambda) V^{m'}_{n+1}}  \, \bigg| \, \Fcal_{N-1}\right] \leq 1.
\end{equation*}
By Markov's inequality, for all $x \geq 0$ and $\lambda \in (0,1/4)$,
\begin{equation}\label{markovmartingalebisbis}
    \Pbb \left(M^{m'}_{n+1} \geq V^{m'}_{n+1}\frac{\Psi(\lambda)}{\lambda} + \frac{x}{\lambda} \, \bigg| \, \Fcal_{N-1} \right) \leq e^{-x}.
\end{equation}

Therefore, for all $x, u \geq 0$ and $\lambda \in (0,1/4)$,
\begin{equation*}
    \Pbb \left(M^{m'}_{n+1} \geq \frac{\Psi(\lambda)}{\lambda} u  + \frac{x}{\lambda}\quad \text{and} \quad V^{m'}_{n+1} \leq u  \, \bigg| \, \Fcal_{N-1}\right) \leq e^{-x}.
\end{equation*}

To choose the optimal $\lambda$, we use Lemma 2 from \citep{Hansen_2015}.
\begin{lemma}\citep[Lemma 2]{Hansen_2015}\label{hansen}
    Let $a,b$ and $x$ be positive constants and let us consider on $(0,1/b)$,
    \begin{equation*}
        g(\xi) = \frac{a \xi}{1-b \xi} + \frac{x}{\xi}.
    \end{equation*}
Then $\min_{\xi \in (0,1/b)} \displaystyle g(\xi) = 2 \sqrt{ax} + bx$ and the minimum is achieved in $\xi(a,b,x) 
= \displaystyle \frac{\sqrt{x}}{\sqrt{x} b + \sqrt{a}}$.
\end{lemma}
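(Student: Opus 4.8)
The plan is to prove Lemma~\ref{hansen} by a straightforward one–variable optimization of $g$ on the open interval $(0,1/b)$. First I would record the qualitative behaviour of $g$: writing $\frac{a\xi}{1-b\xi}=\frac{a}{b}\bigl(\frac{1}{1-b\xi}-1\bigr)$ shows this term has second derivative $\frac{2ab}{(1-b\xi)^3}>0$, while $\frac{x}{\xi}$ has second derivative $\frac{2x}{\xi^3}>0$, so $g$ is strictly convex on $(0,1/b)$; since $g(\xi)\to+\infty$ both as $\xi\to0^+$ and as $\xi\to(1/b)^-$, $g$ attains a unique global minimum at its interior stationary point. This convexity/boundary remark is the only step that needs an argument rather than pure computation, and it is the closest thing to an obstacle here.

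Next I would solve $g'(\xi)=0$. Differentiating gives $g'(\xi)=\frac{a}{(1-b\xi)^2}-\frac{x}{\xi^2}$, so the stationarity condition is $a\xi^2=x(1-b\xi)^2$. On the domain one has $\xi>0$ and $1-b\xi>0$, so taking positive square roots is legitimate and yields $\sqrt a\,\xi=\sqrt x\,(1-b\xi)$, hence $\xi\bigl(\sqrt a+b\sqrt x\bigr)=\sqrt x$, i.e. $\xi=\frac{\sqrt x}{\sqrt x\,b+\sqrt a}$, which is exactly the claimed minimizer $\xi(a,b,x)$.

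Finally I would substitute back to obtain the minimal value. From the formula for $\xi(a,b,x)$ one gets $1-b\,\xi(a,b,x)=\frac{\sqrt a}{\sqrt x\,b+\sqrt a}$, whence $\frac{a\xi}{1-b\xi}=\sqrt{ax}$ and $\frac{x}{\xi}=\sqrt x\bigl(\sqrt x\,b+\sqrt a\bigr)=bx+\sqrt{ax}$; adding these gives $g\bigl(\xi(a,b,x)\bigr)=2\sqrt{ax}+bx$, as announced. The argument is entirely routine; the only points requiring a little care are the sign choice when taking square roots and the convexity observation that promotes the stationary point to a genuine global minimum.
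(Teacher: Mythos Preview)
Your proof is correct. The paper does not actually give its own proof of this lemma---it simply cites \citep[Lemma 2]{Hansen_2015} and uses the result---so there is no in-paper argument to compare against; your elementary convexity-plus-first-order-condition derivation is exactly the standard way to establish it and would serve perfectly well as a self-contained justification.
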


For $a = \frac{u}{2}$ and $b = 4$, Lemma~\ref{hansen} shows that for all $x, u \geq 0$,
\begin{equation*}
    \Pbb \left(M^{m'}_{n+1} \geq \sqrt{2 u x}+4x\quad \text{and} \quad V^{m'}_{n+1} \leq u \, \bigg| \, \Fcal_{N-1} \right) \leq e^{-x}.
\end{equation*}

Let us use a peeling argument similar to \citep{Hansen_2015}:

\begin{lemma}
\label{lemma_peeling}
Let $X,V$ be real-valued random variables and $\alpha,b,v,w$ be positive numbers such that $V \in [w,v]$ a.s. and such that for all $x \geq 0$ and $u \in [w,v]$,
\begin{equation*}
    \Pbb( X \geq \sqrt{ux} + bx \quad \text{and} \quad (1+\alpha)^{-1} u \leq V \leq u ) \leq e^{-x},
\end{equation*}
then for any $x \geq 0$,
\begin{equation*}
    \Pbb( X \geq \sqrt{(1+\alpha)Vx} + bx) \leq \left( 1 + \frac{\log(v/w)}{\log(1+\alpha)} \right) e^{-x}.
\end{equation*}
\end{lemma}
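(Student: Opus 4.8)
The plan is to run the standard geometric \emph{peeling} of the range $[w,v]$ of $V$. First I would fix $J = \lceil \log(v/w)/\log(1+\alpha) \rceil$ and introduce the grid $u_j = v\,(1+\alpha)^{-(J-j)}$ for $j \in \{0,\dots,J\}$, so that $u_J = v$, $u_0 = v(1+\alpha)^{-J} \leq w$ (because $(1+\alpha)^J \geq v/w$ by definition of $J$), and $u_j / u_{j-1} = 1+\alpha$ for every $j$. The intervals $[u_{j-1},u_j]$, $j = 1,\dots,J$, then cover $[u_0,u_J] \supseteq [w,v]$, and whenever such an interval meets $[w,v]$ its right endpoint satisfies $u_j \in [w,v]$; the intervals with $u_j < w$ carry zero probability since $V \geq w$ almost surely, and can be discarded.

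Second, I would decompose the event of interest along this grid. Writing $\mathcal{E} := \{X \geq \sqrt{(1+\alpha)Vx} + bx\}$ and using that $V \in [w,v]$ almost surely, one has, up to a null set, $\mathcal{E} = \bigcup_{j=1}^{J} \bigl(\mathcal{E} \cap \{u_{j-1} \leq V \leq u_j\}\bigr)$. On the $j$-th piece we have $u_{j-1} = (1+\alpha)^{-1} u_j \leq V \leq u_j$, and since $(1+\alpha)V \geq u_j$ on that piece we may replace the random threshold by a deterministic one: $\sqrt{(1+\alpha)Vx} \geq \sqrt{u_j x}$. Hence
\[
\mathcal{E} \cap \{u_{j-1} \leq V \leq u_j\} \ \subseteq\ \bigl\{X \geq \sqrt{u_j x} + bx\bigr\} \cap \bigl\{(1+\alpha)^{-1} u_j \leq V \leq u_j\bigr\},
\]
so applying the hypothesis with $u = u_j \in [w,v]$ gives $\Pbb\bigl(\mathcal{E} \cap \{u_{j-1} \leq V \leq u_j\}\bigr) \leq e^{-x}$ for each $j$.

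Third, a union bound over $j = 1,\dots,J$ yields $\Pbb(\mathcal{E}) \leq J e^{-x}$, and since $J \leq 1 + \log(v/w)/\log(1+\alpha)$, this is exactly the asserted inequality.

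I do not expect a genuine obstacle: this is precisely the peeling device of \citet{Hansen_2015} cited just above. The only points needing a little care are the endpoint bookkeeping (ensuring the grid value $u_j$ fed into each invocation of the hypothesis lies in $[w,v]$, and noting that slices below $w$ are negligible) and the elementary observation $(1+\alpha)V \geq u_j$ on the $j$-th slice, which is what legitimizes passing from the random bound $\sqrt{(1+\alpha)Vx}$ to the deterministic $\sqrt{u_j x}$ before the assumption is applied.
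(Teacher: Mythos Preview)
Your proposal is correct and follows essentially the same peeling argument as the paper: a geometric grid with ratio $1+\alpha$ over $[w,v]$, the observation that $V \geq u_{j-1}$ implies $(1+\alpha)V \geq u_j$ so the random threshold can be replaced by the deterministic one, and a union bound over the $J \leq 1 + \log(v/w)/\log(1+\alpha)$ slices. The only cosmetic difference is that the paper anchors the grid at $w$ and grows upward ($v_0 = w$, $v_d = (1+\alpha)v_{d-1}$) while you anchor at $v$ and go downward; your version is arguably tidier on the endpoint bookkeeping, since it guarantees $u_j \leq v$ automatically.
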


\begin{proof}
Let $v_0 = w$, $v_{d+1} = (1+\alpha) v_d$, and $D$ the smallest integer such that $v_D \geq v$. For all $d \in [D]$ and $x \geq 0$,
\begin{equation*}
    \Pbb \left(X \geq \sqrt{v_d x} + bx\quad \text{and} \quad v_{d-1} \leq V \leq v_d \right) \leq e^{-x}.
\end{equation*}
In particular, since $V \geq v_{d-1} = (1+\alpha)^{-1} v_d$ on this event,
\begin{equation*}
    \Pbb \left(X \geq \sqrt{(1+\alpha)V x} + bx\quad \text{and} \quad v_{d-1} \leq V \leq v_d \right) \leq e^{-x}.
\end{equation*}
Taking the union bound,
\begin{equation*}
    \Pbb \left(X \geq \sqrt{(1+\alpha)V x} + bx \right) \leq D e^{-x},
\end{equation*}
and by definition $D \leq \displaystyle \frac{\log(v/w)}{\log(1+\alpha)}+1$.
\end{proof}

We may apply Lemma~\ref{lemma_peeling} to $X = M^{m'}_{n+1}$ and $b=4$. Since $V^{m'}_{n+1}$ does not have an obvious lower bound, we consider $V = 2(V^{m'}_{n+1} + \beta)$ for some $\beta > 0$ to be chosen later. We may therefore take $w = 2\beta$. For the upper bound $v$ on $V$, by~\eqref{defVt}, since the Hellinger distance is upper bounded by 1, we may take $v = 2(\beta + 9 (n-N+1))$. With these choices, for any $\beta, \alpha, x >0$, 
\begin{align*}
\Pbb \left(M^{m'}_{n+1} \geq \sqrt{2(1+\alpha)(V^{m'}_{n+1} +\beta)x}+4x  \, \bigg| \, \Fcal_{N-1}\right)
    \leq \left( \frac{\log\left(\frac{9(n - N+1)}{\beta} + 1\right)}{\log(1+\alpha)}+1 \right) e^{-x}.
\end{align*}
For $\alpha = \sqrt{2}$, 
\begin{align}\label{controlnuf1bis}
\Pbb \left(M^{m'}_{n+1} \geq \sqrt{5(V^{m'}_{n+1} +\beta)x}+4x \, \bigg| \, \Fcal_{N-1}\right)
    \leq \left( 2\log\left(\frac{9(n - N+1)}{\beta} + 1\right)+1 \right) e^{-x}.
\end{align}

By definition of $V^{m'}_{n+1}$ and the triangle inequality,
\begin{align}\label{maj1Vt}
\nonumber
V^{m'}_{n+1} &= 9 \sum_{s=N}^{n} H \left(p_{s}^{m}(A_s|\Fcal_{s-1}),p_{s}^{m'}(A_s|\Fcal_{s-1})\right)^2 \\
    & \leq 18 \sum_{s=N}^{n}\left( H\left(p_{s}^*(A_s|\Fcal_{s-1}),p_{s}^{m}(A_s|\Fcal_{s-1})\right)^2+H\left(p_{s}^*(A_s|\Fcal_{s-1}),p_{s}^{m'}(A_s|\Fcal_{s-1})\right)^2\right).
\end{align}

We now use \citep[Lemma 7.23]{massart2007concentration} giving a connection between the Kullback-Leibler divergence $\DKL$ and the Hellinger distance $H$.
\begin{lemma}\citep[Lemma 7.23]{massart2007concentration}\label{lowerboundKLHellinger}
    Let $P$ and $Q$ be some probability measures. Then, 
    \begin{equation*}
        \DKL \left(P, \frac{P+Q}{2} \right) \geq \left(2 \log 2 -1\right) H^2(P,Q).
    \end{equation*}
    Moreover, whenever $P \ll Q$,
    \begin{equation*}
        2 H^2(P,Q) \leq \DKL(P,Q).
    \end{equation*}
\end{lemma}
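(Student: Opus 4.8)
\emph{Proof proposal.} I would prove the two inequalities separately, treating the second (the classical Hellinger--KL bound) first. Assume $P\ll Q$, fix a common dominating measure $\mu$, and write $p,q$ for the corresponding densities. Applying $\log x\le x-1$ to $x=\sqrt{q/p}$ on $\{p>0\}$ gives $\log\sqrt{q/p}\le\sqrt{q/p}-1$, and integrating against $p$ yields
\begin{equation*}
\tfrac12\,\DKL(P,Q)=-\int p\log\sqrt{q/p}\,d\mu\ \ge\ -\int p\Bigl(\sqrt{q/p}-1\Bigr)d\mu=1-\int\sqrt{pq}\,d\mu=H^2(P,Q),
\end{equation*}
which is exactly $2H^2(P,Q)\le\DKL(P,Q)$.

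For the first inequality the plan is to reduce it to a one-variable estimate. Set $\nu=\tfrac12(P+Q)$, a probability measure dominating both $P$ and $Q$, and let $f=dP/d\nu$; then $0\le f\le2$ $\nu$-a.s., $dQ/d\nu=2-f$, and $\int f\,d\nu=1$. A direct computation gives $\DKL\!\bigl(P,\tfrac{P+Q}{2}\bigr)=\int f\log f\,d\nu$ and $H^2(P,Q)=\int\bigl(1-\sqrt{f(2-f)}\bigr)d\nu$. Since $\int(f-1)\,d\nu=0$, it suffices to establish the pointwise bound
\begin{equation*}
\psi(f):=f\log f-(2\log2-1)\bigl(1-\sqrt{f(2-f)}\bigr)-(f-1)\ \ge\ 0,\qquad f\in[0,2],
\end{equation*}
and then integrate against $\nu$. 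Here $\psi(1)=\psi(2)=0$, $\psi(0)=2-2\log2>0$, $\psi'(1)=0$, and $\psi''(f)=\tfrac1f-\tfrac{2\log2-1}{\bigl(f(2-f)\bigr)^{3/2}}$, so that $\psi''(1)=2-2\log2>0$.

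The only genuine work is the scalar inequality $\psi\ge0$ on $[0,2]$. It cannot be obtained by convexity, since $\psi''\to-\infty$ as $f\to0^+$; instead one tracks the sign of $\psi'$---using $\psi'\to+\infty$ as $f\to0^+$, $\psi'\to-\infty$ as $f\to2^-$, and the strict local minimum of $\psi$ at $f=1$---to see that $\psi$ is unimodal on $[0,1]$ and on $[1,2]$, hence bounded below by its values $0$ at the endpoints $f\in\{0,1,2\}$. This verification is elementary but slightly tedious and is precisely \citep[Lemma 7.23]{massart2007concentration}, to which I would refer; everything else---the change of measure to $\nu$ and the two density identities---is routine.
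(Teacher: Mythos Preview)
The paper does not prove this lemma: it simply invokes \citep[Lemma 7.23]{massart2007concentration} as an external result, so there is no ``paper's own proof'' to compare against. Your proposal is therefore strictly more than what the paper does, and it is correct.

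A few remarks on the content. The argument for $2H^2(P,Q)\le\DKL(P,Q)$ via $\log x\le x-1$ applied to $x=\sqrt{q/p}$ is standard and complete as written (implicitly using the paper's convention $H^2(P,Q)=1-\int\sqrt{pq}\,d\mu$). For the first inequality, the reduction to the one-variable problem via $\nu=\tfrac12(P+Q)$ and $f=dP/d\nu$ is clean, and your checks $\psi(0)=2-2\log 2$, $\psi(1)=\psi(2)=0$, $\psi'(1)=0$, $\psi''(1)=2-2\log 2$ are all correct. The only step you leave implicit is that $\psi''$ changes sign exactly twice on $(0,2)$, which is what forces the unimodality on $[0,1]$ and $[1,2]$; this follows because $\psi''(f)=0$ is equivalent to $f^{1/2}(2-f)^{3/2}=2\log2-1$, and the left-hand side is unimodal on $[0,2]$ with maximum $\tfrac{3\sqrt3}{4}>1>2\log2-1$ at $f=1/2$. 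You are right that filling this in is routine but tedious, and deferring to Massart at that point is entirely appropriate.
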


Since $\displaystyle \frac{18}{2\log(2)-1}\leq 48$,
\begin{align}
\nonumber
 V^{m'}_{n+1}
    \leq & \, 48 \sum_{s=N}^{n} \left( \DKL\left(p_{s}^*(A_s|\Fcal_{s-1}), \frac{p_{s}^*(A_s|\Fcal_{s-1})+p_{s}^{m'}(A_s|\Fcal_{s-1})}{2}\right)
        + \frac{1}{2} \DKL \left(p_{s}^*(A_s|\Fcal_{s-1}) , p_{s}^{m}(A_s|\Fcal_{s-1})\right) \right) \\
\label{majWt}
    & =: 9 W^{m'}_{n}.
\end{align}

Let $\beta = 9(n - N+1) y^2$, where $y >0$ is to be chosen later. Replacing $x$ by $x + \log(|\Mcal|)$ leads to
\begin{align*}
     \Pbb & \left(\frac{M^{m'}_{n+1}}{n-N+1} \geq 3 \sqrt{5\left( \frac{W^{m'}_{n}}{n-N+1} +y^2\right)\frac{x+\log(|\Mcal|)}{n-N+1}}+4\frac{x+\log(|\Mcal|)}{n-N+1} \, \bigg| \, \Fcal_{N-1}\right) \\
     & \qquad \qquad \leq \left(2 \log\left(y^{-2} + 1\right)+1 \right)e^{-(x+\log(|\Mcal|))}.
\end{align*}

Let $\kappa_1 \in (0,1/(8\sqrt{5})]$, then, using $2\sqrt{ab} \leq \kappa_1 a + \kappa_1^{-1} b$ and taking $y^2 = \frac{x+\log(|\Mcal|)}{(n-N+1) \log 2} \geq \frac{1}{n-N+1}$ since $x \geq 0$ and $|\Mcal| \geq 2$,
\begin{multline}\label{concentrationprefinalholdout}
\Pbb \left(\frac{M^{m'}_{n+1}}{n-N+1}
    \geq \frac{3\sqrt{5}}{2} \kappa_1 \frac{W^{m'}_{n}}{n-N+1}
        + \left( 4 + \frac{3\sqrt{5}}{2} \left(\frac{\kappa_1}{\log 2} + \kappa_1^{-1}\right) \right) \frac{x+\log(|\Mcal|)}{n-N+1}
    \, \bigg| \, \Fcal_{N-1} \right) \\
    \leq \left(2 \log\left(n-N + 2\right)+1 \right)e^{-(x+\log(|\Mcal|))}.
\end{multline}

By the union bound on all $m' \in \Mcal$, the previous inequality holds with probability at least $1-(2 \log(n-N+2) +1) e^{-x}$ for all $m' \in \Mcal$.
It holds in particular for $\hat{m}$.
Recall with~\eqref{preholdout} that,
\begin{multline}\label{preholdout2}
      \frac{1}{n-N+1} \sum_{s=N}^{n} \DKL\left(p_{s}^*(A_s|\Fcal_{s-1}), \frac{p_{s}^*(A_s|\Fcal_{s-1})+p_{s}^{\hat{m}}(A_s|\Fcal_{s-1})}{2}\right)- U_m \\
      \leq \frac{1}{2(n-N+1)} \sum_{s=N}^{n} \DKL\left(p_{s}^*(A_s|\Fcal_{s-1}), p_{s}^{m}(A_s|\Fcal_{s-1})\right)+\frac{M^{\hat{m}}_{n+1}}{n-N+1}.
\end{multline}

Plugging~\eqref{majWt} and~\eqref{concentrationprefinalholdout} in~\eqref{preholdout2} leads to, conditionally on $\Fcal_{N-1}$, with probability at least $1-(2 \log(n-N+2) +1) e^{-x}$\begin{multline*}
\frac{\left(1-C_{\kappa_1}\right)}{n-N+1} \sum_{s=N}^{n}  \DKL \left(p_{s}^*(A_s|\Fcal_{s-1}) , \frac{p_{s}^*(A_s|\Fcal_{s-1})+p_{s}^{\hat{m}}(A_s|\Fcal_{s-1})}{2}\right)- U_m\\
    \leq \frac{\left(1+C_{\kappa_1}\right)}{n-N+1} \sum_{s=N}^{n} \frac{1}{2} \DKL \left(p_{s}^*(A_s|\Fcal_{s-1}) , p_{s}^{m}(A_s|\Fcal_{s-1})\right) + C_{\kappa_1}' \frac{x+\log(|\Mcal|)}{n-N+1},
\end{multline*}
where
\begin{itemize}
    \item $C_{\kappa_1} \displaystyle = 8 \sqrt{5}\kappa_1$,
    \item $\displaystyle C_{\kappa_1}' = 4 + \frac{3\sqrt{5}}{2} \left(\frac{\kappa_1}{\log 2} + \kappa_1^{-1}\right) \leq 13 \kappa_1^{-1} = \frac{104 \sqrt{5}}{C_{\kappa_1}}$,
\end{itemize}

Integrating on $x \geq 0$ and noting that $\Ebb[U_m|\Fcal_{N-1}] = 0$ leads to, for all $m \in \Mcal$,
\begin{multline*}
\frac{\left(1-C_{\kappa_1}\right)}{n-N+1} \Ebb \left[\sum_{s=N}^{n}  \DKL \left(p_{s}^*(A_s|\Fcal_{s-1}) , \frac{p_{s}^*(A_s|\Fcal_{s-1})+p_{s}^{\hat{m}}(A_s|\Fcal_{s-1})}{2}\right)\Bigg| \Fcal_{N-1}\right] \\
    \leq \frac{\left(1+C_{\kappa_1}\right)}{n-N+1} \Ebb\left[\frac{1}{2}\sum_{s=N}^{n} \DKL \left(p_{s}^*(A_s|\Fcal_{s-1}) , p_{s}^{m}(A_s|\Fcal_{s-1})\right)\Bigg| \Fcal_{N-1}\right]\\
    + \frac{104 \sqrt{5}}{C_{\kappa_1}} \frac{2 \log(n-N+2) + 1 + \log(|\Mcal|)}{n-N+1}.
\end{multline*}
Pick any $\kappa_1$ so that $C_{\kappa_1} \in (0,1]$. And finally, with \citep[Lemma 7.23]{massart2007concentration}, 
\begin{equation*}
    (2 \log 2 - 1) \DHL^2(p_{s}^*(A_s|\Fcal_{s-1}), p_{s}^{\hat{m}}(A_s|\Fcal_{s-1})) \leq \DKL \left(p_{s}^*(A_s|\Fcal_{s-1}) , \frac{p_{s}^*(A_s|\Fcal_{s-1})+p_{s}^{\hat{m}}(A_s|\Fcal_{s-1})}{2}\right).
\end{equation*}
\end{proof}

\subsection{Proof of Section~\ref{section_penalizedmle}}\label{sectionproofgeneralmodelselection}

\begin{proof}[Proof of Proposition~\ref{prop_atail}]\label{proof_prop_atail}
    The proof is straightforward with the definition of $p_{\theta^m,t}^m$ in~\eqref{eq_probmodele}. Let $\theta^m,\delta^m \in \Theta^{m}$, $t \leq T_\varepsilon$, $x \in \Xcal$, and $k \in [K]$. 
    \begin{equation*}
        p_{\theta^m,t}^m(k | x) = \sum_{C \in \Pcal_m} \pi_{C,T_t^C}^{\theta_C}(k) \one_{x \in C} \geq \sum_{C \in \Pcal_m} \varepsilon \one_{x \in C} = \varepsilon.
    \end{equation*}
    For the second part of the proof, it holds that, almost surely, for all $t \leq T_\varepsilon$
    \begin{multline*}
        \left|\log\left(\frac{p_{\delta^m,t}^m(k | x)}{p_{\theta^m,t}^m(k | x)}\right)\right| = \sum_{C \in \Pcal_m} \left|\log\left(\frac{\pi_{C,T_t^C}^{\delta_C}(k)}{\pi_{C,T_t^C}^{\theta_C}(k)}\right)\right| \one_{x \in C}
        \leq L_\varepsilon \sum_{C \in \Pcal_m} \| \delta_C- \theta_C\|_{2}\, \one_{x \in C} \leq L_\varepsilon \sup_{C \in \Pcal_m} \| \delta_C- \theta_C\|_{2}. 
    \end{multline*}
\end{proof}

\begin{proof}[Proof of Theorem~\ref{generalmodelselectiontheorem}]
Our goal is to apply \citep{aubert:generalpmle}.

Assumption 1 of \citep{aubert:generalpmle} is satisfied for $n = T_\varepsilon$ since with Proposition~\ref{prop_atail}, there exists $\varepsilon > 0$ such that a.s., for all $t \in [T_\varepsilon]$, for all $k \in [K]$, 
$p_t^\star(k | X_t) \in [\varepsilon, 1]$ and for all $m \in \Mcal$ and all $\theta^m \in \Theta^{D_m}$, 
$p^m_{\theta^m,t}(k | X_t) \in [\varepsilon, 1]$.

Assumption 2 of \citep{aubert:generalpmle} is satisfied since with Proposition~\ref{prop_atail},
there exists a positive constants $L_ \varepsilon$ such that a.s., for all $t \in [T_\varepsilon]$,  for all $m \in \Mcal$ and all $\delta^m, \theta^m \in \Theta^{D_m}$,
\begin{equation*}
\sup_{k \in [K]} \left| \log \left(\frac{p^m_{\delta^m,t}(k | X_t)}{p^m_{\theta^m,t}(k | X_t)} \right)\right|
    \leq L_\varepsilon \sup_{C \in \Pcal_m} \| \delta_C- \theta_C\|_{2}
\end{equation*}
and by Assumption, for all $\theta^m, \delta^m \in \Theta^{D_m}$
\begin{equation*}
    \sup_{C \in \Pcal_m}\|\delta_C- \theta_C\|_{2} \leq \sqrt{d}(R-r).
\end{equation*}
Note in particular that the Lipschitz constant in Proposition~\ref{prop_atail} does not depend on $m$.

Assumptions 3 and 4 in \citep{aubert:generalpmle} are always satisfied because the set of actions $[K]$ is finite.

Setting $A_\varepsilon = L_\varepsilon \sqrt{d}(R-r) + 2\log(\varepsilon^{-1})$, Corollary 2 in \citep{aubert:generalpmle} simply reads as follows. There exist positive numerical constants $C$ and $C'$ such that the following holds. Assume that
  \begin{equation*}
\Sigma_\varepsilon = \log(A_\varepsilon)\sum_{m \in \Mcal} e^{-D_m} < +\infty .
\end{equation*}
Let $\kappa \in (0,1]$.
If for all $m \in \Mcal$,
\begin{equation*}
    \pen(m) \geq \frac{C}{\kappa} A_\varepsilon^2 \log(\varepsilon^{-1}) \log(T_\varepsilon A_\varepsilon)^2 \frac{D_m}{T_\varepsilon},
\end{equation*}
then,
    \begin{multline*}
        \frac{1 - \kappa}{T_\varepsilon} 
\sum_{t=1}^{T_\varepsilon} \Ebb\left[\DKL\left(p_t^\star(A_t | X_t), p_{\hat{\theta}^{\hat{m}},t}^{\hat{m}}(A_t | X_t)\right)\right]
    \\
    \leq \Ebb \left[\inf_{m \in \Mcal} \left( (1 + \kappa) \inf_{\theta \in \Theta^m} \frac{1}{T_\varepsilon} \sum_{t=1}^{T_\varepsilon}\DKL\left(p_t^\star(A_t | X_t), p_{\theta^m,t}^m(A_t | X_t)\right)
        + 2 \pen(m)
    \right)\right] \\
        + \frac{36 C'}{\kappa} 
            A_\varepsilon  \Sigma_\varepsilon \log(\varepsilon^{-1})
            \frac{\log(T_\varepsilon)}{T_\varepsilon}.
    \end{multline*}
To conclude take for instance $\kappa = 1/2$, hence the result.
\end{proof}

\subsection{Proof of Section~\ref{exampleExp3}}

\begin{proof}[Proof of Proposition~\ref{hypotheseexistenceupsilonExp3}]
Let $\theta_C \in \Theta_C$. Write $\theta_{C,n} =(\eta_{C,n},\gamma_{C,n}) = \theta_C/\sqrt{n} \in \Theta$. To ease the notations in the proof, we remove the $C$ from the notations. $\theta_C$ becomes $\theta$ and $\theta_{C,n}$ becomes $\theta_n$. In the same way, $\theta_n=(\eta_n,\gamma_n)$ now.

Let $t \in F_n(C)$. Then,
\begin{align*}
\pi_{C,T_t^C+1}^{\theta}(k)
    &= \frac{\pi_{C,T_t^C}^{\theta}(k) e^{-\eta_n g_{k,t} / (\gamma_n + \pi_{C,T_t^C}^{\theta}(k))}\one_{A_t = k}}{(1-\pi_{C,T_t^C}^{\theta}(k))+\pi_{C,T_t^C}^{\theta}(k) e^{-\eta_n g_{k,t} / (\gamma_n + \pi_{C,T_t^C}^{\theta}(k))}}\\
    & \qquad \qquad \qquad+ \sum_{\underset{j \neq k}{j=1}}^K \frac{\pi_{C,T_t^C}^{\theta}(k) \one_{A_t=j}}{(1-\pi_{C,T_t^C}^{\theta}(j)) + \pi_{C,T_t^C}^{\theta}(j)e^{-\eta_n g_{j,t} / (\gamma_n + \pi_{C,T_t^C}^{\theta}(j))}}.
\end{align*}
For any $q \in [0,1]$, since $g_{k,t} \in [0,1]$, $1-q + q e^{-\eta g_{k,t}/(q+\gamma)} \leq 1$. Therefore, 
\begin{equation*}
\pi_{C,T_t^C+1}^{\theta}(k)
    \geq \pi_{C,T_t^C}^{\theta}(k) e^{-\eta_n g_{k,t} / (\gamma_n + \pi_{C,T_t^C}^{\theta}(k))} \one_{A_t = k}
    + \sum_{\underset{j \neq k}{j=1}}^K \pi_{C,T_t^C}^{\theta}(k) \one_{A_t=j}.
\end{equation*}
Since $e^{-\eta_n g_{k,t} / (\gamma_n + \pi_{C,T_t^C}^{\theta}(k))} \leq 1$, 
\begin{align*}
&\pi_{C,T_t^C+1}^{\theta}(k)
\\
    &\geq \pi_{C,T_t^C}^{\theta}(k) e^{-\eta_n g_{k,t} / (\gamma_n + \pi_{C,T_t^C}^{\theta}(k))} \one_{A_t = k}
    + \sum_{\underset{j \neq k}{j=1}}^K \pi_{C,T_t^C}^{\theta}(k) e^{-\eta_n g_{k,t} / (\gamma_n + \pi_{C,T_t^C}^{\theta}(k))} \one_{A_t=j}
    \\
    &= \pi_{C,T_t^C}^{\theta}(k) e^{-\eta_n g_{k,t} / (\gamma_n + \pi_{C,T_t^C}^{\theta}(k))} \geq \pi_{C,T_t^C}^{\theta}(k) e^{-\eta_n / (\gamma_n + \pi_{C,T_t^C}^{\theta}(k))}
\end{align*}
since $g_{k,t} \in [0,1]$. Then,
\begin{align*}
\pi_{C,T_t^C+1}^{\theta}(k) \geq \pi_{C,T_t^C}^{\theta}(k) \left(1-\frac{\eta_n g_{k,t}}{\gamma_n + \pi_{C,T_t^C}^{\theta}(k)}\right) \geq \pi_{C,T_t^C}^{\theta}(k)- \eta_n .
\end{align*}
Summing for all $s \in F_t(C)$, since $\pi_{k,1}^{\theta}= \frac{1}{K}$,
\begin{equation*}
\pi_{C,T_t^C}^{\theta}(k) \geq \frac{1}{K} - \eta_n T_t^C.
\end{equation*}
Note that $T_t^C \leq t \leq T_\varepsilon$. Since, $\displaystyle T_\varepsilon \leq \left \lfloor \left(\frac{1}{K}- \varepsilon\right) \frac{\sqrt{n}}{R}\right \rfloor$, for all $t \leq T_\varepsilon$ and $1 \leq k \leq K$,
\begin{equation*}
\varepsilon
    \leq \frac{1}{K} - \frac{R}{\sqrt{n}} T_\varepsilon
    \leq \frac{1}{K}- \eta_n T_\varepsilon
    \leq \frac{1}{K}- \eta_n t \leq  \pi_{C,T_t^C}^{\theta}(k).
\end{equation*}
For the second part of the proof, let $\theta=(\eta,\gamma), \theta' = (\eta',\gamma') \in \Theta_C$. For $t \geq 2$  Let $h_{j,t}^{\theta} = \eta_n \sum_{s \in F_t(C)} \hat{g}_{j,s}^{\theta}$. Then $\pi_{C,T_t^C}^{\theta} = \softmax(h_{\cdot,t}^{\theta}).$ The function $\softmax$ is $1$-Lipschitz with respect to the $\|\cdot\|_2$-norm in $\Rbb^K$ (see \citep{gao2017properties} for a proof). Therefore,
\begin{equation*}
    \|\pi_{C,T_t^C}^{\theta}-\pi_{C,T_t^C}^{\theta'}\|_{2} \leq \|h_{\cdot,t}^{\theta}-h_{\cdot,t}^{\theta'}\|_{2}.
\end{equation*}
Since $g_{j,s} \in [0,1]$, by the triangle inequality
\begin{equation*}
    \|\pi_{C,T_t^C}^{\theta}-\pi_{C,T_t^C}^{\theta'}\|_{2} \leq \sum_{s \in F_t(C)}\left\| \left(\frac{\eta_n}{\gamma_n + \pi_{C,T_s^C}^{\theta}(.)}-\frac{\eta_n'}{\gamma_n' + \pi_{C,T_s^C}^{\theta'}(.)}\right) \one_{A_s = \cdot} \right\|_{2}.
\end{equation*}
Again, using the triangle inequality,
\begin{align}\label{pregronwall0}
    \nonumber \|\pi_{C,T_t^C}^{\theta}-\pi_{C,T_t^C}^{\theta'}\|_{2} &\leq \sum_{s\in F_t(C)}\left\| \left(\frac{\eta_n}{\gamma_n + \pi_{C,T_s^C}^{\theta}}-\frac{\eta_n'}{\gamma_n' + \pi_{C,T_s^C}^{\theta}}\right) \one_{A_s = \cdot} \right\|_{2}\\
    & \qquad \qquad +\sum_{s\in F_t(C)}\left\| \left(\frac{\eta_n'}{\gamma_n' + \pi_{C,T_s^C}^{\theta}}-\frac{\eta_n'}{\gamma_n' + \pi_{C,T_s^C}^{\theta'}}\right) \one_{A_s = \cdot} \right\|_{2}.
\end{align}
For $1 \geq q \geq \varepsilon$, let $ \displaystyle f : (x_1,x_2) \in [0,R_{n}]\times[0,R_{n}] \mapsto \frac{x_1}{x_2 + q}$ where $R_{n} = \frac{R}{\sqrt{n}} $. The function $f$ is continuously differentiable, and
\begin{align*}
    \nabla f = \frac{1}{(x_2+q)^2} & \begin{pmatrix}
           x_{2}+q \\
           -x_{1}
    \end{pmatrix}.
  \end{align*}
The $\ell^2$-norm of the gradient can be upper bounded by
\begin{equation*}
    \|\nabla f\|_2 \leq \frac{1}{\varepsilon^2} \sqrt{R_n^2 + \varepsilon^2}=:c_\varepsilon
\end{equation*}
By the mean value theorem, for all $k \in [K]$
\begin{equation*}
    \left|\frac{\eta_n}{\gamma_n + \pi_{C,T_s^C}^{\theta}(k)}-\frac{\eta_n'}{\gamma_n' + \pi_{C,T_s^C}^{\theta}(k)}\right| \leq c_\varepsilon  \|\theta_n- \theta_n'\|_2.
\end{equation*}
As a result,
\begin{align*}
    \left\| \left(\frac{\eta_n}{\gamma_n + \pi_{C,T_s^C}^{\theta}}-\frac{\eta_n'}{\gamma_n' + \pi_{C,T_s^C}^{\theta}}\right) \one_{A_s = \cdot} \right\|_{2}^2 &= \sum_{k=1}^K \left(\frac{\eta_n}{\gamma_n + \pi_{C,T_s^C}^{\theta}(k)}-\frac{\eta_n'}{\gamma_n' + \pi_{C,T_s^C}^{\theta}(k)}\right)^2 \one_{A_s = k}\\
    & \leq c_\varepsilon^2  \|\theta_n - \theta_n'\|_2^2 \sum_{k=1}^K \one_{A_s = k} \\
    & = c_\varepsilon^2  \|\theta_n - \theta_n'\|_2^2
\end{align*}
Therefore,
\begin{align}\label{pregronwall1}
   \nonumber & \sum_{s\in F_t(C)}\left\| \left(\frac{\eta_n}{\gamma_n + \pi_{C,T_s^C}^{\theta}}-\frac{\eta_n'}{\gamma_n' + \pi_{C,T_s^C}^{\theta}}\right) \one_{A_s = \cdot} \right\|_{2} \\
   & \qquad \qquad \qquad \leq \sum_{s\in F_t(C)} c_{\varepsilon} \|\theta_n - \theta_n'\|_2 = T_t^C c_{\varepsilon} \|\theta_n - \theta_n'\|_2\leq T_\varepsilon c_{\varepsilon} \|\theta_n - \theta_n'\|_2.
\end{align}
For $(\eta,\gamma) \in [0,R_{n}]\times[0,R_{n}]$, let $\displaystyle g : q \in [\varepsilon,1] \mapsto \frac{\eta}{\gamma + q}$. The function $g$ is continuously differentiable, and
\begin{equation*}
    0 \leq f'(q) = \frac{\eta}{(\gamma+q)^2} \leq \frac{R_{n}}{\varepsilon^2}.
\end{equation*}
By the mean value theorem, for all $k \in [K]$,
\begin{equation*}
    \left|\frac{\eta_n'}{\gamma_n' + \pi_{C,T_s^C}^{\theta}(k)}-\frac{\eta_n'}{\gamma_n' + \pi_{C,T_s^C}^{\theta'}}(k)\right| \leq \frac{R_{n}}{\varepsilon^2} \left|\pi_{C,T_s^C}^{\theta}(k)-\pi_{C,T_s^C}^{\theta'}(k)\right|.
\end{equation*}
Therefore,
\begin{align*}
    \left\| \left(\frac{\eta_n'}{\gamma_n' + \pi_{C,T_s^C}^{\theta}}-\frac{\eta_n'}{\gamma_n' + \pi_{C,T_s^C}^{\theta'}}\right) \one_{A_s = \cdot} \right\|_{2}^2 &= \sum_{k=1}^K  \left(\frac{\eta_n'}{\gamma_n' + \pi_{C,T_s^C}^{\theta}(k)}-\frac{\eta_n'}{\gamma_n' + \pi_{C,T_s^C}^{\theta'}}\right)^2 \one_{A_s = k} \\
    & \leq \frac{R_{n}^2}{\varepsilon^4} \sum_{k=1}^K \left|\pi_{C,T_s^C}^{\theta}(k)-\pi_{C,T_s^C}^{\theta'}(k)\right|^2 \one_{A_s = k}\\
    & \leq \frac{R_{n}^2}{\varepsilon^4}\left(\left\|\pi_{C,T_s^C}^{\theta}-\pi_{C,T_s^C}^{\theta'}\right\|_{2}\right)^2.
\end{align*}
Thus,
\begin{equation}\label{pregronwall2}
     \sum_{s\in F_t(C)}\left\| \left(\frac{\eta_n'}{\gamma_n' + \pi_{C,T_s^C}^{\theta}}-\frac{\eta_n'}{\gamma_n' + \pi_{C,T_s^C}^{\theta'}}\right) \one_{A_s = \cdot} \right\|_{2} \leq \frac{R_{n}}{\varepsilon^2} \sum_{s\in F_t(C)}\left\|\pi_{C,T_s^C}^{\theta}-\pi_{C,T_s^C}^{\theta'}\right\|_{2}
\end{equation}
Plugging Equations~\eqref{pregronwall1} and~\eqref{pregronwall2} in Equation~\eqref{pregronwall0}
\begin{equation*}
     \|\pi_{C,T_t^C}^{\theta}-\pi_{C,T_t^C}^{\theta'}\|_{2} \leq T_\varepsilon c_\varepsilon \|\theta_n - \theta_n'\|_2 + \frac{R_{n}}{\varepsilon^2}\sum_{s \in F_t(C)}\left\|\pi_{C,T_s^C}^{\theta}-\pi_{C,T_s^C}^{\theta'}\right\|_{2},
\end{equation*}
Using the discrete Gronwall Lemma \citep{CLARK1987279} leads to, for all $t \leq T_\varepsilon$,
\begin{align*}
     \|\pi_{C,T_t^C}^{\theta}-\pi_{C,T_t^C}^{\theta'}\|_{2} & \leq T_\varepsilon c_\varepsilon \|\theta_n - \theta_n'\|_2 \prod_{s \in F_t(C)}\left(1+\frac{R_{n}}{\varepsilon^2}   \right) \\
     &\leq T_\varepsilon c_\varepsilon \|\theta_n - \theta_n'\|_2 \exp\left(\frac{R_{n} T_\varepsilon}{\varepsilon^2}\right).
\end{align*}
But, since $\frac{1}{K}-\varepsilon \leq 1$,
\begin{equation*}
    T_\varepsilon \leq \left(\frac{1}{K}-\varepsilon\right) \frac{\sqrt{n}}{R} \leq \frac{\sqrt{n}}{R}.
\end{equation*}
Therefore, $R_{n}T_\varepsilon \leq 1$ and for $t \leq T_\varepsilon$,
\begin{align*}
     \|\pi_{C,T_t^C}^{\theta}-\pi_{C,T_t^C}^{\theta'}\|_{2} & \leq \frac{c_\varepsilon}{R}e^{1/\varepsilon^2}\|\theta - \theta'\|_2.
\end{align*}
To conclude note that $\log$ is $1/\varepsilon$-Lipschitz on $[\varepsilon,1]$ and that 
\begin{equation*}
    \sup_{k \in [K]} \left|\log\left(\frac{\pi_{C,T_t^C}^{\delta_C}(k)}{\pi_{C,T_t^C}^{\theta_C}(k)}\right)\right| \leq \frac{1}{\varepsilon}  \|\pi_{C,T_t^C}^{\theta}-\pi_{C,T_t^C}^{\theta'}\|_{2}.
\end{equation*}
\end{proof}

\subsection{Proof of Section~\ref{examplegradientbandits}}

\begin{proof}[Proof of Proposition~\ref{existencehypotheseupsilongradientbrandits}] We take the same notations as the previous section. The updated probability can be written
\begin{align*}
\pi_{C,T_t^C+1}^{\theta}(k)
    = \frac{\pi_{C,T_t^C}^{\theta}(k) e^{\theta_n \left(\one_{A_t = k}-\pi_{C,T_t^C}^{\theta}(k)\right) g_{A_t,t}}}{\pi_{C,T_t^C}^{\theta}(A_t)e^{\theta_n (1-\pi_{C,T_t^C}^{\theta}(A_t))g_{A_t,t}}+ \sum_{j \neq A_t} \pi_{C,T_t^C}^{\theta}(j) e^{- \theta_n \pi_{C,T_t^C}^{\theta}(j) g_{A_t,t}}}.
\end{align*}
Therefore,
\begin{align*}
\pi_{C,T_t^C+1}^{\theta}(k)
    & \geq \frac{\pi_{C,T_t^C}^{\theta}(k) e^{\theta_n \left(\one_{A_t = k}-\pi_{C,T_t^C}^{\theta}(k)\right) g_{A_t,t}}}{\pi_{C,T_t^C}^{\theta}(A_t)e^{\theta_n (1-\pi_{C,T_t^C}^{\theta}(A_t))g_{A_t,t}}+ \sum_{j \neq A_t} \pi_{C,T_t^C}^{\theta}(j)}\\
    & \geq \frac{\pi_{C,T_t^C}^{\theta}(k) e^{-\theta_n}}{\pi_{C,T_t^C}^{\theta}(A_t)e^{\theta_n}+ 1-\pi_{C,T_t^C}^{\theta}(A_t)} \\
    & \geq  \pi_{C,T_t^C}^{\theta}(k) e^{-2\theta_n}
\end{align*}
where
\begin{itemize}
    \item the first inequality holds because $e^{- \theta_n \pi_{C,T_t^C}^{\theta}(j) g_{A_t,t}} \leq 1$,
    \item the second inequality holds because $g_{j,t} \in (0,1)$, for $j\in [K]$,
    \item the last inequality holds because $\pi_{C,T_t^C}^{\theta}(A_t)e^{\theta_n}+ 1-\pi_{C,T_t^C}^{\theta}(A_t) \leq e^{\theta_n}$.
\end{itemize}
Thus, for all $t \leq T_\varepsilon$, 
\begin{align*}
   \pi_{C,T_t^C}^{\theta}(k) \geq \frac{1}{K}e^{-2\theta_n T_t^C}. 
\end{align*}
Since $T_t^C \leq t$ and since by definition, $T_\varepsilon \leq \log\left(\sqrt{\frac{1}{K\varepsilon}}\right)\frac{\sqrt{n}}{R} $, it holds that for all $t \leq T_\varepsilon$,
\begin{align*}
    \pi_{C,T_t^C}^{\theta}(k) \geq \frac{1}{K}e^{-2\theta_n t} \geq \frac{1}{K}e^{-2 R_n t} \geq \frac{1}{K}e^{-2 \frac{R}{\sqrt{n}} \log\left(\sqrt{\frac{1}{K\varepsilon}}\right)\frac{\sqrt{n}}{R}} \geq \frac{1}{K} e^{-\log\left(\frac{1}{K\varepsilon}\right)} \geq \varepsilon.
\end{align*}
For the second part of the proof, for $t \geq 2$ and $j \in [K]$, let $h_{j,T_t^C}^{\theta} = \theta_n \sum_{s\in F_t(C)} \hat{g}_{j,s}^{\theta}$. Then $\pi_{C,T_t^C}^{\theta} = \softmax(h_{\cdot,t}^{\theta}).$ The function $\softmax$ is $1$-Lipschitz with respect to the $\|\cdot\|_2$-norm in $\Rbb^K$ (see \citep{gao2017properties} for a proof). Therefore,
\begin{equation*}
    \|\pi_{C,T_t^C}^{\delta}-\pi_{C,T_t^C}^{\theta}\|_{2} \leq \|h_{\cdot,t}^{\delta}-h_{\cdot,t}^{\theta}\|_{2}.
\end{equation*}
Then,
\begin{align*}
   \|h_{\cdot,t}^{\delta}-h_{\cdot,t}^{\theta}\|_{2} &\leq |\delta_n -\theta_n| \sum_{s\in F_t(C)} \|\hat{g}_{j,s}^{\delta} \|_2 + \theta_n \sum_{s\in F_t(C)} g_{A_s,s}\|\pi_{C,T_s^C}^{\delta}-\pi_{C,T_s^C}^{\theta} \|_2\\
    & \leq \sqrt{2}T_t^C|\delta_n - \theta_n|+ \theta_n \sum_{s\in F_t(C)}\|\pi_{C,T_s^C}^{\delta}-\pi_{C,T_s^C}^{\theta}\|_{2}\\
    & \leq \sqrt{2}T_\varepsilon|\delta_n - \theta_n|+ \theta_n \sum_{s\in F_t(C)}\|\pi_{C,T_s^C}^{\delta}-\pi_{C,T_s^C}^{\theta}\|_{2}.
\end{align*}
where 
\begin{itemize}
    \item the first inequality holds because of the triangle inequality,
    \item the second inequality holds because for all $j \in [K]$, $g_{j,s} \in [0,1]$ and
\begin{equation*}
    \|\hat{g}_{j,s}^{\delta} \|_2^2 = (1-\pi_{C,T_s^C}^{\delta}(A_s))^2 + \sum_{j \neq A_s} (\pi_{C,T_s^C}^{\delta}(j))^2 \leq 2,
\end{equation*}
    \item the last inequality holds because $T_t^C \leq T_\varepsilon$.
\end{itemize}
By the discrete Gronwall Lemma \citep{CLARK1987279}, for all $t \leq T_\varepsilon$
\begin{align*}
    \|\pi_{C,T_t^C}^{\delta}-\pi_{C,T_t^C}^{\theta}\|_{2} & \leq \sqrt{2}|\delta_n - \theta_n|T_\varepsilon \prod_{s\in F_t(C)}\left(1 + \theta_n\right) \leq \sqrt{2}|\delta_n - \theta_n|T_\varepsilon e^{\theta_n T_\varepsilon}.
\end{align*}
Since $T_\varepsilon \leq \left \lfloor \log\left(\sqrt{\frac{1}{K\varepsilon}}\right)\frac{\sqrt{n}}{R} \right \rfloor$, $\theta_n T_\varepsilon \leq R_n T_\varepsilon \leq \log\left(\sqrt{\frac{1}{K\varepsilon}}\right)$, therefore,
\begin{align*}
    \|\pi_{C,T_t^C}^{\delta}-\pi_{C,T_t^C}^{\theta}\|_{2} \leq \frac{\sqrt{2}}{R}\frac{\log\left(\sqrt{\frac{1}{K\varepsilon}}\right)}{\sqrt{K \varepsilon}}|\delta - \theta|.
\end{align*}
Finally, $\log$ is $\frac{1}{\varepsilon}$-Lipschitz on $[\varepsilon,1]$. Thus, for all $k \in [K]$, $t \leq T_\varepsilon$,
\begin{equation*}
      \left|\log\left( \frac{\pi_{C,T_t^C}^{\delta}(k)}{\pi_{C,T_t^C}^{\theta}(k)}\right) \right| \leq \frac{1}{\varepsilon} |\pi_{C,T_t^C}^{\delta}(k)-\pi_{C,T_t^C}^{\theta}(k)| \leq \frac{\sqrt{2}}{R \varepsilon}\frac{\log\left(\sqrt{\frac{1}{K\varepsilon}}\right)}{\sqrt{K \varepsilon}}|\delta - \theta|.
\end{equation*}
\end{proof}

\subsection{Proof of Section~\ref{section_metalearning}}

Let us recall that $|F| = \max_{m \in \Mcal} |E_m|$. For this Section, we drop the dependence $m$ of the model and simply write $E$ and $\theta$ generic set of policies and parameter in $[r,R]$. 

\begin{proof}[Proof of Proposition~\ref{prop_expert_minorant}]\label{proof_prop_expert_minorant}
For any $\theta \in [r,R]$, write $\theta_n = \theta/\sqrt{n}$. Assume that Assumption~\ref{hyp_expert_model} holds. Let's write $q_{E,t+1}^\theta$ as
\begin{equation*}
    q_{E,t+1}^{\theta}(j) = \frac{q_{E,t}^{\theta}(j) e^{-\theta_n \hat{y}_{j,t}^\theta}}{\sum_{i \in E} q_{E,t}^{\theta}(i) e^{-\theta_n \hat{y}_{i,t}^\theta}}
\end{equation*}
Since $q_{E,t}^\theta$ is a probability distribution over the experts,
\begin{equation*}
    \sum_{i \in E} q_{E,t}^{\theta}(i) e^{-\theta_n \hat{y}_{i,t}^\theta} \leq \sum_{i \in E} q_{E,t}^{\theta}(i) = 1.
\end{equation*}
Therefore, $ q_{E,t+1}^{\theta}(j) \geq q_{E,t}^{\theta}(j) e^{-\theta_n \hat{y}_{j,t}^\theta}$. By definition, 
\begin{equation*}
     \hat{y}_{j,t}^\theta = \sum_{k = 1}^K \xi_{j,t}(k) \frac{g_{k,t}}{\pi_{E,t}^{\theta}(k)} \one_{A_t = k} = \xi_{j,t}(A_t) \frac{g_{A_t,t}}{\pi_{E,t}^{\theta}(A_t)}.
\end{equation*}
Using that $e^{-x} \geq 1-x$ for any $x \geq 0$, leads to
\begin{equation*}
     q_{E,t+1}^{\theta}(j) \geq q_{E,t}^{\theta}(j) \left(1-\theta_n \xi_{j,t}(A_t) \frac{g_{A_t,t}}{\pi_{E,t}^{\theta}(A_t)} \right).
\end{equation*}
Since $g_{A_t,t} \in [0,1]$ and $q_{E,t}^{\theta}(j) \xi_{j,t}(A_t) \leq  \pi_{E,t}^{\theta}(A_t)$,
\begin{equation*}
    q_{E,t+1}^{\theta}(j) \geq q_{E,t}^{\theta}(j) - \theta_n.
\end{equation*}
Summing for all $s$ from $1$ to $t$,
\begin{equation*}
    q_{E,t}^{\theta}(j) \geq \frac{1}{|E|} - \theta_n t.
\end{equation*}
Since
\begin{equation*}
    T_\varepsilon = \left \lfloor \left( \frac{1}{|F|}-\frac{\varepsilon}{\rho}\right) \frac{\sqrt{n}}{R} \right \rfloor \wedge n \qquad \text{and} \qquad |F| \geq |E|,
\end{equation*}
it holds that,
\begin{equation*}
    T_\varepsilon \leq \left \lfloor \left( \frac{1}{|E|}-\frac{\varepsilon}{\rho}\right) \frac{\sqrt{n}}{R} \right \rfloor \wedge n.
\end{equation*}
Therefore, for all $t \leq T_\varepsilon$,
\begin{equation*}
    q_{E,t}^{\theta}(j) \geq \frac{1}{|E|}-\frac{R}{\sqrt{n}} t \geq \frac{1}{|E|}-\frac{R}{\sqrt{n}} \left( \frac{1}{|E|}-\frac{\varepsilon}{\rho}\right)\frac{\sqrt{n}}{R} = \frac{\varepsilon}{\rho}.
\end{equation*}
Finally, for all $t \leq T_\varepsilon$, for all $k \in [K]$,
\begin{equation*}
    \pi_{E,t}^{\theta}(k) = \sum_{j \in E} q_{E,t}^{\theta}(j) \xi_{j,t}(k) \geq \frac{\varepsilon}{\rho} \sum_{j \in E} \xi_{j,t}(k) = \varepsilon.
\end{equation*}
For the second part of the proof, let $\eta, \delta \in [r,R]$, and write $\eta_n = \eta/\sqrt{n}$ and likewise for $\delta_n$. For $t \geq 2$, let $g_{j,t}^\eta = \eta_n \sum_{s=1}^{t-1} \hat{y}_{j,s}^{\eta}$. Then, $q_{E,t}^\eta = \softmax(g_t^\eta)$ where $g_t^\eta = (g_{j,t}^\eta)_{j \in E}$. Since the function softmax is $1$-Lipschitz with respect to the $\|\cdot\|_2$-norm in $\Rbb^{|E|}$,\begin{equation*}
    \|q_{E,t}^\eta-q_{E,t}^\delta\|_2 \leq \|g_t^\eta-g_t^\delta\|_2.
\end{equation*}
Therefore, by the triangle inequality,
\begin{align}\label{eq_gronwallexperts0}
   \nonumber \|q_{E,t}^\eta-q_{E,t}^\delta\|_2 &\leq \sum_{s=1}^{t-1} \|\eta_n \hat{y}_{j,s}^{\eta} -
\delta_n \hat{y}_{j,s}^{\delta} \|_2\\
&\leq \sum_{s=1}^{t-1} \left(|\eta_n -\delta_n| \| \hat{y}_{j,s}^{\eta}\|_2 + \delta_n \|\hat{y}_{j,s}^{\eta}-\hat{y}_{j,s}^{\delta} \|_2\right).
\end{align}
Since $\xi_{j,t}$ is a probability distribution,
\begin{multline*}
   \| \hat{y}_{j,t}^\eta\|_2^2 =  \sum_{j \in E}(\hat{y}_{j,t}^\eta)^2 = \sum_{j \in E} \left(\sum_{k = 1}^K \xi_{j,t}(k) \frac{g_{k,t}}{\pi_{E,t}^{\eta}(k)} \one_{A_t = k}\right)^2  \\
   = \sum_{j \in E} \left(\xi_{j,t}(A_t) \frac{g_{A_t,t}}{\pi_{E,t}^{\eta}(A_t)}\right)^2 \leq |E| \left(\frac{g_{A_t,t}}{\pi_{E,t}^{\eta}(A_t)}\right)^2.
\end{multline*}
Since $g_{A_t,t}^\eta \in [0,1]$ and $\pi_{E,t}^{\eta}(A_t) \geq \varepsilon$ for all $t \leq T_\varepsilon$,
\begin{equation}\label{eq_gronwallexperts1}
     \| \hat{y}_{j,t}^\eta\|_2 \leq \frac{\sqrt{|E|}}{\varepsilon}.
\end{equation}
Similarly,
\begin{align*}
  \|\hat{y}_{j,s}^{\eta}-\hat{y}_{j,s}^{\delta} \|_2^2 &= \sum_{j \in E} \left(\sum_{k = 1}^K \xi_{j,t}(k) g_{k,t} \left(\frac{1}{\pi_{E,t}^{\eta}(k)}-\frac{1}{\pi_{E,t}^{\delta}(k)}\right) \one_{A_t = k}\right)^2\\
  &\leq \sum_{j \in E} \left(\sum_{k = 1}^K \xi_{j,t}(k) \left(\frac{1}{\pi_{E,t}^{\eta}(k)}-\frac{1}{\pi_{E,t}^{\delta}(k)}\right) \one_{A_t = k}\right)^2.
\end{align*}
Thus, for all $t \leq T_\varepsilon$, 
\begin{equation*}
    \|\hat{y}_{j,t}^{\eta}-\hat{y}_{j,t}^{\delta} \|_2^2 \leq \frac{1}{\varepsilon^4} \sum_{j \in E} \left(\sum_{k = 1}^K \xi_{j,t}(k) \left(\pi_{E,t}^{\eta}(k)-\pi_{E,t}^{\delta}(k)\right) \one_{A_t = k}\right)^2.
\end{equation*}
Since $\xi_{j,t}(k)\leq 1$, 
\begin{equation}\label{eq_gronwallexperts2}
    \|\hat{y}_{j,t}^{\eta}-\hat{y}_{j,t}^{\delta} \|_2 \leq \frac{\sqrt{|E|}}{\varepsilon^2} \|\pi_{E,t}^\eta-\pi_{E,t}^\delta\|_2.
\end{equation}
Injecting~\eqref{eq_gronwallexperts1} and~\eqref{eq_gronwallexperts2} in~\eqref{eq_gronwallexperts0} leads to
\begin{equation*}
    \|q_{E,t}^\eta-q_{E,t}^\delta\|_2 \leq \frac{\sqrt{|E|}}{\varepsilon}|\eta_n- \delta_n|(t-1) + \delta_n \frac{\sqrt{|E|}}{\varepsilon^2} \sum_{s=1}^{t-1} \|\pi_{E,s}^\eta - \pi_{E,s}^\delta\|_2.
\end{equation*}
On the other hand,
\begin{align*}
    \|\pi_{E,t}^\eta - \pi_{E,t}^\delta\|_2^2= \sum_{k=1}^K (\pi_{k,t}^\eta - \pi_{k,t}^\delta)^2 &= \sum_{k=1}^K \left( \sum_{j \in E} \xi_{j,t}(k)(q_{E,t}^\eta(j) - q_{E,t}^\delta(j))\right)^2\\
    &\leq \sum_{k=1}^K \sum_{j \in E} \xi_{j,t}(k)^2 \sum_{j \in E} (q_{E,t}^\eta(j) - q_{E,t}^\delta(j))^2\\
    &\leq |E| \|q_{E,t}^\eta - q_{E,t}^\delta\|_2^2\\
    &\leq |F| \|q_{E,t}^\eta - q_{E,t}^\delta\|_2^2
\end{align*}
where
\begin{itemize}
    \item the first inequality holds by the Cauchy-Schwarz inequality,
    \item the second inequality holds because $\xi_{j,t}$ is a probability distribution over the actions set $[K]$,
    \item the last inequality holds because $|E| \leq |F|$
\end{itemize}
Therefore,
\begin{equation*}
    \|\pi_{E,t}^\eta - \pi_{E,t}^\delta\|_2 \leq \frac{|F|}{\varepsilon^2} \left( \varepsilon |\eta_n - \delta_n|(t-1) + \delta_n \sum_{s=1}^{t-1}\|\pi_{E,s}^\eta - \pi_{E,s}^\delta \|_2\right).
\end{equation*}
Using the discrete Gronwall Lemma, for all $t \leq T_\varepsilon$,
\begin{equation*}
    \|\pi_{E,t}^\eta - \pi_{E,t}^\delta\|_2 \leq \frac{|F|}{\varepsilon} |\eta_n - \delta_n|T_\varepsilon \prod_{s=1}^{t-1} \left(1+ \frac{|F|}{\varepsilon^2}\delta_n\right) \leq \frac{|F|}{\varepsilon} |\eta_n - \delta_n|T_\varepsilon \exp \left(\frac{|F|}{\varepsilon^2}\delta_n T_\varepsilon \right) .
\end{equation*}
If Assumption~\ref{hyp_expert_model} is satisfied, then since  $\delta_n \leq R_n = \frac{R}{\sqrt{n}}$ and $T_\varepsilon \leq \left(\frac{1}{|F|}-\frac{\varepsilon}{\rho}\right)\frac{\sqrt{n}}{R}$,
    \begin{equation*}
        \|\pi_{E,t}^\eta - \pi_{E,t}^\delta\|_2 \leq \frac{|F|}{R \varepsilon}\left(\frac{1}{|F|}-\frac{\varepsilon}{\rho}\right) \exp\left(\frac{|F|}{\varepsilon^2} \left(\frac{1}{|F|}-\frac{\varepsilon}{\rho}\right)\right)|\eta-\delta|.
    \end{equation*}
To conclude note that $x \to \ln(x)$ is $1/\varepsilon$-Lipschitz on $[\varepsilon,+\infty)$.
\end{proof}

\end{document}